\documentclass[11pt]{article}
\usepackage[colorlinks,linkcolor=blue,citecolor=orange]{hyperref}
\usepackage[letterpaper]{geometry}
\usepackage{amsmath,amsthm,amsfonts,amssymb,booktabs}
\usepackage{enumerate,color,xcolor}
\usepackage{graphicx}
\usepackage{subfigure}
\usepackage{url}
\usepackage{todo}
\newcommand{\Id}{\mathsf I_d}
\numberwithin{equation}{section}
\theoremstyle{plain}
\newtheorem{theorem}{Theorem}

\newtheorem{corollary}[theorem]{Corollary}

\newtheorem{lemma}[theorem]{Lemma}

\newtheorem{proposition}[theorem]{Proposition}

\newtheorem{remark}[theorem]{Remark}
\newtheorem{assumption}[theorem]{Assumption}

\usepackage{comment}

\newcommand{\E}{\mathbb{E}}
\newcommand{\ones}{\textbf{1}}

\newcommand{\beq}{\begin{eqnarray}}
\newcommand{\eeq}{\end{eqnarray}}
\newcommand{\beqs}{\begin{eqnarray*}}
\newcommand{\eeqs}{\end{eqnarray*}}
\newcommand{\R}{\mathbb{R}}

\makeatletter
\def\munderbar#1{\underline{\sbox\tw@{$#1$}\dp\tw@\z@\box\tw@}}
\makeatother

\begin{document}
\begin{center}
  \Large \bf Underdamped Anchored Langevin Dynamics
\end{center}
\author{}
\begin{center}
 {Qi Feng}\,\footnote{Department of
        Mathematics, Florida State University, 1017 Academic Way, Tallahassee,
        FL-32306, United States of America; qfeng2@fsu.edu},
  {Hoang M. Nguyen\,\footnote{Department of Mathematics, Florida State University, 1017 Academic Way, Tallahassee, FL-32306, United States of America; ngminhhoang7@gmail.com}},
  {Lingjiong Zhu\,\footnote{Department of Mathematics, Florida State University, 1017 Academic Way, Tallahassee, FL-32306, United States of America; zhu@math.fsu.edu}}
\end{center}
\begin{center}
 \today
\end{center}


\begin{abstract}
Classical Langevin Monte Carlo algorithms, such as the unadjusted Langevin algorithm (ULA), are
based on discretization schemes of the overdamped Langevin dynamics and are widely used for sampling in machine learning because they scale to high dimensions and large datasets. However, they require differentiable log-densities, and are less efficient for sampling heavy-tailed distributions.
Although various approaches have been proposed
in the literature for sampling non-differentiable and heavy-tailed targets,
all of them are developed in the overdamped setting only.
We propose a novel momentum-based underdamped anchored Langevin dynamics (UALD),
and underdamped anchored Langevin Monte Carlo (UALMC) algorithm based
on a discretization scheme of UALD.
Our paper is the first to provide a kinetic
(momentum-based) Langevin algorithm for non-differentiable and heavy-tailed targets that is
free of asymptotic bias.
The method replaces the original potential with a smooth reference potential and modifies the underdamped Langevin dynamics via multiplicative scaling in the drift term.
We establish non-asymptotic guarantees in the 2-Wasserstein distance to the target distribution and show
that the resulting iteration complexity has better dependence on dimension and accuracy
level compared to overdamped-based anchored Langevin algorithms.
We provide numerical experiments to illustrate the theory and practical performance of our proposed approach.
\end{abstract}

\section{Introduction}

The problem of \textit{sampling} a target distribution
$\pi(x)\propto\exp(-U(x))$
for a function $U:\mathbb{R}^d \to \mathbb{R}$
arises in many applications, including Bayesian statistical inference \cite{gelman1995bayesian}, Bayesian formulations of inverse problems \cite{stuart2010inverse}, and Bayesian classification and regression tasks in machine learning \cite{andrieu2003introduction,teh2016consistency,DistMCMC19,GIWZ2024,DIGing,regime2025}.
\textit{Langevin algorithms} are popular Markov chain Monte Carlo (MCMC) methods that allow one to sample from a target distribution of interest, and the classical one is based on the discretization of
\textit{overdamped Langevin dynamics} \cite{Dalalyan,DM2016,DMP2016,CB2018,DM2017,DK2017,Raginsky}:
\begin{equation}\label{eq:overdamped-2}
dX_{t}=-\nabla U(X_{t})dt+\sqrt{2}dB_{t},
\end{equation}
where $U:\mathbb{R}^{d}\rightarrow\mathbb{R}$
and $B_{t}$ is a standard $d$-dimensional Brownian motion. Under some mild assumptions on $U(\cdot)$, the diffusion \eqref{eq:overdamped-2} admits a unique stationary distribution with the density $\pi(x)\propto\exp(-U(x))$,
also known as the \emph{Gibbs distribution} \cite{chiang1987diffusion,stroock-langevin-spectrum}. In computing practice, the diffusion \eqref{eq:overdamped-2} is simulated by considering its discretization,
and one of the most commonly used discretization schemes is the Euler--Maruyama discretization of
\eqref{eq:overdamped-2}, often known as the \textit{unadjusted Langevin algorithm} (ULA) in the literature; see e.g. \cite{DM2017}:
\begin{equation}\label{discrete:overdamped}
x_{k+1}=x_{k}-\eta\nabla U(x_{k})+\sqrt{2\eta}\xi_{k+1},
\end{equation}
where $\xi_{k}$ are independent and identically distributed (i.i.d.) $\mathcal{N}(0,\Id)$ Gaussian random vectors and $\eta>0$ is the stepsize.

When the target density $e^{-U(x)}$ is \emph{non-differentiable} or \emph{heavy-tailed}, classical Langevin methods face serious practical and theoretical limitations.
The lack of differentiability in $U$ causes the gradient-based updates to be ill-defined.
Such an issue is not just theoretical curiosity, but arises often in modern Bayesian inference problems, including Bayesian logistic regression with sparsity inducing priors \cite{vono2018sparse}, Bayesian deep learning problems with non-smooth activation functions \cite{gurbuzbalaban2024penalized,castillo2024posterior,fortuin2022priors}.
When the target distribution is heavy-tailed,
the exponential ergodicity might fail to hold and concentration behavior assumed in many convergence analyses and can lead to instability or divergence \cite{RT96,he2022heavy,he-transformed,he-mean-square-analysis}.

There have been many studies on sampling non-differentiable targets.
Zeroth-order Langevin algorithms approximate first-order information based on evaluations of the potential function $U$ using finite-differences \cite{roy2022stochastic,dwivedi2019log}. This can be beneficial in some settings when the gradients do not exist or when they are hard to compute. However, zeroth-order methods are typically slower than first-order methods, as their gradient estimates tend to be noisier.
There are also alternative approaches that rely on approximating the potential with a smooth, differentiable surrogate to enable gradient-based sampling such as \cite{Zhou14}. In addition, proximal MCMC methods approximate the non-smooth function \( U \) using its \textit{Moreau--Yoshida envelope} (MYE)~\cite{DMP2016,Goldman21,Mou19,Salim20,Lamperski20,Bernton18,Wibisono19,Pereyra2016,Eftekhari2023}, which provides a smooth surrogate. The MYE of a function $U$ yields a smooth approximation \( U^\lambda \) that can be used with gradient-based Langevin algorithms for sampling. However, computing the gradient of the MYE requires evaluating a proximal step with respect to \(U\), which is typically computationally expensive \cite{Goldman21}, with efficient computations available only in specific cases.
Moreover, using the gradient of \( U^{\lambda} \) introduces bias and in some cases, a very tight envelope (i.e., a small \( \lambda \)) may be needed to obtain an accurate approximation. This, however, necessitates small stepsizes, leading to slow mixing \cite[Example 4.1]{Goldman21}.
Other envelopes that approximate a non-smooth potential with a smooth one, such as the forward-backward envelope, can also be used \cite{Eftekhari2023}; but they still require computing proximal steps. \textit{Piecewise-deterministic Markov processes} (PDMPs) such as the bouncy particle and zig-zag samplers, which do not suffer from asymptotic bias, offer an alternative. They can be applied to target distributions that are differentiable almost everywhere  \cite{deligiannidis2019exponential,durmus2020geometric,bierkens2019ergodicity}, and in practice, may be preferable to MYE-based methods—particularly when the MYE is difficult to compute or not well-defined
\cite{Goldman21}. However, PDMPs may encounter computational difficulties in high dimensions; event-time simulations can be demanding, and their performance can be sensitive to the availability of tight event-rate bounds \cite{Goldman21}.
Finally, mirror-descent–type algorithms that employ the Bregman–Moreau envelope instead of the MYE to handle non-smoothness on Riemannian manifolds \cite{Lau2022}.
When $U$ is non-differentiable but convex, there are also subgradient-based approaches, which relaxes the differentiability requirement. Among subgradient-based approaches, \cite{Durmus19} proposed the \textit{stochastic subgradient Langevin dynamics} (SSGLD) and provided convergence guarantees by leveraging the fact that sampling can be viewed as optimization in the space of probability measures. Other subgradient-based approaches include \cite{habring2024subgradient}.
Another approach is using Gaussian smoothing \cite{nesterov2017random} where one would obtain a smooth approximation of $U$ by convolving it with a Gaussian kernel. \cite{Chatterji20} analyze Gaussian-smoothing-based Langevin dynamics for convex and non-smooth $U$. This approach replaces $\nabla U$ in the Langevin dynamics \eqref{eq:overdamped-2} with $\nabla U_0$ and then simulates the dynamics, which, as the discretization parameter vanishes, converges to an approximate target density $\pi_0(x) \propto e^{-U_0(x)}$ and thus suffers from bias.

Sampling from target distributions with heavy tails where $U(x)$ grows sublinearly presents unique challenges for standard sampling algorithms and the literature is quite limited. Several works introduced modification of classical ULA
by using Metropolis steps \cite{Kamatani2017,belomestny2021fourier,bell2024adaptive}, which can be expensive for some applications involving high dimensionality and large datasets \cite{welling2011bayesian}.
\cite{he-mean-square-analysis} develops zeroth- and first-order Langevin algorithms for heavy-tailed distributions whose potentials satisfy weighted Poincar\'e inequalities, including \(t\)-distributions. Their first-order method discretizes associated It\^o diffusions and extends to zeroth-order variants via Gaussian smoothing. However, when a non-smooth potential \(U\) is approximated by a smoothed \(U_0\), the limiting It\^o diffusions converge to \(\pi_0(x) \sim e^{-U_0(x)}\), leading to asymptotic bias. In related work, \cite{he-transformed} characterize the class of heavy-tailed densities for which polynomial-order complexity guarantees can be obtained when ULA is applied to suitably transformed versions of the target.

To the best of our knowledge, all of the approaches listed above have been
developed and analyzed in the \emph{overdamped} setting only. The
Moreau--Yosida/proximal algorithms
\cite{DMP2016,Pereyra2016,Goldman21,Mou19,Salim20,Lamperski20,Bernton18,Wibisono19,Eftekhari2023},
the subgradient methods \cite{Durmus19,habring2024subgradient}, the
Gaussian-smoothing approach \cite{Chatterji20} and the Bregman--Moreau
envelope method \cite{Lau2022} are all obtained by discretizing the
first-order diffusion \eqref{eq:overdamped-2}; in particular, a kinetic
counterpart of the Bregman--Moreau scheme is explicitly listed as a direction
for future work in \cite{Lau2022}. Conversely, the non-asymptotic literature
on underdamped (kinetic) Langevin algorithms
\cite{Cheng,cheng-nonconvex,dalalyan2018kinetic,GGZ,GGZ2,Ma2021} assumes
throughout that $\nabla U$ exists and is Lipschitz; the recent relaxations of
that hypothesis concern drifts that fail to be \emph{globally} Lipschitz or
that grow superlinearly, but are still differentiable, and therefore do not
cover non-differentiable targets. The only exception we are aware of is the
class of PDMP samplers \cite{deligiannidis2019exponential,durmus2020geometric,bierkens2019ergodicity},
which are non-reversible but are not discretizations of a kinetic diffusion.
Accordingly, our paper is the first to provide a kinetic
(momentum-based) Langevin algorithm for non-differentiable targets that is
free of asymptotic bias, together with non-asymptotic guarantees; we
highlight this as one of the contributions of the paper.

Recently, \cite{GNZZ2025} proposed and studied \textit{anchored Langevin dynamics} (ALD)
that modifies the Langevin dynamics itself,
such that it is \emph{anchored} with a new potential $U_{0}$ and the resulting dynamics converge to the original target \(\pi(x) \propto e^{-U(x)}\) and are free of such bias:
\begin{equation}\label{overdamped:SDE:modified}
	dX_{t}=-\nabla U_{0}(X_{t})e^{U(X_{t})-U_{0}(X_{t})}dt
	+\sqrt{2}e^{(U(X_{t})-U_{0}(X_{t}))/2}dB_{t}.
\end{equation}
Non-asymptotic convergence guarantees are obtained for \eqref{overdamped:SDE:modified}
and its Euler-Maruyama discretization in \cite{GNZZ2025}.
Very recently, \cite{RALD2026} proposed and studied \textit{reflected anchored Langevin dynamics} (RALD),
which is \eqref{overdamped:SDE:modified} with a reflected boundary.
This allows sampling on a constrained domain where the target density is not necessarily differentiable.
\cite{non-reversible-anchored-2026} introduced and studied \textit{non-reversible anchored Langevin dynamics} (NALD)
and \textit{non-reversible reflected anchored Langevin dynamics} (NRALD) by adding an anti-symmetric
matrix to the drift term of ALD and NALD respectively.
Through the lens of finite-time non-asymptotic convergence analysis, large deviations analysis and asymptotic variance reduction,
\cite{non-reversible-anchored-2026} showed that by breaking reversibility, NALD and NRALD can converge
to the target distributions faster than their reversible counterparts.

In the literature, the algorithms based on underdamped (a.k.a. second-order, inertial or kinetic) Langevin diffusion dynamics are popular alternatives to the ones based on overdamped Langevin dynamics; see e.g.
\cite{Cheng,cheng-nonconvex,dalalyan2018kinetic,GGZ,GGZ2,Ma2021}:
\begin{align}
&dV_{t}=-\gamma V_{t}dt- \nabla U(X_{t})dt+\sqrt{2\gamma }dB_{t}, \label{eq:VL}
\\
&dX_{t}=V_{t}dt, \label{eq:XL}
\end{align}
where $\gamma>0$ is the friction coefficient, $X_{t},V_{t} \in \mathbb{R}^d$ models the position and the momentum of a particle moving in a field of force (described by the gradient of $U$) plus a random (thermal) force described by the $d$-dimensional standard Brownian motion $B_{t}$. It is known that under some mild assumptions on $U$, the Markov process $(X_{t}, V_{t})_{t\geq 0}$ is ergodic and admits a unique stationary distribution $\pi$ with density
$\pi(x,v) \propto \exp\left(- \left(\frac{1}{2} \Vert v\Vert^2 +U(x)\right) \right)$.
Hence, the $x$-marginal distribution of the stationary distribution with the density $\pi(x,v)$ is exactly the invariant distribution of the overdamped Langevin diffusion.
It is known that the momentum-based underdamped Langevin dynamics can accelerate the convergence to the Gibbs distribution
and outperform the overdamped Langevin dynamics \cite{Eberle,Ma2021,GGZ}.
This motivates us to ask the question
whether there is an underdamped analog of the anchored Langevin dynamics \eqref{overdamped:SDE:modified}.
In this paper, we give a positive answer to this question by proposing the \textit{underdamped anchored Langevin dynamics} (UALD):
\begin{align}
&dV_{t}=-\gamma V_{t}dt- \nabla U_{0}(X_{t})e^{U(X_{t})-U_{0}(X_{t})}dt+\sqrt{2\gamma}dB_{t}, \label{eq:VL:modified:2}
\\
&dX_{t}=V_{t}e^{U(X_{t})-U_{0}(X_{t})}dt, \label{eq:XL:modified:2}
\end{align}
where $U_{0}:\mathbb{R}^{d}\rightarrow\mathbb{R}$ plays the role of reference potential. We will show
that UALD preserves the Gibbs distribution $\pi(x,v) \propto \exp\left(- \left(\frac{1}{2} \Vert v\Vert^2 +U(x)\right) \right)$ as an invariant distribution.
Next, we consider the discretization of \eqref{eq:VL:modified:2}-\eqref{eq:XL:modified:2}.
It is known in the literature that
for underdamped Langevin algorithms,
one can use finer discretization scheme than the Euler-Maruyama \cite{Cheng,dalalyan2018kinetic}.
Inspired by the discretization scheme for the classical underdamped Langevin dynamics in \eqref{eq:VL}-\eqref{eq:XL} in \cite{Cheng},
we introduce the \textit{underdamped anchored Langevin Monte Carlo} (UALMC) algorithm:
\begin{equation}\label{eq:scheme}
\begin{aligned}
  v_{k+1}&=\psi_0(\eta)\,v_k-\psi_1(\eta)\,g_k+\sqrt{2\gamma}\,\xi_{k+1},\\
  x_{k+1}&=x_k+a_k\left(\psi_1(\eta)\,v_k-\psi_2(\eta)\,g_k\right)
     +\sqrt{2\gamma}\,a_k\,\xi'_{k+1},
\end{aligned}
\end{equation}
where $(\xi_{k+1},\xi'_{k+1})$ is a centered $2d$-dimensional Gaussian random vector, independent across $k$
and across coordinates, with the per-coordinate covariance
\begin{equation}\label{eq:Cnoise}
C=\int_0^\eta\begin{pmatrix}\psi_0(t)\\ \psi_1(t)\end{pmatrix}
\begin{pmatrix}\psi_0(t)\\ \psi_1(t)\end{pmatrix}^{\!\top}dt
=\left(\begin{array}{cc}
\frac{1-e^{-2\gamma\eta}}{2\gamma} & \frac{1-2e^{-\gamma\eta}+e^{-2\gamma\eta}}{2\gamma^{2}}
\\
\frac{1-2e^{-\gamma\eta}+e^{-2\gamma\eta}}{2\gamma^{2}} & \frac{2\gamma\eta-3+4e^{-\gamma\eta}-e^{-2\gamma\eta}}{2\gamma^{3}}
\end{array}\right),
\end{equation}
where
\begin{equation}\label{eq:psidef}
  \psi_0(t)=e^{-\gamma t},\qquad
  \psi_1(t)=\int_0^t\psi_0(s)\,ds=\frac{1-e^{-\gamma t}}{\gamma},\qquad
  \psi_2(t)=\int_0^t\psi_1(s)\,ds,
\end{equation}
and
\begin{equation}\label{eq:frozen}
  a_k:=a(x_k):=e^{U(x_k)-U_0(x_k)},\qquad
  g_k:=a(x_k)\nabla U_0(x_k).
\end{equation}
The UALMC algorithm \eqref{eq:scheme} is obtained by discretizing the UALD
\eqref{eq:VL:modified:2}-\eqref{eq:XL:modified:2} by an exponential
integrator that freezes the state-dependent coefficients over each step.
To see that \eqref{eq:scheme} is a discretization scheme of \eqref{eq:VL:modified:2}-\eqref{eq:XL:modified:2}, we first introduce
the following auxiliary SDE:
\begin{align}
&d\tilde{V}_{t}=-\gamma\tilde{V}_{t}dt- \nabla U_{0}\left(\tilde{X}_{\eta\lfloor t/\eta\rfloor}\right)e^{U(\tilde{X}_{\eta\lfloor t/\eta\rfloor})-U_{0}(\tilde{X}_{\eta\lfloor t/\eta\rfloor})}dt+\sqrt{2\gamma}dB_{t}, \label{eq:VL:auxiliary}
\\
&d\tilde{X}_{t}=\tilde{V}_{t}e^{U(\tilde{X}_{\eta\lfloor t/\eta\rfloor})-U_{0}(\tilde{X}_{\eta\lfloor t/\eta\rfloor})}dt.\label{eq:XL:auxiliary}
\end{align}
In other words, for any $k\eta\leq t<(k+1)\eta$ with $k\in\mathbb{N}$,
\begin{align}
&d\tilde{V}_{t}=-\gamma\tilde{V}_{t}dt- \nabla U_{0}\left(\tilde{X}_{k\eta}\right)e^{U(\tilde{X}_{k\eta})-U_{0}(\tilde{X}_{k\eta})}dt+\sqrt{2\gamma}dB_{t},
\\
&d\tilde{X}_{t}=\tilde{V}_{t}e^{U(\tilde{X}_{k\eta})-U_{0}(\tilde{X}_{k\eta})}dt,
\end{align}
which implies that $(\tilde{V}_{t},\tilde{X}_{t})$ follows an Ornstein-Uhlenbeck process
on $t\in[k\eta,(k+1)\eta)$, that can be solved explicitly and $\left(\tilde{V}_{k\eta},\tilde{X}_{k\eta}\right)$
has the same distribution as $(v_{k},x_{k})$, where $(v_{k},x_{k})$
satisfies the iterates \eqref{eq:scheme}.

The contributions of our paper can be summarized as follows.
\begin{itemize}
\item
Compared to the existing literature on Langevin-type algorithms
for sampling non-differentiable target densities,
our paper is the first to provide a kinetic
and momentum-based Langevin algorithm for non-differentiable targets that is
free of asymptotic bias, together with non-asymptotic guarantees.
\item
We first verify that the continuous-time UALD \eqref{eq:VL:modified:2}-\eqref{eq:XL:modified:2} preserves the Gibbs distribution as an invariant distribution (Theorem~\ref{thm:invariant}). We obtain a non-asymptotic 2-Wasserstein convergence bound for the continuous-time UALD \eqref{eq:VL:modified:2}-\eqref{eq:XL:modified:2} (Theorem~\ref{thm:contraction}, Corollary~\ref{cor:Wasserstein}). The proof technique is based on using synchronous coupling, establishing a contraction for the linear approximation (Proposition~\ref{prop:linear}) and then for the full dynamics by controlling the perturbation term and our analysis relies on establishing novel Lyapunov bound (Lemma~\ref{lem:drift})
and exponential moment (Lemma~\ref{lem:expmoment}).
We further show that UALD can sample heavy-tailed distributions such as
Student's~$t$ distributions with exponential convergence in time
(Theorem~\ref{thm:heavyL2}).
\item
We obtain finite-time non-asymptotic convergence bound for the discrete-time UALMC algorithm \eqref{eq:scheme} (Theorem~\ref{thm:disc}).
Based on this, we further obtain iteration complexity (Corollary~\ref{cor:iteration}),
which improves upon that of ALMC algorithm based on the discretization of \eqref{overdamped:SDE:modified}
with better dependence on dimension and accuracy level (Remark~\ref{rmk:comparison}).
Compared to the non-asymptotic convergence analysis for the classical KLMC in the literature \cite{dalalyan2018kinetic}, the novelty of our analysis is threefold.
First, the position equation \eqref{eq:XL:modified:2} carries the
state-dependent speed $a(x)=e^{U(x)-U_0(x)}$, so \emph{both} the force and
the speed must be frozen; the frozen speed generates a local error term
that is \emph{quadratic} in the velocity (see \eqref{eq:dX}), a term that has
no counterpart in the classical KLMC analysis.
Second, the local error involves the \emph{unbounded}
stationary velocity and $\nabla U_0$, which we never bound uniformly but
whose moments we control through the Lyapunov structure of
Lemma~\ref{lem:drift}; indeed, the anchored force $g=a\nabla U_{0}$ is
\emph{not} globally Lipschitz (see \eqref{eq:glip}), and since the local error
contains \emph{products} of two unbounded factors, the $L_{2}$ deviation
splitting of \cite{dalalyan2018kinetic} does not suffice; we replace it by
uniform polynomial moment bounds for the iterates
(Lemma~\ref{lem:itermom}).
And third,
Theorem~\ref{thm:contraction} contracts the
\emph{weighted} functional $\E[\mathcal H_t\Psi_t']$ rather than
$\E[\mathcal H_t]$, so the discrete recursion must propagate a weighted
distance (Lemma~\ref{lem:cont-step}).
\item
Numerical experiments demonstrate the efficiency of our proposed algorithm.
In particular, we conducted synthetic experiments
such as sampling Laplace distributions
and heavy-tailed multivariate Student's distributions
and Bayesian logistic regression and Bayesian neural network
experiments using real data (Section~\ref{sec:numerical}).
\end{itemize}

\subsection*{Notations}
We summarize the notations here for readers' convenience.
\begin{itemize}
\item Define $\mathcal{P}_{2}(\mathbb{R}^{d})$
as the space consisting of all the Borel probability measures $\mu$
on $\mathbb{R}^{d}$ with the finite second moment
(based on the Euclidean norm).
\item For any $\mu_{1},\mu_{2}\in\mathcal{P}_{2}(\mathbb{R}^{d})$,
the $2$-Wasserstein
distance $\mathcal{W}_{2}$ between $\mu_{1}$ and $\mu_{2}$ is defined as:
$\mathcal{W}_{2}(\mu_{1},\mu_{2}):=\left(\inf\mathbb{E}\left[\Vert X_{1}-X_{2}\Vert^{2}\right]\right)^{1/2},$
where the infimum is taken over all joint distributions of the random variables $X_{1},X_{2}$ with marginal distributions
$\mu_{1},\mu_{2}$ respectively; see e.g. \cite{villani2008optimal}.
\item A function $g:\mathbb{R}^{d}\rightarrow\mathbb{R}$ is $m$-strongly convex and $L$-smooth if
\begin{equation*}
\frac{L}{2}\left\Vert x-y \right\Vert^2 \geq g(x) - g(y) - \nabla g(y)^{\top}(x-y) \geq \frac{m}{2}\left\Vert x-y\right\Vert^2, \quad \text{for every $x,y \in \mathbb R^d$}.
\end{equation*}
\item $\Id$ denotes the $d\times d$ identity matrix.
\item For a symmetric positive definite matrix $S$ we write
$|w|_{S}:=(w^{\top}Sw)^{1/2}$, and $\|\cdot\|_{\mathrm{op}}$ for the operator
norm.
\end{itemize}

\section{Main Results}
In this section, we present the main results on the theoretical analysis of our paper. We present the theoretical analysis for the continuous-time dynamics \eqref{eq:VL:modified:2}-\eqref{eq:XL:modified:2} in Section~\ref{sec:continuous}, and the theoretical analysis for the discrete-time algorithm \eqref{eq:scheme} in Section~\ref{sec:discrete}.

\subsection{Continuous-Time Analysis}\label{sec:continuous}
\subsubsection{Invariance distribution}
First, we verify that the Gibbs distribution $\pi(x,v) \propto \exp\left(- \left(\frac{1}{2} \Vert v\Vert^2 +U(x)\right) \right)$
is an invariant distribution for the SDEs \eqref{eq:VL:modified:2}-\eqref{eq:XL:modified:2}.
\begin{theorem}\label{thm:invariant}
The Gibbs distribution $\pi(x,v) \propto \exp\left(- \left(\frac{1}{2} \Vert v\Vert^2 +U(x)\right) \right)$
is an invariant distribution for the SDEs \eqref{eq:VL:modified:2}-\eqref{eq:XL:modified:2}.
\end{theorem}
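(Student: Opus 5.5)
We plan to verify that $\pi$ solves the stationary Fokker--Planck equation $\mathcal L^{*}\pi=0$, where $\mathcal L$ is the generator of \eqref{eq:VL:modified:2}-\eqref{eq:XL:modified:2}. Equivalently, we show $\int \mathcal L f\,d\pi=0$ for all test functions $f\in C_c^\infty(\mathbb R^{2d})$. Write $a(x)=e^{U(x)-U_0(x)}$. The generator is
\begin{equation*}
\mathcal L f = a(x)\, v\cdot\nabla_x f - a(x)\nabla U_0(x)\cdot \nabla_v f -\gamma v\cdot\nabla_v f + \gamma\Delta_v f .
\end{equation*}
We split it as $\mathcal L=\mathcal A+\mathcal S$, with a Hamiltonian-like transport part $\mathcal A f = a v\cdot\nabla_x f - a\nabla U_0\cdot\nabla_v f$ and an Ornstein--Uhlenbeck part $\mathcal S f=-\gamma v\cdot\nabla_v f+\gamma\Delta_v f$. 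We then treat the two pieces separately.

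For $\mathcal S$: it acts only in $v$, and for each fixed $x$ the standard Gaussian $e^{-\|v\|^2/2}$ is invariant for the OU generator. Integrating by parts in $v$ gives $\int \mathcal S f\, e^{-\|v\|^2/2}dv=0$ for each $x$. Multiplying by $e^{-U(x)}$ and integrating in $x$ yields $\int\mathcal S f\,d\pi=0$.

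For $\mathcal A$: the key observation is that the weighted density simplifies. We have $a(x)\pi(x,v)\propto e^{-U_0(x)-\|v\|^2/2}$, because $U$ cancels. Hence
\begin{equation*}
\int \mathcal A f\,d\pi \propto \int \big(v\cdot\nabla_x f - \nabla U_0\cdot\nabla_v f\big)e^{-U_0(x)-\|v\|^2/2}\,dx\,dv .
\end{equation*}
This is exactly the Liouville operator of the Hamiltonian $H_0=U_0+\|v\|^2/2$ integrated against $e^{-H_0}$. Integrating by parts (boundary terms vanish since $f$ has compact support), the $x$-term contributes $-\int f\, v\cdot(-\nabla U_0)e^{-H_0}$ and the $v$-term contributes $\int f\,\nabla U_0\cdot(-v)e^{-H_0}$. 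More precisely, the vector field $(v,-\nabla U_0)$ is divergence free and tangent to level sets of $H_0$, so the divergence of $(v,-\nabla U_0)e^{-H_0}$ vanishes, and therefore the integral is zero. Combining the two pieces gives $\int\mathcal L f\,d\pi=0$.

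The main obstacle is regularity. $U$ need not be differentiable, so we must avoid differentiating $a$ or $U$. The computation above only differentiates $f$, $U_0$, and the Gaussian in $v$; $U$ appears only through the product $a\pi$, where it cancels. We will therefore assume, as the paper implicitly does, that $U_0$ is $C^1$ (indeed smooth), that $U$ is measurable and locally bounded so that $a$ is locally bounded, and that $\pi$ is normalizable. To pass from $\int\mathcal Lf\,d\pi=0$ to genuine invariance, we invoke well-posedness of the SDE, i.e.\ uniqueness for the associated martingale problem or Fokker--Planck equation (Echeverr\'ia's theorem). We expect to state this as a standing assumption rather than prove it in full.
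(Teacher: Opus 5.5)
Your proposal is correct and follows essentially the same route as the paper. The paper computes $\mathcal{L}_0^{\ast}e^{-\frac12\|v\|^2-U(x)}$ using your cancellation $e^{U-U_0}e^{-U}=e^{-U_0}$, so that the Ornstein--Uhlenbeck terms and the Hamiltonian terms for $H_0=U_0+\frac12\|v\|^2$ vanish separately; your weak formulation against test functions, and your remark that genuine invariance needs well-posedness, are a more careful rendering of the same computation, which the paper carries out formally in strong form.
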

\begin{proof}
The infinitesimal generator of the SDEs \eqref{eq:VL:modified:2}-\eqref{eq:XL:modified:2} is given by
\begin{equation}
\mathcal{L}_{0}:=\gamma\Delta_{v}-\gamma v\cdot\nabla_{v}-e^{U(x)-U_{0}(x)}\nabla_{x}U_{0}(x)\cdot\nabla_{v}
+e^{U(x)-U_{0}(x)}v\cdot\nabla_{x},
\end{equation}
and we have
\begin{align}
&\mathcal{L}_{0}^{\ast}e^{-\frac{1}{2}\Vert v\Vert^{2}-U(x)}
\nonumber
\\
&=\Delta_{v}\left(\gamma e^{-\frac{1}{2}\Vert v\Vert^{2}-U(x)}\right)+\text{div}_{v}\left(\gamma ve^{-\frac{1}{2}\Vert v\Vert^{2}-U(x)}\right)
\nonumber
\\
&\qquad\qquad
+\text{div}_{v}\left(e^{U(x)-U_{0}(x)}\nabla_{x}U_{0}(x)e^{-\frac{1}{2}\Vert v\Vert^{2}-U(x)}\right)
-\text{div}_{x}\left(e^{U(x)-U_{0}(x)}ve^{-\frac{1}{2}\Vert v\Vert^{2}-U(x)}\right)
\nonumber
\\
&=\Delta_{v}\left(\gamma e^{-\frac{1}{2}\Vert v\Vert^{2}-U(x)}\right)+\text{div}_{v}\left(\gamma ve^{-\frac{1}{2}\Vert v\Vert^{2}-U(x)}\right)
\nonumber
\\
&\qquad\qquad
+\text{div}_{v}\left(\nabla_{x}U_{0}(x)e^{-\frac{1}{2}\Vert v\Vert^{2}-U_{0}(x)}\right)
-\text{div}_{x}\left(ve^{-\frac{1}{2}\Vert v\Vert^{2}-U_{0}(x)}\right)=0,
\end{align}
where $\mathcal{L}_{0}^{\ast}$ is the adjoint operator of $\mathcal{L}_{0}$.
Hence, the SDEs \eqref{eq:VL:modified:2}-\eqref{eq:XL:modified:2} preserve
the Gibbs distribution $\pi(x,v) \propto \exp\left(- \left(\frac{1}{2} \Vert v\Vert^2 +U(x)\right) \right)$ as its invariant distribution.
The proof is complete.
\end{proof}

\subsubsection{Non-asymptotic convergence bound}
In this section, we study the non-asymptotic convergence of the continuous-time UALD
\eqref{eq:VL:modified:2}--\eqref{eq:XL:modified:2} to its invariant
distribution and obtain an explicit upper bound, using the synchronous coupling
method together with a Lyapunov argument.

\paragraph{Synchronous coupling.}
We recall from \eqref{eq:VL:modified:2}-\eqref{eq:XL:modified:2} that for the underdamped anchored Langevin dynamics (UALD):
\begin{align}
&dV_t=-\gamma V_t\,dt-\nabla U_0(X_t)e^{U(X_t)-U_0(X_t)}\,dt+\sqrt{2\gamma}\,dB_t,\label{eq:UALD:1}
    \\
&dX_t=V_t\,e^{U(X_t)-U_0(X_t)}\,dt,
  \label{eq:UALD:2}
\end{align}
and consider the synchronous coupling $(X',V')$ solving the \emph{same} equations with
the \emph{same} Brownian motion $B$ but a different initial condition:
\begin{align}
&dV'_t=-\gamma V'_t\,dt-\nabla U_0(X'_t)e^{U(X'_t)-U_0(X'_t)}\,dt+\sqrt{2\gamma}\,dB_t,\label{eq:UALD:coupled:1}
    \\
&dX'_t=V'_t\,e^{U(X'_t)-U_0(X'_t)}\,dt.
  \label{eq:UALD:coupled:2}
\end{align}
By subtracting \eqref{eq:UALD:coupled:1}-\eqref{eq:UALD:coupled:2} from \eqref{eq:UALD:1}-\eqref{eq:UALD:2}, we obtain
\[
  d\begin{pmatrix}V_t-V_t'\\ X_t-X_t'\end{pmatrix}
  =\begin{pmatrix}
    -\left[\gamma(V_t-V_t')+\nabla U_0(X_t)e^{U(X_t)-U_0(X_t)}-\nabla U_0(X_t')e^{U(X_t')-U_0(X_t')}\right]\\[2pt]
    V_t e^{U(X_t)-U_0(X_t)}-V_t'e^{U(X_t')-U_0(X_t')}
  \end{pmatrix}dt .
\]
By using the interpolation identity
\begin{align}\label{interpolation:eqn}
g(z_1)-g(z_2)=\left\langle z_1-z_2,\int_0^1\nabla g(sz_1+(1-s)z_2)\,ds\right\rangle,
\end{align}
for $g=e^{U-U_0}$, we get
\begin{equation}
  e^{U(X_t)-U_0(X_t)}-e^{U(X_t')-U_0(X_t')}=\left\langle F_t,\,X_t-X_t'\right\rangle,
\end{equation}
where
\begin{equation}\label{eq:Ft}
  F_t:=\int_0^1\nabla e^{U-U_0}\left(sX_t+(1-s)X_t'\right)\,ds,
\end{equation}
and by using \eqref{interpolation:eqn} for $g=\nabla U_0$ (componentwise),
we get
\begin{equation}
  \nabla U_0(X_t)-\nabla U_0(X_t')=H_t\,\left(X_t-X_t'\right),
\end{equation}
where
\begin{equation}\label{eq:Ht}
H_{t}:=\int_0^1\nabla^{2} U_{0}\left(sX_t+(1-s)X_t'\right)\,ds,
\end{equation}
that satisfies
\begin{equation}
  m\,\Id\preceq H_t\preceq L\,\Id,
\end{equation}
due to Assumption~\ref{ass:main}(B).
Next, let us define the difference:
\[
W_t:=\left(V_t-V_t',\,X_t-X_t'\right)^{\top}.
\]
Then, a direct computation gives the closed linear system:
\begin{equation}\label{eq:diffdyn}
  \frac{d}{dt}W_t=\left(M_0(H_t)+E_t\right)\,W_t,
  \qquad
  M_0(H_t):=\begin{pmatrix}-\gamma\Id&-H_t\\ \Id&0\end{pmatrix},
\end{equation}
where
\begin{equation}\label{eq:Et}
  E_t:=\begin{pmatrix}
    0 & (1-a_t)H_t-\nabla U_0(X_t')F_t^{\top}\\[2pt]
    (a_t-1)\Id & V_t'F_t^{\top}
  \end{pmatrix},
  \qquad a_t:=e^{U(X_t)-U_0(X_t)}.
\end{equation}
Next, we introduce the constant symmetric weight
\[
S:=\begin{pmatrix}a\,\Id&\epsilon\,\Id\\ \epsilon\,\Id&b\,\Id\end{pmatrix},
\]
and its quadratic form
\[
  \mathcal H_t:=W_t^{\top}SW_t
  =a\left\|V_t-V_t'\right\|^2+2\epsilon\left\langle V_t-V_t',X_t-X_t'\right\rangle+b\left\|X_t-X_t'\right\|^2 .
\]
Here $S\succ0$ if and only if $a,b>0$ and $ab>\epsilon^2$, in which case
\[
\lambda_{\min}(S)\|W\|^2\le\mathcal H\le\lambda_{\max}(S)\|W\|^2,
\]
so that contraction
of $\mathcal H$ yields $\mathcal W_2$ contraction up to
$\sqrt{\lambda_{\max}/\lambda_{\min}}$.
\paragraph{Linear contraction.}
Note that the difference $W_t=\left(V_t-V_t',\,X_t-X_t'\right)^{\top}$ obeys the linear time-varying system \eqref{eq:diffdyn}. The matrix
$E_t$ collects \emph{exactly} the discrepancy produced by anchoring to a
mismatched potential $U_0\neq U$: it is built from $a_t-1$ and $F_t$, and both
vanish identically when $U=U_0$ (then $a_t\equiv1$, $F_t\equiv0$, so $E_t\equiv0$).
By removing $E_t$, we obtain the linearized system:
$$
\dot W=M_0(H_t)W,
$$
i.e. the
frozen-Hessian linearization of the \emph{exact} underdamped gradient flow of
$U_0$. Proposition~\ref{prop:linear} shows that this $E_t$-free part already
contracts in the constant metric $S$, uniformly over the admissible Hessian range
$H_t\in[m\Id,L\Id]$ from \eqref{eq:Ht}; ``linear'' refers to this unperturbed
part. The full (i.e. perturbed) bound is then obtained by controlling $E_t$, which is small precisely when the anchoring
mismatch $(\omega_0,\omega_1)$ is small.
\begin{proposition}[Linear contraction]\label{prop:linear}
Let $M_0(H)=\begin{pmatrix}-\gamma\Id&-H\\ \Id&0\end{pmatrix}$ with $H$ symmetric,
$m\Id\preceq H\preceq L\Id$, $0<m\le L$. Take $a=1$, $b=\epsilon\gamma$,
$\epsilon\in(0,\gamma)$, so $S=\begin{pmatrix}\Id&\epsilon\Id\\
\epsilon\Id&\epsilon\gamma\Id\end{pmatrix}\succ0$. If
\begin{equation}\label{eq:star}
  \epsilon(\gamma-\epsilon)>\frac{L}{4},
\end{equation}
then for every symmetric $H\in[m\Id,L\Id]$,
\[
  \left(M_0(H)\right)^{\top}S+S\,M_0(H)\ \preceq\ -2\kappa\,S ,
\]
with the \emph{explicit, largest admissible} constant
\begin{equation}\label{eq:kappa}
  \kappa
  :=\frac12\left(\gamma-\sqrt{\max\{q(m),q(L)\}}\right),
  \qquad
  q(h):=\gamma^2-4h+\frac{h^2}{\epsilon(\gamma-\epsilon)} .
\end{equation}
Condition \eqref{eq:star} is solvable if and only if $\gamma>\sqrt L$ (optimized at
$\epsilon=\gamma/2$), and is exactly the condition $\kappa>0$.
\end{proposition}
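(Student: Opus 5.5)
The plan is to reduce the block inequality to a family of $2\times2$ scalar inequalities, one for each eigenvalue of $H$, and then solve each one exactly as a generalized eigenvalue problem.

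\textbf{Step 1 (reduction to scalars).} All blocks of $M_0(H)$ and $S$ are polynomials in $H$ (namely $\Id$ and $H$). So I diagonalize $H=Q\,\mathrm{diag}(h_1,\dots,h_d)\,Q^\top$ with $h_i\in[m,L]$ and conjugate by $\mathrm{diag}(Q,Q)$. After a permutation of coordinates, the inequality $(M_0(H))^{\top}S+S\,M_0(H)\preceq-2\kappa S$ splits into $d$ independent $2\times 2$ inequalities. Each has the form
\[
M_h^\top S_1+S_1M_h\preceq-2\kappa S_1,\qquad
M_h=\begin{pmatrix}-\gamma&-h\\1&0\end{pmatrix},\quad
S_1=\begin{pmatrix}1&\epsilon\\ \epsilon&\epsilon\gamma\end{pmatrix}.
\]
It therefore suffices to prove the scalar inequality for every $h\in[m,L]$. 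Positivity of $S$ is immediate: $\det S_1=\epsilon(\gamma-\epsilon)>0$ because $\epsilon\in(0,\gamma)$.

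\textbf{Step 2 (exact computation).} A direct multiplication gives $S_1M_h=\begin{pmatrix}\epsilon-\gamma&-h\\0&-\epsilon h\end{pmatrix}$. Hence
\[
N_h:=-(M_h^\top S_1+S_1M_h)=\begin{pmatrix}2(\gamma-\epsilon)&h\\ h&2\epsilon h\end{pmatrix}.
\]
Since $S_1\succ0$, the condition $N_h-\lambda S_1\succeq0$ (with $\lambda=2\kappa$) holds if and only if $\lambda$ is at most the smallest root of $\det(N_h-\lambda S_1)=0$. Expanding the determinant, the cross terms $\pm2h\lambda\epsilon$ cancel, and dividing by $\epsilon(\gamma-\epsilon)$ yields
\[
\lambda^2-2\gamma\lambda+4h-\frac{h^2}{\epsilon(\gamma-\epsilon)}=0.
\]
The roots are $\lambda=\gamma\pm\sqrt{q(h)}$. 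The largest admissible rate for a fixed $h$ is therefore $\kappa(h)=\tfrac12(\gamma-\sqrt{q(h)})$, and this bound is sharp.

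The square root is real. Write $c=\epsilon(\gamma-\epsilon)\le\gamma^2/4$. Then
\[
q(h)=\left(\frac{h}{\sqrt c}-2\sqrt c\right)^2+\gamma^2-4c\ \ge 0.
\]

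\textbf{Step 3 (uniformity in $h$ and the condition).} The function $q$ is a convex quadratic in $h$. Its maximum over $[m,L]$ is therefore attained at an endpoint, which gives the uniform constant $\kappa$ in \eqref{eq:kappa}. Because $\kappa(h)$ is sharp for each $h$, this $\kappa$ is also the largest admissible constant.

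Positivity of $\kappa$ is equivalent to $q(h)<\gamma^2$ at both endpoints, that is, $h^2/c<4h$, or $h<4c$. The binding endpoint is $h=L$, so $\kappa>0$ if and only if $\epsilon(\gamma-\epsilon)>L/4$, which is \eqref{eq:star}.

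Finally, $\epsilon(\gamma-\epsilon)$ is maximized at $\epsilon=\gamma/2$, where it equals $\gamma^2/4$. Hence \eqref{eq:star} is solvable if and only if $\gamma^2/4>L/4$, that is, $\gamma>\sqrt L$.

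\textbf{Main obstacle.} The only step requiring care is Step 2: the determinant expansion, and the justification that "PSD iff $\lambda\le$ smallest generalized eigenvalue". The latter follows by writing $N_h-\lambda S_1=S_1^{1/2}\left(S_1^{-1/2}N_hS_1^{-1/2}-\lambda\right)S_1^{1/2}$. The rest is bookkeeping.
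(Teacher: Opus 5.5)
Your proposal is correct and follows essentially the same route as the paper: conjugate by $\mathrm{diag}(Q,Q)$ to decouple into $2\times 2$ pencils, solve the quadratic $\lambda^2-2\gamma\lambda+4h-h^2/(\epsilon(\gamma-\epsilon))=0$ for the smaller generalized eigenvalue, and use convexity of $q$ to put the worst case at an endpoint of $[m,L]$. The paper also includes a preliminary Schur-complement check that the block matrix is negative definite under \eqref{eq:star}, which your generalized-eigenvalue argument makes redundant. You add two details the paper leaves implicit: the completed-square verification that $q(h)\ge 0$, and the congruence argument for ``PSD iff $\lambda\le$ smallest generalized eigenvalue''.
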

\begin{proof}
With $a=1,\ b=\epsilon\gamma$, a block computation gives
\[
  G^{\mathrm{lin}}(H):=\left(M_0(H)\right)^{\top}S+S\,M_0(H)
  =\begin{pmatrix}2(\epsilon-\gamma)\Id&-H\\ -H&-2\epsilon H\end{pmatrix}.
\]
The lower-right block $-2\epsilon H\prec0$. Thus, by the Schur complement
criterion for block matrices, see e.g.\ \cite[Theorem 7.7.7]{HornJohnson},
$G^{\mathrm{lin}}(H)\prec0$ if and only if
$2(\epsilon-\gamma)\Id+\frac{1}{2\epsilon}H\prec0$.
Since $H\preceq L\Id$,
\[
2(\epsilon-\gamma)\Id+\frac{1}{2\epsilon}H\preceq\left(2(\epsilon-\gamma)+\frac{L}{2\epsilon}\right)\Id\prec0,
\]
where we used \eqref{eq:star}.
For the explicit rate, note $S$ is block-scalar and $M_0(H)$ is
block-structured, so conjugating by $\mathrm{diag}(Q,Q)$ that diagonalizes
$H=Q\,\mathrm{diag}(h_i)\,Q^{\top}$ decouples the problem into $d$ planar systems
with $h_i\in[m,L]$. Thus $G^{\mathrm{lin}}(H)\preceq-2\kappa S$ for all
$H\in[m\Id,L\Id]$ if and only if, for every $h\in[m,L]$,
$-G(h)\succeq2\kappa\,S_0$ with
\[
G(h)=\begin{pmatrix}
2(\epsilon-\gamma)&-h
\\
-h&-2\epsilon h
\end{pmatrix},
\qquad
S_0=\begin{pmatrix}
1&\epsilon\\ \epsilon
&\epsilon\gamma
\end{pmatrix}.
\]
The largest such $\kappa$ is $\frac12\min_{h\in[m,L]}\mu_-(h)$, where $\mu_-(h)$ is
the smaller generalized eigenvalue of the pencil $(-G(h),S_0)$. Expanding
$\det(-G(h)-\mu S_0)=0$ yields
\[
  \mu^2-2\gamma\mu+\left(4h-\frac{h^2}{\epsilon(\gamma-\epsilon)}\right)=0,
\]
which implies that
\[
  \mu_-(h)=\gamma-\sqrt{q(h)},\quad q(h)=\gamma^2-4h+\frac{h^2}{\epsilon(\gamma-\epsilon)} .
\]
Minimizing $\mu_-$ over $[m,L]$ means maximizing the convex $q$, attained at an
endpoint, giving \eqref{eq:kappa}. Moreover $\mu_-(h)>0$ if and only if $4\epsilon(\gamma-\epsilon)>h$,
so $\kappa>0$ for all $h\le L$ if and only if \eqref{eq:star} holds.
Finally
$\epsilon\mapsto\epsilon(\gamma-\epsilon)$ peaks at $\epsilon=\gamma/2$ with value
$\gamma^2/4$, so \eqref{eq:star} is solvable if and only if $\gamma>\sqrt L$.
The proof is complete.
\end{proof}
\paragraph{Lyapunov bound.}
Next, we adopt a Lyapunov approach
to extend the linear contraction to the full contraction. First, we make
the following main assumption.

\begin{assumption}\label{ass:main}
We assume throughout the rest of the paper:
\begin{itemize}
\item[(A)] \emph{Controlled anchoring}: $\omega_0:=\|U-U_0\|_\infty$ and
  $\omega_1:=\|\nabla(U-U_0)\|_\infty$ are finite.
\item[(B)] \emph{Confinement}: $U_0$ is $m$-strongly convex and $L$-smooth, with
  minimizer $x_\ast$ and $U_0^\ast:=U_0(x_\ast)$.
\end{itemize}
\end{assumption}
Write $a(x)=e^{U(x)-U_0(x)}$, so the reference generator is
\[
  \mathcal A=a(x)\,v\cdot\nabla_x-\left(\gamma v+a(x)\nabla U_0\right)\cdot\nabla_v+\gamma\Delta_v .
\]
The plain energy is not self-controlling
($\mathcal A(U_0+\frac12\|v\|^2)=-\gamma\|v\|^2+\gamma d$ has no $-c(\cdot)$ on the
position part), so we add the hypocoercive cross term
\begin{equation}\label{eq:Gdef}
  \mathcal G(x,v):=\left(U_0(x)-U_0^\ast\right)+\frac12\|v\|^2+\eta_{0}\,\langle v,\nabla U_0(x)\rangle,
\end{equation}
where $\eta_{0}>0$ is chosen to be small.
Next, we verify a Foster--Lyapunov drift condition.
\begin{lemma}[Foster--Lyapunov drift]\label{lem:drift}
Under Assumption~\ref{ass:main}(B), there is $\bar{\eta}_0>0$ such that for all
$\eta_{0}\in(0,\bar{\eta}_0)$, we have
\begin{equation}\label{eq:equiv}
\frac12\bar{\mathcal G}\le\mathcal G \le \frac32\bar{\mathcal G},
\end{equation}
(in particular $\mathcal G\ge0$) where
\begin{equation}
\bar{\mathcal G}:=(U_0-U_0^\ast)+\frac12\|v\|^2,
\end{equation}
and
\begin{equation}\label{eq:drift}
  \mathcal A\mathcal G\ \le\ -c\,\mathcal G+b,\qquad c>0,\quad b=\gamma d,
\end{equation}
with $c$ depending only on $(\gamma,m,L,\eta_{0},a_{\min},a_{\max})$, where
$e^{-\omega_0}=:a_{\min}\le a\le a_{\max}:=e^{\omega_0}$ (finite under
Assumption~\ref{ass:main}(A)).
\end{lemma}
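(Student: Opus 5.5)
We prove the two claims separately: first the norm equivalence \eqref{eq:equiv}, then the drift inequality \eqref{eq:drift} by a direct generator computation.

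\emph{Step 1: norm equivalence.} By Assumption~\ref{ass:main}(B), $U_0$ is $L$-smooth and $\nabla U_0(x_\ast)=0$, so
\[
\|\nabla U_0(x)\|^2\le 2L\,(U_0(x)-U_0^\ast).
\]
Young's inequality then gives
\[
|\eta_0\langle v,\nabla U_0\rangle|\le \tfrac{\eta_0}{2}\|v\|^2+\eta_0 L\,(U_0-U_0^\ast)\le \eta_0\max\{1,L\}\,\bar{\mathcal G}.
\]
Hence \eqref{eq:equiv} holds as soon as $\eta_0\max\{1,L\}\le\frac12$.

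\emph{Step 2: generator of each term.} Using $\mathcal A=a\,v\cdot\nabla_x-(\gamma v+a\nabla U_0)\cdot\nabla_v+\gamma\Delta_v$, we compute:
\begin{itemize}
\item $\mathcal A(U_0-U_0^\ast)=a\langle v,\nabla U_0\rangle$.
\item $\mathcal A\bigl(\tfrac12\|v\|^2\bigr)=-\gamma\|v\|^2-a\langle v,\nabla U_0\rangle+\gamma d$.
\item $\mathcal A\langle v,\nabla U_0\rangle=a\,v^\top\nabla^2U_0\,v-\gamma\langle v,\nabla U_0\rangle-a\|\nabla U_0\|^2$. There is no Laplacian contribution because the term is linear in $v$.
\end{itemize}
The first two cross terms cancel exactly; this cancellation is the reason the weight $a$ appears in both equations. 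Summing,
\[
\mathcal A\mathcal G=-\gamma\|v\|^2+\gamma d+\eta_0 a\,v^\top\nabla^2U_0 v-\eta_0\gamma\langle v,\nabla U_0\rangle-\eta_0 a\|\nabla U_0\|^2 .
\]

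\emph{Step 3: bounding the remaining terms.} We use $a_{\min}\le a\le a_{\max}$ and $\nabla^2U_0\preceq L\Id$.
\begin{itemize}
\item $\eta_0 a\,v^\top\nabla^2U_0 v\le \eta_0 a_{\max}L\|v\|^2$.
\item $-\eta_0 a\|\nabla U_0\|^2\le-\eta_0 a_{\min}\|\nabla U_0\|^2$.
\item By Young, $|\eta_0\gamma\langle v,\nabla U_0\rangle|\le \frac{\eta_0 a_{\min}}{2}\|\nabla U_0\|^2+\frac{\eta_0\gamma^2}{2a_{\min}}\|v\|^2$.
\end{itemize}
Therefore
\[
\mathcal A\mathcal G\le-\Bigl(\gamma-\eta_0 a_{\max}L-\tfrac{\eta_0\gamma^2}{2a_{\min}}\Bigr)\|v\|^2-\tfrac{\eta_0a_{\min}}{2}\|\nabla U_0\|^2+\gamma d .
\]
Strong convexity gives $\|\nabla U_0\|^2\ge 2m(U_0-U_0^\ast)$ (Polyak--{\L}ojasiewicz). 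Choose $\bar\eta_0$ small so that the $\|v\|^2$ coefficient is at least $\gamma/2$, and also $\bar\eta_0\max\{1,L\}\le\frac12$. Then
\[
\mathcal A\mathcal G\le-\tfrac{\gamma}{2}\|v\|^2-\eta_0 a_{\min}m\,(U_0-U_0^\ast)+\gamma d\le -c'\bar{\mathcal G}+\gamma d,
\qquad c'=\min\{\gamma,\eta_0a_{\min}m\}.
\]
Finally, \eqref{eq:equiv} gives $-c'\bar{\mathcal G}\le-\frac{2c'}{3}\mathcal G$, so \eqref{eq:drift} holds with $c=\frac23\min\{\gamma,\eta_0a_{\min}m\}$ and $b=\gamma d$.

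\emph{Main obstacle.} The cross term $\eta_0 a\,v^\top\nabla^2U_0v$ carries the weight $a_{\max}$, while the coercive gradient term carries only $a_{\min}$. The choice of $\bar\eta_0$ must therefore absorb the factor $e^{2\omega_0}$; this is how $c$ comes to depend on $(a_{\min},a_{\max})$.

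Assumption~\ref{ass:main}(A) enters only through the bounds $a_{\min}\le a\le a_{\max}$; $\omega_1$ is not needed. No derivative of $a$ appears, since $\mathcal A$ differentiates only $\mathcal G$, which depends on $U_0$ alone.
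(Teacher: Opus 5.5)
Your proof is correct and follows the paper's argument essentially step for step. The common steps are: the same generator computation with the exact cancellation of the $a\langle v,\nabla U_0\rangle$ terms, the bound $a\,v^\top\nabla^2U_0v\le a_{\max}L\|v\|^2$, Young's inequality on the $\gamma$ cross term (your weight is the paper's $\zeta=\gamma/a_{\min}$), the Polyak--{\L}ojasiewicz inequality, and the equivalence via $\|\nabla U_0\|^2\le 2L(U_0-U_0^\ast)$ under $\eta_0\le\min\{\frac12,\frac1{2L}\}$. The only difference is cosmetic: you pass from $\bar{\mathcal G}$ to $\mathcal G$ with the sharper factor $\frac23$, obtaining $c=\frac23\min\{\gamma,\eta_0a_{\min}m\}$, whereas the paper settles for $c=c_0/2$ with the same $c_0$.
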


\begin{proof}
A direct computation gives
\[
  \mathcal A\mathcal G=-\gamma\|v\|^2+\gamma d
  +\eta_{0}\left(a\,v^{\top}\nabla^2U_0\,v-\gamma\,v\cdot\nabla U_0-a\|\nabla U_0\|^2\right).
\]
By using the bound $a\,v^{\top}\nabla^2U_0\,v\le a_{\max}L\|v\|^2$ and Young's
inequality $|\gamma v\cdot\nabla U_0|\le\frac{\gamma\zeta}{2}\|v\|^2+\frac{\gamma}{2\zeta}\|\nabla U_0\|^2$, we get
\[
  \mathcal A\mathcal G\le-\left(\gamma-\eta_{0} a_{\max}L-\frac{\eta_{0}\gamma\zeta}{2}\right)\|v\|^2
  -\eta_{0}\left(a_{\min}-\frac{\gamma}{2\zeta}\right)\|\nabla U_0\|^2+\gamma d .
\]
Choose $\zeta>\gamma/(2a_{\min})$ and $\eta_{0}$ small so the $\|v\|^2$ coefficient
$-c_v$ is negative and both bracketed factors are $>0$. The Polyak--{\L}ojasiewicz
bound $\|\nabla U_0\|^2\ge2m(U_0-U_0^\ast)$, which is implied by
$m$-strong convexity of $U_{0}$, see e.g. \cite[Theorem 2.1.10]{Nesterov2004}, then gives
\[
\mathcal A\mathcal G\le-c_0\bar{\mathcal G}+\gamma d,
\]
with
$c_0=\min\{2c_v,\,2m\eta_{0}(a_{\min}-\frac{\gamma}{2\zeta})\}$.
Finally, by $L$-smoothness of $U_0$, we have $\|\nabla U_0\|^2\le2L(U_0-U_0^\ast)$ which yields
\[
|\eta_{0}\langle v,\nabla U_0\rangle|\le\frac{\eta_{0}}{2}\|v\|^2+\eta_{0} L(U_0-U_0^\ast),
\]
and hence \eqref{eq:equiv} holds for $\eta_{0}\le\min\{\frac12,\frac1{2L}\}$; combined with
$\bar{\mathcal G}\ge\frac12\mathcal G$ this yields \eqref{eq:drift} with $c=c_0/2$.
\end{proof}
\paragraph{Perturbation estimates and the drift of $\mathcal H_t$.}
Under Assumption~\ref{ass:main}(A), since $a_t=e^{U(X_t)-U_0(X_t)}$ and
$|U-U_0|\le\omega_0$, we have
\[
  |a_t-1|\le\alpha_0:=e^{\omega_0}-1,
\]
and by \eqref{eq:Ft} and $\nabla e^{U-U_0}=e^{U-U_0}\nabla(U-U_0)$, we have
\[
  \|F_t\|\le a_{\max}\,\omega_1=:c_F=e^{\omega_0}\omega_1.
\]
By bounding the blocks of $E_t$ in \eqref{eq:Et} (operator norm, triangle inequality)
and using $\nabla^2U_0\preceq L\Id$, we obtain
\[
  \|E_t\|_{\mathrm{op}}
\le\alpha_0(1+L)+c_F\left(\left\|V_t'\right\|+\left\|\nabla U_0(X_t')\right\|\right).
\]
From $\bar{\mathcal G}=(U_0-U_0^\ast)+\frac12\|v\|^2$, $\|\nabla U_0\|^2\le2L(U_0-U_0^\ast)$,
and the equivalence \eqref{eq:equiv} ($\bar{\mathcal G}\le2\mathcal G$),
\[
  \left\|V_t'\right\|\le2\sqrt{\mathcal G_t'},\qquad \left\|\nabla U_0(X_t')\right\|\le2\sqrt L\,\sqrt{\mathcal G_t'},
\]
where
\[
\mathcal G_t':=\mathcal G(X_t',V_t'),
\]
and hence we obtain
\begin{equation}\label{eq:Ebound}
  \|E_t\|_{\mathrm{op}}\le C_1+C_2\sqrt{\mathcal G_t'},
  \qquad
  C_1:=\alpha_0(L+1),\quad C_2:=2c_F\left(1+\sqrt L\right),
\end{equation}
with $C_1\to0$, $C_2\to0$ as $(\omega_0,\omega_1)\to0$ (since $\alpha_0,c_F\to0$).
By differentiating $\mathcal H_t=W_t^{\top}SW_t$ along \eqref{eq:diffdyn}, we get
\begin{align}\label{dot:H:two:terms}
  \dot{\mathcal H}_t
  =W_t^{\top}\left(\left(M_0(H_t)\right)^{\top}S+S\,M_0(H_t)\right)W_t
  +W_t^{\top}\left(E_t^{\top}S+S\,E_t\right)W_t .
\end{align}
The first term on the right-hand side of \eqref{dot:H:two:terms} can be bounded by using Proposition~\ref{prop:linear}
(valid since $H_t\in[m\Id,L\Id]$ by \eqref{eq:Ht}) such that
\begin{align}
W_t^{\top}\left(\left(M_0(H_t)\right)^{\top}S+S\,M_0(H_t)\right)W_t\le-2\kappa\,\mathcal H_t.
\end{align}
Moreover, we can bound the second
term on the right-hand side of \eqref{dot:H:two:terms} as:
\begin{align}
W_t^{\top}\left(E_t^{\top}S+S\,E_t\right)W_t\le2\|S\|\,\|E_t\|_{\mathrm{op}}\|W_t\|^2\le\beta_0\|E_t\|_{\mathrm{op}}\mathcal H_t,
\end{align}
with $\beta_0:=2\|S\|/\lambda_{\min}(S)$. Hence, with
\begin{align}\label{eq:Kt}
K_t:=\beta_0\|E_t\|_{\mathrm{op}}\le\beta_0\left(C_1+C_2\sqrt{\mathcal G_t'}\right),
\end{align}
we have
\begin{equation}\label{eq:Hineq}
  \dot{\mathcal H}_t\ \le\ (-2\kappa+K_t)\,\mathcal H_t .
\end{equation}
\paragraph{Exponential weight.} For $\Psi=e^{\alpha\mathcal G}$ the diffusion produces
a carr\'e-du-champ correction (only $\gamma\Delta_v$ contributes),
\begin{equation}\label{eq:expid}
  \mathcal A\,e^{\alpha\mathcal G}
  =e^{\alpha\mathcal G}\left(\alpha\,\mathcal A\mathcal G+\gamma\alpha^2\|\nabla_v\mathcal G\|^2\right).
\end{equation}
With $\nabla_v\mathcal G=v+\eta_{0}\nabla U_0$, the bounds $\|v\|^2\le2\bar{\mathcal G}$,
$\|\nabla U_0\|^2\le2L\bar{\mathcal G}$ and $\bar{\mathcal G}\le2\mathcal G$ give
\begin{equation}\label{eq:carre}
  \gamma\|\nabla_v\mathcal G\|^2\le C_\Gamma\,\mathcal G,\qquad C_\Gamma:=8\gamma\left(1+\eta_{0}^2L\right).
\end{equation}
By combining \eqref{eq:expid}, \eqref{eq:carre} and the drift \eqref{eq:drift}, we conclude that
\begin{equation}\label{eq:expdrift}
  \mathcal A\,e^{\alpha\mathcal G}\le\alpha\,e^{\alpha\mathcal G}\left(-(c-\alpha C_\Gamma)\mathcal G+b\right),
  \qquad 0<\alpha<\frac{c}{C_\Gamma},\quad c':=c-\alpha C_\Gamma>0.
\end{equation}

The exponential drift inequality \eqref{eq:expdrift} has a
consequence that will be used repeatedly below: since the perturbation
$K_t$ in \eqref{eq:Kt} grows like $\sqrt{\mathcal G_t'}$, integrating
\eqref{eq:Hineq} produces exponentials of $\int_0^t\sqrt{\mathcal G_s'}\,ds$,
and the following lemma shows that such exponentials are integrable, with
an expectation that grows at most exponentially in $t$. Besides
controlling the perturbation, this is what justifies the martingale
argument in the proof of Theorem~\ref{thm:contraction} below.

\begin{lemma}[Exponential moment of the integrated potential]\label{lem:expmoment}
Assume Assumption~\ref{ass:main}\textup{(A)--(B)} and let $\alpha\in(0,c/C_\Gamma)$, so $c'=c-\alpha C_\Gamma>0$.
Then for every $\lambda\ge0$ and $t\ge0$,
\begin{equation}\label{eq:expmoment}
\E\left[\exp\left(\lambda\int_0^t\sqrt{\mathcal G_s'}\,ds\right)\,\Big|\,\mathcal F_0\right]
  \le e^{\alpha\mathcal G_0'}\,e^{\rho^\ast t},
  \qquad \rho^\ast=\rho^\ast(\lambda):=\alpha b+\frac{\lambda^2}{4\alpha c'} .
\end{equation}
In particular, if $\E\left[e^{\alpha\mathcal G_0'}\right]<\infty$ then
\[
\E\left[\exp\left(\lambda\int_0^t\sqrt{\mathcal G_s'}\,ds\right)\right]\le e^{\rho^\ast t}\,\E\left[e^{\alpha\mathcal G_0'}\right]<\infty.
\]
\end{lemma}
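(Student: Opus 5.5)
The plan is a Feynman--Kac/supermartingale argument built on the exponential drift inequality \eqref{eq:expdrift}. First, use Young's inequality to trade the square root for the potential:
\[
\lambda\sqrt{\mathcal G}\le \alpha c'\,\mathcal G+\frac{\lambda^2}{4\alpha c'} .
\]
Then consider the process
\[
M_t:=\exp\left(\alpha\mathcal G_t'+\lambda\int_0^t\sqrt{\mathcal G_s'}\,ds-\rho^\ast t\right),
\]
and show that it is a nonnegative local supermartingale. By It\^o's formula, the drift of $M_t$ equals
\[
M_t\left(e^{-\alpha\mathcal G_t'}\mathcal A e^{\alpha\mathcal G_t'}+\lambda\sqrt{\mathcal G_t'}-\rho^\ast\right).
\]
By \eqref{eq:expdrift} this is at most
\[
M_t\left(-\alpha c'\mathcal G_t'+\alpha b+\lambda\sqrt{\mathcal G_t'}-\rho^\ast\right),
\]
and the Young bound above with $\rho^\ast=\alpha b+\lambda^2/(4\alpha c')$ makes the bracket nonpositive. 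The martingale part is $M_t\,\alpha\sqrt{2\gamma}\,\nabla_v\mathcal G\cdot dB_t$, which is only a local martingale.

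Next, localize with the stopping times $\tau_n:=\inf\{t:\mathcal G_t'\ge n\}$. On $[0,\tau_n]$ all coefficients are bounded: by Assumption~\ref{ass:main} we have $a\le a_{\max}$, and $\nabla U_0$ is bounded on sublevel sets of $\mathcal G$ by \eqref{eq:equiv}. Hence the stopped stochastic integral is a true martingale, and
\[
\E[M_{t\wedge\tau_n}\mid\mathcal F_0]\le M_0=e^{\alpha\mathcal G_0'}.
\]
Since $\mathcal G'\ge0$, we have $e^{\alpha\mathcal G'_{t\wedge\tau_n}}\ge1$. Dropping this factor gives
\[
\E\left[\exp\left(\lambda\int_0^{t\wedge\tau_n}\sqrt{\mathcal G_s'}\,ds-\rho^\ast(t\wedge\tau_n)\right)\Big|\mathcal F_0\right]\le e^{\alpha\mathcal G_0'}.
\]

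To pass to the limit, we need $\tau_n\to\infty$ a.s., i.e.\ non-explosion. This follows from the same drift inequality: applying \eqref{eq:drift} to the stopped process gives
\[
\E[\mathcal G'_{t\wedge\tau_n}]\le \mathcal G_0'+bt,
\]
so $n\,P(\tau_n\le t)\le \mathcal G_0'+bt$ and hence $P(\tau_n\le t)\to0$. The integral $\int_0^{t\wedge\tau_n}\sqrt{\mathcal G_s'}ds$ increases to $\int_0^t\sqrt{\mathcal G_s'}ds$. Since $\rho^\ast\ge0$, we have $e^{-\rho^\ast(t\wedge\tau_n)}\ge e^{-\rho^\ast t}$, so the quantity inside the expectation is bounded below by $e^{-\rho^\ast t}\exp(\lambda\int_0^{t\wedge\tau_n}\sqrt{\mathcal G_s'}ds)$. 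Fatou's lemma (or monotone convergence) then yields \eqref{eq:expmoment}. The unconditional statement follows by taking expectations and using $\E[e^{\alpha\mathcal G_0'}]<\infty$.

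The main obstacle is the localization step. It requires the solution to be well-defined and non-explosive, and the drift computation of $\mathcal A e^{\alpha\mathcal G}$ in \eqref{eq:expid} to be legitimately applied via It\^o's formula. This in turn needs $\mathcal G\in C^2$, i.e.\ $U_0\in C^2$, which is implicit in the Hessian bounds. I would handle this with the standard stopping-time argument above rather than any global integrability, since nothing here bounds $\nabla U_0$ uniformly.
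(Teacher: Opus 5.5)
Your proposal is correct and follows essentially the paper's route: the same process $Z_t=\Psi_t'R_t=e^{\alpha\mathcal G_t'}\exp\left(\lambda\int_0^t\sqrt{\mathcal G_s'}\,ds-\rho^\ast t\right)$, the same completing-the-square bound $\lambda\sqrt{\mathcal G}\le\alpha c'\mathcal G+\lambda^2/(4\alpha c')$ inserted into \eqref{eq:expdrift}, the nonnegative local supermartingale conclusion, and a limit argument (Fatou in the paper, monotone convergence after dropping $e^{\alpha\mathcal G'}\ge1$ in yours). What you add is the localization step the paper only asserts: the paper's proof needs $\tau_n\to\infty$ a.s., and your non-explosion argument via the drift bound \eqref{eq:drift} (read conditionally on $\mathcal F_0$) supplies it.
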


\begin{proof}
Fix $\lambda\ge0$, set
\begin{align}
\Psi_t'=e^{\alpha\mathcal G_t'},
\qquad
R_t:=\exp\left(\lambda\int_0^t\sqrt{\mathcal G_s'}\,ds-\rho^\ast t\right),
\qquad
Z_t:=\Psi_t'R_t\ge0.
\end{align}
Since $R_t$ has finite variation with
\[
dR_t=R_t\left(\lambda\sqrt{\mathcal G_t'}-\rho^\ast\right)\,dt,
\]
and
\[
d\Psi_t'=(\mathcal A\Psi)_t'\,dt+dM_t,
\]
where $M_t$ is a local martingale; cf.\ the
evolution in the proof of Theorem~\ref{thm:contraction}, we have
\begin{align}\label{drift:term:eqn}
  dZ_t=Z_t\left(\frac{(\mathcal A\Psi)_t'}{\Psi_t'}+\lambda\sqrt{\mathcal G_t'}-\rho^\ast\right)\,dt+R_t\,dM_t .
\end{align}
Inserting \eqref{eq:expdrift} as
\[
\frac{(\mathcal A\Psi)_t'}{\Psi_t'}\le-\alpha c'\mathcal G_t'+\alpha b,
\]
and completing the
square,
\[
-\alpha c'\mathcal G_t'+\lambda\sqrt{\mathcal G_t'}\le\frac{\lambda^2}{4\alpha c'},
\]
the drift term in \eqref{drift:term:eqn} can be bounded as:
\[
Z_t\left(\frac{(\mathcal A\Psi)_t'}{\Psi_t'}+\lambda\sqrt{\mathcal G_t'}-\rho^\ast\right)
\le Z_t\left(\alpha b+\frac{\lambda^2}{4\alpha c'}-\rho^\ast\right)=0.
\]
Thus, $Z_t$ is a
nonnegative local supermartingale; with a localizing sequence $(\tau_n)$ and by Fatou's lemma,
$\E[Z_t\mid\mathcal F_0]\le Z_0=e^{\alpha\mathcal G_0'}$. Since $\Psi_t'\ge1$,
\[
  \E\left[e^{\lambda\int_0^t\sqrt{\mathcal G_s'}ds}\,\Big|\,\mathcal F_0\right]
  =e^{\rho^\ast t}\E[R_t\mid\mathcal F_0]\le e^{\rho^\ast t}\E[Z_t\mid\mathcal F_0]\le e^{\alpha\mathcal G_0'}e^{\rho^\ast t}. \qedhere
\]
This completes the proof.
\end{proof}

Now we state the main theorem for the contraction of the continuous-time dynamics.

\begin{theorem}[Contraction]\label{thm:contraction}
Assume Assumption~\ref{ass:main}(A)--(B). With the constants
$\beta_0=2\|S\|/\lambda_{\min}(S)$, $C_1=\alpha_0(L+1)$,
$C_2=2c_F(1+\sqrt L)$ \textup{(}from \eqref{eq:Ebound}\textup{)},
$C_\Gamma=8\gamma(1+\eta_{0}^2L)$ (from \eqref{eq:carre}), $b=\gamma d$,
fix $\alpha\in(0,c/C_\Gamma)$ and set $c'=c-\alpha C_\Gamma>0$,
$\Psi_t':=e^{\alpha\mathcal G_t'}$ with $\mathcal G_t':=\mathcal G(X_t',V_t')$
the Lyapunov function \eqref{eq:Gdef} along the coupling process, and
$\Phi_t:=\mathcal H_t\,\Psi_t'$. Then,
\[
  \frac{d}{dt}\E[\Phi_t]\le-(2\kappa-\delta)\,\E[\Phi_t],
  \qquad
  \delta=\delta(\alpha):=\beta_0C_1+\alpha b+\frac{(\beta_0C_2)^2}{4\alpha c'},
\]
and consequently,
\[
  \lambda_{\min}(S)\,\E\left[\|W_t\|^2\right]\le\E[\Phi_t]\le e^{-(2\kappa-\delta)t}\,\E[\Phi_0].
\]
\end{theorem}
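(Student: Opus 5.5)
We apply Itô's product rule to $\Phi_t=\mathcal H_t\Psi_t'$. Since $\mathcal H_t$ is absolutely continuous in $t$ (by \eqref{eq:diffdyn}, $W_t$ has no martingale part, because the synchronous coupling cancels the noise), there is no cross-variation term. Hence
\[
d\Phi_t=\Psi_t'\,\dot{\mathcal H}_t\,dt+\mathcal H_t\,(\mathcal A\Psi)_t'\,dt+\mathcal H_t\,dM_t,
\]
where $dM_t=\alpha\Psi_t'\sqrt{2\gamma}\,\nabla_v\mathcal G(X_t',V_t')\cdot dB_t$ is the local martingale part of $\Psi_t'$. For the first term we use \eqref{eq:Hineq} together with \eqref{eq:Kt}:
\[
\Psi_t'\dot{\mathcal H}_t\le\left(-2\kappa+\beta_0C_1+\beta_0C_2\sqrt{\mathcal G_t'}\right)\Phi_t .
\]
For the second term we use \eqref{eq:expdrift}, in the form $(\mathcal A\Psi)_t'\le\Psi_t'(-\alpha c'\mathcal G_t'+\alpha b)$, which gives $\mathcal H_t(\mathcal A\Psi)_t'\le(-\alpha c'\mathcal G_t'+\alpha b)\Phi_t$. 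Adding the two and completing the square as in Lemma~\ref{lem:expmoment},
\[
\beta_0C_2\sqrt{\mathcal G_t'}-\alpha c'\mathcal G_t'\le\frac{(\beta_0C_2)^2}{4\alpha c'},
\]
yields the pathwise drift bound $d\Phi_t\le-(2\kappa-\delta)\Phi_t\,dt+\mathcal H_t\,dM_t$, with $\delta$ exactly as stated. The key point is that the weight $\Psi'$ absorbs the unbounded $\sqrt{\mathcal G'}$ part of the perturbation.

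Next, we set $Y_t:=e^{(2\kappa-\delta)t}\Phi_t\ge0$. By the above, $dY_t\le e^{(2\kappa-\delta)t}\mathcal H_t\,dM_t$, so $Y$ is a nonnegative local supermartingale. Localizing with stopping times $\tau_n$ and applying Fatou's lemma gives $\E[Y_t]\le\E[Y_0]=\E[\Phi_0]$, which is the integrated statement $\E[\Phi_t]\le e^{-(2\kappa-\delta)t}\E[\Phi_0]$. To obtain the differential form $\frac{d}{dt}\E[\Phi_t]\le-(2\kappa-\delta)\E[\Phi_t]$, we need the stochastic integral to be a true martingale, i.e.\ $\E\int_0^t\mathcal H_s^2\|dM_s/ds\|\,ds<\infty$ in the appropriate sense. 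Establishing this integrability is the main obstacle.

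We plan to handle the integrability in two steps. First, from \eqref{eq:Hineq} and Grönwall's inequality, $\mathcal H_s\le\mathcal H_0\exp\left(\beta_0C_1 s+\beta_0C_2\int_0^s\sqrt{\mathcal G_r'}\,dr\right)$. Lemma~\ref{lem:expmoment} controls the exponential moments of this factor. Second, $\Psi_s'\|\nabla_v\mathcal G\|$ is controlled by $e^{\alpha'\mathcal G_s'}$ for a slightly larger $\alpha'<c/C_\Gamma$, again via the exponential drift. Hölder's inequality then gives finiteness under moment assumptions on the initial data, namely $\E[\mathcal H_0^{p}e^{p\alpha\mathcal G_0'}]<\infty$. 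If these moments are not available, we fall back on the Fatou argument, which proves the integrated inequality without any such assumption; the differential form is then interpreted in the integrated (Grönwall) sense.

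Finally, the lower bound follows from $\Psi_t'\ge1$ (because $\mathcal G\ge0$ by Lemma~\ref{lem:drift}) and $\mathcal H_t\ge\lambda_{\min}(S)\|W_t\|^2$, which together give $\lambda_{\min}(S)\E\|W_t\|^2\le\E[\Phi_t]$.
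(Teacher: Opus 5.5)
Your proposal is correct and follows the paper's proof essentially step for step: the product rule for $\mathcal H_t\Psi_t'$ with no cross-variation term, the two inputs \eqref{eq:Hineq}--\eqref{eq:Kt} and \eqref{eq:expdrift}, and completing the square to absorb $\beta_0C_2\sqrt{\mathcal G_t'}$ into $-\alpha c'\mathcal G_t'$. The only difference is how you treat $\int\mathcal H_t\,dM_t$: the paper asserts it is a true martingale (citing Lemma~\ref{lem:expmoment}) and then differentiates $\E[\Phi_t]$, whereas your nonnegative-local-supermartingale/Fatou argument is the one the paper itself uses in Lemma~\ref{lem:cont-step}, which recovers this theorem at $p=1$; run conditionally from any time $s$, it shows that $t\mapsto e^{(2\kappa-\delta)t}\E[\Phi_t]$ is nonincreasing, which already gives the differential inequality in the upper-Dini sense, so the H\"older step (which would need exponential moments above level $\alpha$) is unnecessary.
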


\begin{proof}
First of all, by \eqref{eq:equiv}, $\mathcal G\ge\frac12\bar{\mathcal G}\ge0$, so $\Psi_t'\ge1$,
$\mathcal H_t\ge0$, $\Phi_t\ge0$.

\textbf{Synchronous coupling.} Synchronous coupling drives both copies by the same Brownian
motion with identical additive velocity noise, so the increments cancel in the
difference \eqref{eq:diffdyn} and $W_t$ solves an ODE; hence $\mathcal H_t$ is
finite variation, $d\mathcal H_t=\dot{\mathcal H}_t\,dt$.

\textbf{Evolution of $\Phi_t$.}
By Dynkin's formula,
\[
d\Psi_t'=(\mathcal A\Psi)_t'\,dt+dM_t,
\]
with
$(\mathcal A\Psi)_t'=(\mathcal A e^{\alpha\mathcal G})(X_t',V_t')$ and $M_t$ a local
martingale. As $\mathcal H_t$ is finite variation, the product rule has no
cross-variation term, and thus
\[
d\Phi_t=\left(\Psi_t'\dot{\mathcal H}_t+\mathcal H_t(\mathcal A\Psi)_t'\right)dt+\mathcal H_t\,dM_t.
\]
Taking expectations (the stochastic integral is a true martingale by
Lemma~\ref{lem:expmoment}),
\[
  \frac{d}{dt}\E[\Phi_t]=\E[\Psi_t'\dot{\mathcal H}_t]+\E[\mathcal H_t(\mathcal A\Psi)_t'].
\]

\textbf{Two inputs.}
Multiplying the drift inequality \eqref{eq:Hineq}
\textup{(}derived above using Proposition~\ref{prop:linear} for the $M_0(H_t)$
part\textup{)} by $\Psi_t'\ge0$ gives
\[
\Psi_t'\dot{\mathcal H}_t\le(-2\kappa+K_t)\Phi_t,
\]
while \eqref{eq:expdrift} at the
primed point gives
\[
\frac{(\mathcal A\Psi)_t'}{\Psi_t'}\le-\alpha c'\mathcal G_t'+\alpha b.
\]
Since $\Phi_t\ge0$,
\[
  \frac{d}{dt}\E[\Phi_t]\le\E\!\left[\Phi_t\left(-2\kappa+K_t+\frac{(\mathcal A\Psi)_t'}{\Psi_t'}\right)\right].
\]

\textbf{Closing the nonlinearity.}
With $K_t\le\beta_0C_1+\beta_0C_2\sqrt{\mathcal G_t'}$,
and the inequality by
completing the square,
\[
  \beta_0C_2\,s-\alpha c'\,s^2
  =-\alpha c'\left(s-\frac{\beta_0C_2}{2\alpha c'}\right)^2+\frac{(\beta_0C_2)^2}{4\alpha c'}
  \le\frac{(\beta_0C_2)^2}{4\alpha c'},
\]
applying to $s:=\sqrt{\mathcal G_t'}\ge0$, we get
\[
-2\kappa+K_t+\frac{(\mathcal A\Psi)_t'}{\Psi_t'}\le-2\kappa+\beta_0C_1+\alpha b+\frac{(\beta_0C_2)^2}{4\alpha c'}=-(2\kappa-\delta),
\]
which yields that
\[
\frac{d}{dt}\E[\Phi_t]\le-(2\kappa-\delta)\E[\Phi_t].
\]

\textbf{Conclusion.} Gr\"onwall's inequality gives
\[
\E[\Phi_t]\le e^{-(2\kappa-\delta)t}\E[\Phi_0],
\]
and $\Psi_t'\ge1$, $S\succ0$ give
\[
\Phi_t\ge\mathcal H_t\ge\lambda_{\min}(S)\|W_t\|^2,
\]
which proves the claim.
The proof is complete.
\end{proof}

In Theorem~\ref{thm:contraction}, since $C_1,C_2\to0$ as $(\omega_0,\omega_1)\to0$, the optimized
deficit satisfies $\delta_\star:=\inf_{\alpha\in(0,c/C_\Gamma)}\delta(\alpha)\to0$;
hence for sufficiently small anchoring mismatch $\delta_\star<2\kappa$ and the
coupled difference contracts in $\mathcal W_2$ at rate
$\frac12(2\kappa-\delta_\star)>0$. A rigorous quantitative form of this limit is
given in Proposition~\ref{prop:smallmismatch} below.

\begin{proposition}[Small-mismatch rate]\label{prop:smallmismatch}
With the notation of Theorem~\ref{thm:contraction}, the choice
$\alpha_\circ:=\dfrac{\beta_0C_2}{2\sqrt{b\,c'(\alpha_\circ)}}<c/(2C_\Gamma)$, we have
\begin{equation}\label{eq:smallmismatch}
\delta(\alpha_\circ)=\beta_0C_1+\beta_0C_2\sqrt{b/c'(\alpha_\circ)}
  \le\beta_0C_1+\beta_0C_2\sqrt{2b/c}.
\end{equation}
In particular $\delta_\star\le\beta_0C_1+\beta_0C_2\sqrt{2b/c}\to0$ as
$(\omega_0,\omega_1)\to0$, so for all sufficiently small mismatch $\delta_\star<2\kappa$.
\end{proposition}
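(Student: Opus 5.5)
The argument substitutes the proposed $\alpha_\circ$ into the formula for $\delta(\alpha)$ from Theorem~\ref{thm:contraction}, then checks that $\alpha_\circ$ is well defined and admissible. Recall $\delta(\alpha)=\beta_0C_1+\alpha b+\frac{(\beta_0C_2)^2}{4\alpha c'(\alpha)}$ with $c'(\alpha)=c-\alpha C_\Gamma$.

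\textbf{Step 1: well-posedness of $\alpha_\circ$.} The definition of $\alpha_\circ$ is implicit, since $c'$ depends on $\alpha$. Squaring gives the fixed-point equation $\alpha^2(c-\alpha C_\Gamma)=(\beta_0C_2)^2/(4b)$. The map $f(\alpha):=\alpha^2(c-\alpha C_\Gamma)$ is continuous on $[0,c/(2C_\Gamma)]$, strictly increasing there (its derivative is $\alpha(2c-3\alpha C_\Gamma)>0$), and satisfies $f(0)=0$ and $f(c/(2C_\Gamma))=c^3/(8C_\Gamma^2)$. So whenever
\[
(\beta_0C_2)^2/(4b)<c^3/(8C_\Gamma^2),
\]
there is a unique root $\alpha_\circ\in(0,c/(2C_\Gamma))$. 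This condition holds for small mismatch because $C_2\to0$ while $b,c,C_\Gamma$ do not depend on $(\omega_0,\omega_1)$ in the relevant limit. Strictly, $c$ depends on $a_{\min},a_{\max}$, but these tend to $1$, so $c$ stays bounded below. I would state this as the standing smallness hypothesis implicit in the proposition.

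\textbf{Step 2: evaluating $\delta(\alpha_\circ)$.} From $\alpha_\circ=\beta_0C_2/(2\sqrt{bc'})$ we get $\alpha_\circ b=\tfrac{\beta_0C_2}{2}\sqrt{b/c'}$. Also $\frac{(\beta_0C_2)^2}{4\alpha_\circ c'}=\frac{(\beta_0C_2)^2\cdot 2\sqrt{bc'}}{4\beta_0C_2\,c'}=\tfrac{\beta_0C_2}{2}\sqrt{b/c'}$. Adding these gives the equality $\delta(\alpha_\circ)=\beta_0C_1+\beta_0C_2\sqrt{b/c'(\alpha_\circ)}$. This is the AM--GM balance point of the two $\alpha$-dependent terms.

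\textbf{Step 3: the inequality.} Since $\alpha_\circ<c/(2C_\Gamma)$, we have $c'(\alpha_\circ)=c-\alpha_\circ C_\Gamma>c/2$, hence $\sqrt{b/c'}\le\sqrt{2b/c}$, which is \eqref{eq:smallmismatch}. Because $\alpha_\circ$ lies in the admissible range $(0,c/C_\Gamma)$, we get $\delta_\star\le\delta(\alpha_\circ)$. Finally, $C_1=\alpha_0(L+1)\to0$ and $C_2=2c_F(1+\sqrt L)\to0$ as $(\omega_0,\omega_1)\to0$. Meanwhile $\beta_0$ and $\kappa$ are independent of the mismatch, and $b,c$ stay bounded away from degenerate values. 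So the bound tends to $0$, and eventually $\delta_\star<2\kappa$.

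The main obstacle is Step 1: the existence of $\alpha_\circ$ under the constraint $\alpha_\circ<c/(2C_\Gamma)$, which requires the quantitative smallness of $C_2$ made explicit above. I would handle it by the monotonicity argument for $f$ given there. The remaining steps are algebra.
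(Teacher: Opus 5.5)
Your proof is correct and follows the paper's argument. You substitute the AM--GM balance point $\alpha_\circ$ into $\delta(\alpha)$ to obtain $\beta_0C_1+\beta_0C_2\sqrt{b/c'}$, then use $\alpha_\circ<c/(2C_\Gamma)\Rightarrow c'(\alpha_\circ)>c/2$ together with $C_1,C_2\to0$.

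Your Step 1 goes beyond the paper's informal treatment: it shows that $\alpha\mapsto\alpha^2(c-\alpha C_\Gamma)$ is increasing on $[0,c/(2C_\Gamma)]$, so the implicitly defined $\alpha_\circ$ exists under the explicit condition $(\beta_0C_2)^2/(4b)<c^3/(8C_\Gamma^2)$, whereas the paper treats $c'$ as fixed and simply notes that $\alpha_\circ\to0$ as $C_2\to0$.
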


\begin{proof}
For fixed $c'$, by AM--GM inequality, $\alpha\mapsto\alpha b+\frac{(\beta_0C_2)^2}{4\alpha c'}$ is minimized
at $\alpha_\circ=\beta_0C_2/(2\sqrt{bc'})$ with value $\beta_0C_2\sqrt{b/c'}$. As $C_2\to0$, $\alpha_\circ\to0$. Thus, for sufficiently small $C_{2}$,
we have $\alpha_\circ<c/(2C_\Gamma)$ and $c'(\alpha_\circ)=c-\alpha_\circ C_\Gamma>c/2$;
substituting the lower bound $c'(\alpha_\circ)>c/2$ gives the estimate \eqref{eq:smallmismatch}, and
$C_1,C_2\to0$ gives $\delta_\star\to0$.
This completes the proof.
\end{proof}

Theorem~\ref{thm:contraction} yields the following $\mathcal W_2$ convergence
guarantee for UALD \eqref{eq:UALD:1}-\eqref{eq:UALD:2} to its invariant law $\pi$.

\begin{corollary}\label{cor:Wasserstein}
Under the assumptions of Theorem~\ref{thm:contraction}, if
$\E_{(X_0',V_0')\sim\pi}[\Phi_0]<\infty$ then for all $t\ge0$,
\[
  \mathcal W_2\left(\mathrm{Law}(X_t,V_t),\pi\right)
  \le\left(\frac{\E_{(X_0',V_0')\sim\pi}[\Phi_0]}{\lambda_{\min}(S)}\right)^{1/2}e^{-(\kappa-\delta/2)t}.
\]
\end{corollary}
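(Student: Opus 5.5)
The proof couples the process $(X_t,V_t)$ with a stationary copy $(X_t',V_t')$, started from $(X_0',V_0')\sim\pi$ and driven by the same Brownian motion $B$, exactly as in the synchronous coupling \eqref{eq:UALD:coupled:1}--\eqref{eq:UALD:coupled:2}. The initial pair $(X_0,V_0)$ and $(X_0',V_0')$ is taken as a coupling of $\mathrm{Law}(X_0,V_0)$ and $\pi$, with the prescribed initial law for $(X_0,V_0)$. Specifically, $(X_0',V_0')\sim\pi$ is taken so that $\E[\Phi_0]$ is the quantity appearing in the statement.

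By Theorem~\ref{thm:invariant}, $\pi$ is invariant for UALD, so $\mathrm{Law}(X_t',V_t')=\pi$ for every $t\ge0$. Consequently $(X_t,V_t)$ and $(X_t',V_t')$ form a coupling of $\mathrm{Law}(X_t,V_t)$ and $\pi$. By the definition of $\mathcal W_2$ (applied on $\R^{2d}$ with the Euclidean norm of $(v,x)$), this gives
\[
\mathcal W_2^2\left(\mathrm{Law}(X_t,V_t),\pi\right)\le \E\left[\|W_t\|^2\right].
\]

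Next I would apply Theorem~\ref{thm:contraction}, which requires only Assumption~\ref{ass:main} and the coupling structure:
\[
\lambda_{\min}(S)\,\E\left[\|W_t\|^2\right]\le e^{-(2\kappa-\delta)t}\,\E[\Phi_0].
\]
Dividing by $\lambda_{\min}(S)>0$ and taking square roots turns the exponent $(2\kappa-\delta)t$ into $(\kappa-\delta/2)t$, which is exactly the claimed bound.

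The main technical point is making sure Theorem~\ref{thm:contraction} is legitimately applicable, in particular the martingale step that needs finiteness. I would handle it through the hypothesis $\E_{(X_0',V_0')\sim\pi}[\Phi_0]<\infty$ together with Lemma~\ref{lem:expmoment}, which needs $\E[e^{\alpha\mathcal G_0'}]<\infty$ under $\pi$. The hypothesis is interpreted with the initial law of $(X_0,V_0)$ fixed and the expectation over the coupling. Should the true-martingale property fail, I would fall back on the localization and Fatou argument used in Lemma~\ref{lem:expmoment}. That argument already yields $\E[\Phi_t]\le e^{-(2\kappa-\delta)t}\E[\Phi_0]$ through a nonnegative local supermartingale, namely $e^{(2\kappa-\delta)t}\Phi_t$, so the conclusion survives either way.
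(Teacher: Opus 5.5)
Your proposal is correct and follows the paper's proof: couple with a stationary copy started from $\pi$, use invariance (Theorem~\ref{thm:invariant}) so that $(X_t,V_t),(X_t',V_t')$ is a coupling of $\mathrm{Law}(X_t,V_t)$ and $\pi$, bound $\mathcal W_2^2$ by $\E\|W_t\|^2$, and apply Theorem~\ref{thm:contraction} before taking square roots. Your fallback for the integrability step, which treats $e^{(2\kappa-\delta)t}\Phi_t$ as a nonnegative local supermartingale and uses Fatou, is exactly the argument the paper later gives in Lemma~\ref{lem:cont-step} with $p=1$, $s=0$, and it is more robust than the true-martingale claim made in the proof of Theorem~\ref{thm:contraction}.
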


\begin{proof}
Take $(X_0',V_0')\sim\pi$, so that $\mathrm{Law}(X_t',V_t')=\pi$ for all $t\geq 0$. By the
definition of 2-Wasserstein distance and applying Theorem~\ref{thm:contraction},
\[
  \left(\mathcal W_2\left(\mathrm{Law}(X_t,V_t),\pi\right)\right)^2
  \le\E_{(X_0',V_0')\sim\pi}\|W_t\|^2
  \le\frac{\E_{(X_0',V_0')\sim\pi}[\Phi_0]}{\lambda_{\min}(S)}\,e^{-(2\kappa-\delta)t}.
\]
This completes the proof.
\end{proof}

\paragraph{Sampling heavy-tailed distributions.}

Next, we show that UALD can sample heavy-tailed distributions
just as ALD in \cite{GNZZ2025}.
Following \cite{GNZZ2025}, let
$q:\mathbb{R}^{d}\rightarrow[1,\infty)$ be $C^{1}$ with
$\int_{\mathbb{R}^{d}}\left(q(x)\right)^{-1-\beta}dx<\infty$ for some $\beta>0$, and take
\[
  U_{0}(x):=\beta\log q(x),\qquad U(x):=(\beta+1)\log q(x),
  \qquad\text{so that}\qquad
  \pi_{X}(x)\propto \left(q(x)\right)^{-(1+\beta)},
\]
which has polynomial tails. Since $e^{U-U_{0}}=q$ and
$\nabla U_{0}=\beta\nabla q/q$, the UALD
\eqref{eq:VL:modified:2}--\eqref{eq:XL:modified:2} becomes
\begin{equation}\label{eq:UALD:heavy}
  dV_{t}=-\gamma V_{t}\,dt-\beta\nabla q(X_{t})\,dt+\sqrt{2\gamma}\,dB_{t},
  \qquad
  dX_{t}=q(X_{t})\,V_{t}\,dt .
\end{equation}
Note that the true force $-\nabla U=-(\beta+1)\nabla q/q$ \emph{vanishes} in
the tails---the classical reason why the classical kinetic Langevin dynamics
\eqref{eq:VL}--\eqref{eq:XL} fails on heavy-tailed targets---whereas the
anchored force $-\beta\nabla q$ does not degenerate; for the multivariate
Student's~$t$ target of \cite[Example 2]{GNZZ2025}, i.e.
$q(x)=1+\frac1\nu(x-\mu_{t})^{\top}\Sigma^{-1}(x-\mu_{t})$ and
$\beta=\frac{d+\nu}{2}-1$, it is \emph{linear},
$-\beta\nabla q(x)=-\frac{2\beta}{\nu}\Sigma^{-1}(x-\mu_{t})$.

We work under the following
assumption, which parallels the class of heavy-tailed targets treated in
\cite[Corollary 13]{GNZZ2025}.

\begin{assumption}[Strongly convex anchoring of a heavy-tailed target]
\label{ass:HQ}
Let $q\in C^{2}(\R^{d})$ satisfy $\min_{x}q(x)=1$
\textup{(}a normalization, as multiplying $q$ by a constant only rescales
time in \eqref{eq:UALD:heavy}\textup{)} and
\[
  m_{q}\,\Id\ \preceq\ \nabla^{2}q(x)\ \preceq\ L_{q}\,\Id,
  \qquad x\in\R^{d},
\]
for some $0<m_{q}\le L_{q}<\infty$. Let $\beta>1$ satisfy
\begin{equation}\label{eq:HQcond}
  \beta\ >\ 1+\frac{(d+4)L_{q}}{2m_{q}},
\end{equation}
and set $U_{0}:=\beta\log q$, $U:=(\beta+1)\log q$, so that
$\pi_{X}\propto q^{-(1+\beta)}$ has polynomial tails of order $2(1+\beta)$
\textup{(}integrable, since \eqref{eq:HQcond} implies
$2(1+\beta)>d$\textup{)}.
\end{assumption}

Two elementary consequences of Assumption~\ref{ass:HQ} will be used
repeatedly: by the Polyak--{\L}ojasiewicz inequality and $L_{q}$-smoothness
of $q$,
\begin{equation}\label{eq:qbounds:gen}
  2m_{q}\left(q-1\right)\ \le\ \|\nabla q\|^{2}\ \le\ 2L_{q}\left(q-1\right)
  \ \le\ c_{0}^{2}\,q,
  \qquad c_{0}^{2}:=2L_{q},
  \qquad
  d\,m_{q}\le\Delta q\le d\,L_{q} .
\end{equation}
For the Student's~$t$ target, $q(x)=1+\|x\|^{2}/\nu$ and
$\beta=\frac{d+\nu}2-1$, so that $m_{q}=L_{q}=\frac2\nu$ and
\eqref{eq:HQcond} reads $\frac{d+\nu}2-1>1+\frac{d+4}2$, i.e.\ $\nu>8$,
a dimension-free condition. We further define
\begin{equation}\label{eq:HQconst}
  \kappa_{0}:=2(\beta-1)m_{q}-dL_{q}>0,
  \qquad
  \theta:=1-\frac{2c_{0}^{2}}{\kappa_{0}}
  =1-\frac{4L_{q}}{\kappa_{0}}\in(0,1),
\end{equation}
where positivity of $\kappa_{0}$ and $\theta$ is exactly
\eqref{eq:HQcond}. Throughout, $\mu:=\pi_{X}$ with density
$\rho\propto q^{-(1+\beta)}$, $M:=\mathcal N(0,\Id)$, $\pi=\mu\otimes M$;
norms without subscript are those of $L^{2}(\pi)$, and norms of functions
of $x$ alone those of $L^{2}(\mu)$. Let
$\Pi f:=\int_{\mathbb{R}^{d}}f(\cdot,v)M(dv)$, and
\[
  T:=q(x)\left[v\cdot\nabla_{x}-\nabla U_{0}(x)\cdot\nabla_{v}\right],
  \qquad
  S:=\gamma\left(\Delta_{v}-v\cdot\nabla_{v}\right),
\]
so that the generator of \eqref{eq:UALD:heavy} is $\mathcal A=T+S$.

Our proof of exponential convergence follows the
$L^{2}$-hypocoercivity method of Dolbeault, Mouhot and Schmeiser
\cite{DMS2015}, which rests on three ingredients: the algebraic structure
of the splitting $\mathcal A=T+S$ of the generator into a transport part
and a collision part (microscopic coercivity, the identity $\Pi T\Pi=0$,
and the properties of the auxiliary operator $A$), a macroscopic
coercivity, i.e.\ spectral gap, estimate for the position marginal, and a
weighted regularity estimate. We establish these in
Lemmas~\ref{lem:H4structure}, \ref{lem:H4macro} and~\ref{lem:H4reg}
respectively, beginning with the algebraic structure.

\begin{lemma}[Structure]\label{lem:H4structure}
Under Assumption~\ref{ass:HQ}, the following hold on the core of smooth
compactly supported functions.
\begin{itemize}
\item[(i)] $T$ is anti-symmetric and $S$ is symmetric in $L^{2}(\pi)$, with
$-\langle Sf,f\rangle=\gamma\|\nabla_{v}f\|^{2}\ge\gamma\|(1-\Pi)f\|^{2}$.
\item[(ii)] $\Pi T\Pi=0$.
\item[(iii)] The operator $\Lambda:=(T\Pi)^{\ast}(T\Pi)$ acts on functions
$u=u(x)$ as
\[
  \Lambda u=-q^{2}\Delta u+(\beta-1)\,q\,\nabla q\cdot\nabla u
  =-\frac1\rho\nabla\cdot\left(q^{2}\rho\,\nabla u\right),
\]
is symmetric and nonnegative on $L^{2}(\mu)$ with
$\langle\Lambda u,u\rangle_{L^{2}(\mu)}=\int_{\mathbb{R}^{d}} q^{2}\|\nabla u\|^{2}d\mu$,
and $-\Pi TTu=\Lambda u$.
\item[(iv)] With $A:=(1+\Lambda)^{-1}(T\Pi)^{\ast}$ one has
$Af=A(1-\Pi)f\in\Pi L^{2}(\pi)$, $TAf\in(1-\Pi)L^{2}(\pi)$, and
\[
  \|Af\|\le\frac12\|(1-\Pi)f\|,
  \qquad
  \|TAf\|\le\|(1-\Pi)f\| .
\]
\item[(v)] $AS=-\gamma A$.
\end{itemize}
\end{lemma}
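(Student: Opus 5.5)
The plan is to verify each item by direct computation against the product measure $\pi=\mu\otimes M$, with $\rho\propto q^{-(1+\beta)}$. Throughout we use $\nabla U_0=\beta\nabla q/q$, which gives $q\nabla U_0=\beta\nabla q$ and $\nabla\rho/\rho=-(1+\beta)\nabla q/q$.

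For (i), write $T=q\,v\cdot\nabla_x-\beta\nabla q\cdot\nabla_v$ and integrate by parts against $\rho(x)e^{-\|v\|^2/2}$.
\begin{itemize}
\item The $x$-part produces the term $-\nabla\cdot(q v\rho)\,e^{-\|v\|^2/2}=-\rho\,v\cdot\left(\nabla q-(1+\beta)\nabla q\right)e^{-\|v\|^2/2}=\beta\rho\,v\cdot\nabla q\,e^{-\|v\|^2/2}$.
\item The $v$-part produces $\beta\nabla q\cdot\nabla_v e^{-\|v\|^2/2}=-\beta v\cdot\nabla q\, e^{-\|v\|^2/2}$ times $\rho$.
\end{itemize}
These cancel, so $\langle Tf,g\rangle=-\langle f,Tg\rangle$. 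This is the $L^2$ analogue of the invariance computation in Theorem~\ref{thm:invariant}. Symmetry of the Ornstein--Uhlenbeck operator $S$ in $L^2(M)$ is standard, and $-\langle Sf,f\rangle=\gamma\|\nabla_vf\|^2$ follows by integration by parts. The inequality $\|\nabla_v f\|^2\ge\|(1-\Pi)f\|^2$ is the Gaussian Poincar\'e inequality applied in $v$ for each fixed $x$.

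For (ii), note that for $u=u(x)$ we have $T\Pi u=q\,v\cdot\nabla u$, which is odd in $v$, so $\Pi T\Pi u=0$.

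For (iii), apply $T$ once more:
\[
TTu=q\,v\cdot\nabla_x(q\,v\cdot\nabla u)-\beta\nabla q\cdot q\nabla u .
\]
Averaging over $v$ with $\int v_iv_j\,dM=\delta_{ij}$ gives
\[
\Pi TTu=q\nabla q\cdot\nabla u+q^2\Delta u-\beta q\nabla q\cdot\nabla u ,
\]
so $-\Pi TTu=-q^2\Delta u+(\beta-1)q\nabla q\cdot\nabla u$. Antisymmetry of $T$ gives $(T\Pi)^*=-\Pi T$, hence $\Lambda=(T\Pi)^*T\Pi=-\Pi TT\Pi$. The divergence form follows from expanding $-\rho^{-1}\nabla\cdot(q^2\rho\nabla u)$ with $\nabla(q^2\rho)/\rho=2q\nabla q-(1+\beta)q\nabla q=(1-\beta)q\nabla q$, which matches. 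The quadratic form identity then holds because $\langle\Lambda u,u\rangle=\|T\Pi u\|^2=\int q^2\|\nabla u\|^2\,d\mu$.

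For (iv), use the standard DMS argument. Since $(T\Pi)^*=-\Pi T$ maps into $\Pi L^2$, the operator $A$ takes values in $\Pi L^2$. Since $\Pi T\Pi=0$, we have $A\Pi f=0$, so $Af=A(1-\Pi)f$. The range claim $TAf\in(1-\Pi)L^2$ follows from (ii) because $Af$ is a function of $x$ alone. For the bounds, take $g=Af$, which solves $(1+\Lambda)g=(T\Pi)^*f$. Then
\[
\|g\|^2+\|T\Pi g\|^2=\langle f,T\Pi g\rangle\le\|(1-\Pi)f\|\,\|T\Pi g\| ,
\]
where we used that $T\Pi g$ lies in $(1-\Pi)L^2$. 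Young's inequality now gives $\|g\|\le\frac12\|(1-\Pi)f\|$ and $\|TAf\|=\|T\Pi g\|\le\|(1-\Pi)f\|$.

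For (v), note $(T\Pi)^*S=-\Pi TS$. On functions linear in $v$, $S$ acts as $-\gamma$, and $\Pi T$ only sees the degree-one Hermite component in $v$. So compute $\Pi T S f=\Pi T(-\gamma P_1f)=-\gamma\Pi Tf$, where $P_1$ is the projection onto the degree-one Hermite part. This uses that $\Pi T h$ for $h$ of Hermite degree $\neq1$ vanishes; by antisymmetry and self-adjointness of $S$, it equals $S$ applied through $T\Pi$, giving $\Pi TS=(S T\Pi)^*$ with $ST\Pi u=-\gamma T\Pi u$. Hence $AS=-\gamma A$.

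The main obstacle is not the algebra but justifying these identities on a core: $q$ grows quadratically, so $T$ has unbounded coefficients, and $(1+\Lambda)^{-1}$ must be well defined. I would first check that $\Lambda$ is essentially self-adjoint, or at least use its Friedrichs extension, which is nonnegative so that $(1+\Lambda)^{-1}$ is a bounded operator with norm at most $1$. The bounds in (iv) then extend by density.
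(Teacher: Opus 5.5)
Your proposal is correct and follows essentially the same route as the paper: integration by parts against $\pi$ for (i), oddness in $v$ for (ii), averaging $TTu$ over $v$ for (iii), the energy identity $\|u\|^{2}+\|T\Pi u\|^{2}=\langle(1-\Pi)f,T\Pi u\rangle$ for (iv), and for (v) the observation that $\Pi T$ depends only on the first $v$-moment (the degree-one Hermite part), on which $S$ acts as $-\gamma$. One small slip is in the parenthetical remark in (v): self-adjointness of $S$ gives $(T\Pi)^{\ast}S=-\Pi TS=(ST\Pi)^{\ast}$, not $\Pi TS=(ST\Pi)^{\ast}$, but your main computation $\Pi TSf=-\gamma\,\Pi Tf$ and the conclusion $AS=-\gamma A$ are right.
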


\begin{proof}
(i) Anti-symmetry of $T$ can be easily checked, valid for any $C^{1}$ function $q>0$; the
statement for $S$ is Gaussian integration by parts plus the Gaussian
Poincar\'e inequality for each fixed $x$.

(ii) $T\Pi f=q\,v\cdot\nabla(\Pi f)$ is odd in $v$.

(iii) For $u=u(x)$,
\[
  TTu=q\left(v\cdot\nabla q\right)\left(v\cdot\nabla u\right)
  +q^{2}\,v^{\top}\nabla^{2}u\,v-q^{2}\nabla U_{0}\cdot\nabla u ,
\]
and averaging over $v$ with $\int_{\mathbb{R}^{d}} v_{i}v_{j}M(dv)=\delta_{ij}$ and
$\nabla U_{0}=\beta\nabla q/q$ gives the formula; the divergence form is a
direct computation with $\rho\propto q^{-(1+\beta)}$.

(iv) Since $\Pi T\Pi=0$ we have $(T\Pi)^{\ast}=(T\Pi)^{\ast}(1-\Pi)$;
$u:=Af$ solves $u+\Lambda u=(T\Pi)^{\ast}(1-\Pi)f$.
$u=Af=(1+\Lambda)^{-1}(T\Pi)^{\ast}(1-\Pi)f$
is a function of $x$ alone, so that $u=\Pi u$, which gives
$Af=A(1-\Pi)f\in\Pi L^{2}(\pi)$, and $TAf=T\Pi u$ is odd in $v$ by (ii),
so that $TAf\in(1-\Pi)L^{2}(\pi)$. Taking the $L^{2}(\pi)$ inner product
of the identity $u+\Lambda u=(T\Pi)^{\ast}(1-\Pi)f$ with $u$, we
obtain
\begin{align}\label{above:eqn}
  \|u\|^{2}+\|T\Pi u\|^{2}
  =\left\langle(T\Pi)^{\ast}(1-\Pi)f,u\right\rangle
  =\left\langle(1-\Pi)f,T\Pi u\right\rangle
  \le\|(1-\Pi)f\|\,\|T\Pi u\|,
\end{align}
where we used
$\langle\Lambda u,u\rangle=\langle(T\Pi)^{\ast}(T\Pi)u,u\rangle
=\|T\Pi u\|^{2}$ on the left-hand side of \eqref{above:eqn} and the definition of the adjoint
together with the Cauchy--Schwarz inequality on the right-hand side of \eqref{above:eqn}.
Dropping $\|u\|^{2}\ge0$ and dividing by $\|T\Pi u\|$ (the bound being
trivial if $T\Pi u=0$) gives $\|TAf\|=\|T\Pi u\|\le\|(1-\Pi)f\|$, while
keeping $\|u\|^{2}$ and using $xy-y^{2}\le\frac14x^{2}$ for $x,y\ge0$,
with $x=\|(1-\Pi)f\|$ and $y=\|T\Pi u\|$, gives
\[
\|Af\|^{2}=\|u\|^{2}\le\|(1-\Pi)f\|\,\|T\Pi u\|-\|T\Pi u\|^{2}
\le\frac14\|(1-\Pi)f\|^{2}.
\]

(v) With $m(x):=\int_{\mathbb{R}^{d}} vg(x,v)M(dv)$ one has
$\Pi Tg=q[\nabla\cdot m-\nabla U_{0}\cdot m]$, and since
$Sv_{i}=-\gamma v_{i}$, the first moment of $Sg$ equals $-\gamma m$; hence
$\Pi TS=-\gamma\Pi T$ and $AS=-\gamma A$.
\end{proof}

The second ingredient is the macroscopic coercivity, i.e.\ a
spectral gap for the operator $\Lambda$ of
Lemma~\ref{lem:H4structure}(iii), which amounts to a Poincar\'e
inequality for $\mu$ with the weight $q^{2}$ in the Dirichlet form. Since
$\mu$ has only polynomial tails, we establish this inequality through the
Lyapunov-function criterion for Poincar\'e inequalities of
\cite{BBCG2008}, for which the weight $q^{2}$ provides the required
confinement.

\begin{lemma}[Macroscopic coercivity]\label{lem:H4macro}
Under Assumption~\ref{ass:HQ} there is $\lambda_{M}>0$ such that for every
$u\in L^{2}(\mu)$ with $\int_{\mathbb{R}^{d}} u\,d\mu=0$,
\[
  \|T\Pi u\|^{2}=\int_{\R^{d}}q^{2}\|\nabla u\|^{2}\,d\mu
  \ \ge\ \lambda_{M}\int_{\R^{d}}u^{2}\,d\mu .
\]
\end{lemma}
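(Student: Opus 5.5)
We prove the weighted Poincaré inequality with the Lyapunov-function criterion of \cite{BBCG2008}, applied to the symmetric diffusion operator $-\Lambda=\frac1\rho\nabla\cdot(q^{2}\rho\nabla\,\cdot)$. This operator is reversible with respect to $\mu$, and its Dirichlet form is exactly $\int q^{2}\|\nabla u\|^{2}d\mu$. The criterion asks for a function $W\ge1$, constants $\theta_W>0$, $b_W\ge0$ and a ball $B_R$ such that
\[
-\Lambda W\le-\theta_W W+b_W\mathbf 1_{B_R}.
\]
It also needs a local Poincaré inequality for the Dirichlet form on $B_R$. The local inequality is automatic: on $B_R$ the weight satisfies $q^{2}\ge1$ and the density $\rho$ is bounded above and below, so the classical Poincaré inequality on a ball applies. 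Once both ingredients are in place, \cite{BBCG2008} yields $\lambda_M>0$, depending on $\theta_W,b_W,R$ and the local constant.

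For the Lyapunov function we take $W=q^{s}$ with a small exponent $s>0$, to be tuned. Using $\Lambda u=-q^{2}\Delta u+(\beta-1)q\nabla q\cdot\nabla u$ from Lemma~\ref{lem:H4structure}(iii), we compute
\[
-\Lambda q^{s}=q^{2}\left(sq^{s-1}\Delta q+s(s-1)q^{s-2}\|\nabla q\|^{2}\right)-(\beta-1)sq^{s}\|\nabla q\|^{2}.
\]
This simplifies to
\[
-\Lambda q^{s}=s\,q^{s}\left[q\,\Delta q-(\beta-s)\|\nabla q\|^{2}\right].
\]
Next we insert the bounds \eqref{eq:qbounds:gen}, namely $\Delta q\le dL_q$ and $\|\nabla q\|^{2}\ge2m_q(q-1)$. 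This gives
\[
q\Delta q-(\beta-s)\|\nabla q\|^{2}\le q\left(dL_q-2(\beta-s)m_q\right)+2(\beta-s)m_q .
\]
For small $s$, the coefficient $dL_q-2(\beta-s)m_q$ is negative, and it approaches $-\kappa_0-2m_q$ as $s\to0$, where $\kappa_0=2(\beta-1)m_q-dL_q>0$ by \eqref{eq:HQconst}. Hence, using only $\kappa_0>0$, which is weaker than \eqref{eq:HQcond}, we get the following. For large $q$,
\[
-\Lambda q^{s}\le-c\,q^{s+1}\le-c\,q^{s},
\]
and on the bounded set where $q$ is small the constant term is absorbed into $b_W\mathbf 1_{B_R}$. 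The sublevel sets of $q$ are bounded because $q$ is strongly convex.

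The main obstacle is rigor rather than computation. First, $W$ is not in $L^{2}(\mu)$ in general, since $\mu$ has polynomial tails and $q^{s}$ grows. The argument of \cite{BBCG2008} (via $\int\frac{-\Lambda W}{W}u^{2}d\mu\le\int q^{2}\|\nabla u\|^{2}d\mu$, proved by integration by parts for compactly supported $u$) only needs $W>0$ smooth and $u$ in the core. We therefore prove the inequality for $u\in C_c^\infty$ and extend it by density. That extension requires the core to be dense in the form domain of $\Lambda$, which we would justify with the standard cutoff argument using the growth bounds $\|\nabla q\|\le c_0\sqrt q$. Second, we must track the constants to make $\lambda_M$ explicit if desired. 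The remaining steps are routine.
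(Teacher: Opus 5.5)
Your proposal is correct and is essentially the paper's argument: the Lyapunov criterion of \cite{BBCG2008} for the $\mu$-symmetric generator $-\Lambda$, with a power of $q$ as Lyapunov function, combined with a local Poincar\'e inequality on a bounded set where $q^{2}\ge1$ and $\rho$ is bounded above and below. The paper takes $s=1$, i.e.\ $W=q$, which your condition $dL_q-2(\beta-s)m_q<0$ admits exactly when $\kappa_0>0$. There is one sign slip to fix: the integration-by-parts inequality behind the criterion is $\int\frac{\Lambda W}{W}u^{2}\,d\mu\le\int q^{2}\|\nabla u\|^{2}\,d\mu$ (that is, $-GW/W$ with $G=-\Lambda$ the generator); the version you wrote with $-\Lambda W$ is false in general, and combined with $-\Lambda W\le-\theta_W W+b_W\mathbf{1}_{B_R}$ it would only give a useless upper bound.
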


\begin{proof}
The identity follows from $T\Pi u=qv\cdot\nabla u$. For the inequality, let
\[
G:=-\Lambda=q^{2}\Delta-(\beta-1)q\nabla q\cdot\nabla
\]
be the
$\mu$-symmetric Markov generator with Dirichlet form
$\int_{\mathbb{R}^{d}} q^{2}\|\nabla u\|^{2}d\mu$, and take the Lyapunov function $W:=q$.
By \eqref{eq:qbounds:gen},
\begin{align*}
  GW&=q^{2}\Delta q-(\beta-1)q\|\nabla q\|^{2}
  \\
  &\le dL_{q}q^{2}-2(\beta-1)m_{q}\left(q-1\right)q
  =-\kappa_{0}\,q\,W+2(\beta-1)m_{q}\,W .
\end{align*}
Hence, with $R_{0}:=\left(2(\beta-1)m_{q}+\kappa_{0}\right)/\kappa_{0}$ and
the compact set $K:=\{q\le R_{0}\}$ \textup{(}compact since strong
convexity forces $q(x)\rightarrow\infty$\textup{)},
\[
  GW\ \le\ -\kappa_{0}W+b\,\ones_{K},
  \qquad b:=\sup_{K}\left(GW+\kappa_{0}W\right)<\infty.
\]
Since $W\ge1$ and $\mu$ restricted to $K$ has a smooth positive density and
the diffusion matrix $q^{2}\Id$ is uniformly elliptic on $K$, a local
Poincar\'e inequality holds on $K$, and the Lyapunov criterion of
\cite{BBCG2008} yields the spectral gap $\lambda_{M}>0$.
\end{proof}

The third ingredient is the boundedness of the auxiliary
operator $AT(1-\Pi)$ on $L^{2}(\pi)$, which in the framework of
\cite{DMS2015} reduces to a weighted $H^{2}$-type elliptic regularity
estimate for the resolvent $(1+\Lambda)^{-1}$. Because of the weight
$q^{2}$ in $\Lambda$ and the polynomial tails of $\mu$, this estimate is
not a standard one, and we prove it directly, keeping track of the
dependence of the constant on $d$, $\beta$, $m_{q}$ and $L_{q}$.

\begin{lemma}[Weighted regularity]\label{lem:H4reg}
Under Assumption~\ref{ass:HQ}, there is an explicit
$C_{M}<\infty$ that depends on $d,\beta,m_{q},L_{q}$ such that
\[
  \left\|AT(1-\Pi)f\right\|\ \le\ C_{M}\,\|(1-\Pi)f\|,
  \qquad f\in L^{2}(\pi).
\]
\end{lemma}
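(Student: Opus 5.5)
We follow the standard DMS duality reduction and then prove a weighted $H^{2}$ estimate for the resolvent $(1+\Lambda)^{-1}$ by hand.

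\textbf{Duality reduction.} It suffices to bound the adjoint $(AT(1-\Pi))^{\ast}=-(1-\Pi)T^{2}\Pi(1+\Lambda)^{-1}$. Here we use the antisymmetry of $T$ from Lemma~\ref{lem:H4structure}(i). So for $g\in L^{2}(\pi)$ we set $u:=(1+\Lambda)^{-1}\Pi g$, a function of $x$ alone solving $u+\Lambda u=\Pi g$. The goal becomes
\[
\|(1-\Pi)T^{2}u\|\le C_{M}\|\Pi g\|\le C_M\|g\|.
\]
From the formula for $TTu$ in the proof of Lemma~\ref{lem:H4structure}(iii),
\[
TTu=q^{2}v^{\top}\nabla^{2}u\,v+q(v\cdot\nabla q)(v\cdot\nabla u)-q^{2}\nabla U_0\cdot\nabla u .
\]
Subtracting its $v$-average and using Gaussian moments, $(1-\Pi)TTu$ is a centred quadratic form in $v$, namely $v^{\top}B v-\mathrm{tr}\,B$ with
\[
B=q^{2}\nabla^{2}u+\tfrac12 q(\nabla q\otimes\nabla u+\nabla u\otimes\nabla q).
\]
Its $L^{2}(M)$ norm equals $\sqrt2\|B_{\mathrm{sym}}\|_{F}$. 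Therefore it suffices to bound
\[
\int q^{4}\|\nabla^{2}u\|_{F}^{2}\,d\mu \quad\text{and}\quad \int q^{2}\|\nabla q\|^{2}\|\nabla u\|^{2}\,d\mu
\]
by $C\|\Pi g\|^{2}$. By \eqref{eq:qbounds:gen}, the second integral is at most $c_0^{2}\int q^{3}\|\nabla u\|^{2}d\mu$.

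\textbf{Elliptic estimates.} Testing $u+\Lambda u=h:=\Pi g$ against $u$ gives
\[
\|u\|^{2}+\int q^{2}\|\nabla u\|^{2}d\mu\le\|h\|^{2}.
\]
Testing against $\Lambda u$ gives $\|\Lambda u\|\le\|h\|$. The key step is a weighted Bochner identity for $G=-\Lambda$ with density $\rho\propto q^{-(1+\beta)}$: integrate $\int(\Lambda u)^{2}d\mu$ by parts, writing $\Lambda u=-\rho^{-1}\nabla\cdot(q^{2}\rho\nabla u)$, to obtain
\[
\int (\Lambda u)^{2}d\mu=\int q^{4}\|\nabla^{2}u\|_F^{2}d\mu+\int \nabla u^{\top}\,\mathcal{R}\,\nabla u\,d\mu+\text{(cross terms)} .
\]
Here $\mathcal R$ is a Bakry--\'Emery type curvature matrix built from $q,\nabla q,\nabla^{2}q$ and $\beta$. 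The cross terms have the form $\int q^{3}\nabla^{2}u[\nabla q,\nabla u]\,d\mu$. We absorb them by Young into half of the Hessian term plus $C\int q^{2}\|\nabla q\|^{2}\|\nabla u\|^{2}\,d\mu$.

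\textbf{Main obstacle.} The hard part is controlling the weighted gradient term $\int q^{3}\|\nabla u\|^{2}d\mu$, which carries one more power of $q$ than the energy estimate. We plan to get it from the curvature. Using $\nabla^{2}q\succeq m_q\Id$ and $\|\nabla q\|^{2}\ge2m_q(q-1)$, the leading part of $\mathcal R$ should be $\succeq\big((\beta-1)m_q-c\,dL_q\big)q^{3}\Id$ minus lower-order terms $O(q^{2})$. The assumption \eqref{eq:HQcond} (equivalently $\kappa_0>0$, $\theta>0$) is designed to make this coefficient positive after the Young absorption; in the computation we must track the $d$ from the trace and the $4L_q$ appearing in $\theta$. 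This yields
\[
\theta\kappa_0'\int q^{3}\|\nabla u\|^{2}d\mu+\tfrac12\int q^{4}\|\nabla^{2}u\|_F^{2}d\mu\le\|h\|^{2}+C\int q^{2}\|\nabla u\|^{2}d\mu\le C'\|h\|^{2}.
\]
This gives an explicit $C_M$ in terms of $d,\beta,m_q,L_q$.

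Finally, the integrations by parts are justified by a cutoff argument, using the polynomial moment bounds on $\mu$ implied by \eqref{eq:HQcond}. If the sign of $\mathcal R$ fails to close as above, the fallback is to first derive $\int q^{3}\|\nabla u\|^{2}d\mu\lesssim\|h\|^{2}$ by testing the equation against $q\,u$ and using the Lyapunov bound $GW\le-\kappa_0 qW+\dots$ from the proof of Lemma~\ref{lem:H4macro}.
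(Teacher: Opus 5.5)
Your duality reduction and chaos decomposition are the same as the paper's Steps 0--1. Your closing Bochner step for the Hessian is also fine \emph{once} $G_3:=\int q^{3}\|\nabla u\|^{2}d\mu$ is controlled, because at that point it only needs the upper bound $|\mathcal R|\lesssim q^{3}$.

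\textbf{The gap.} Your main route to $G_3$ asserts that, after Young-absorbing the cross terms, \eqref{eq:HQcond} makes the net $q^{3}$-coefficient positive. It does not. With $H:=\int q^{4}\|\nabla^{2}u\|_F^{2}d\mu$, the exact expansion is
\[
\int(\Lambda u)^{2}d\mu=H+2\!\int\! q^{3}\nabla^{2}u[\nabla q,\nabla u]\,d\mu-2\!\int\! q^{3}\Delta u\,(\nabla q\cdot\nabla u)\,d\mu+(\beta-1)\!\int\!\bigl(q^{3}\nabla^{2}q[\nabla u,\nabla u]+q^{2}(\nabla q\cdot\nabla u)^{2}\bigr)d\mu .
\]
Absorb the $\Delta u$ term using $|\Delta u|\le\sqrt d\,\|\nabla^{2}u\|_F$ and a Young parameter $\epsilon<1$ (needed to keep part of $H$). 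This replaces the factor $\beta-1$ in front of $q^{2}(\nabla q\cdot\nabla u)^{2}$ by $\beta-1-d/\epsilon<\beta-1-d$. By \eqref{eq:qbounds:gen}, $q^{2}\|\nabla q\|^{2}$ is of the same order as $q^{3}$, so this loss is not lower order.

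For Student's~$t$ ($\nabla^{2}q=L_q\Id$, $\|\nabla q\|^{2}=2L_q(q-1)$), the resulting coefficient of $q^{3}\|\nabla u\|^{2}$ on radial gradients is, for large $q$, below $(3(\beta-1)-2d)L_q$. This is negative once $\beta-1=\frac{d+\nu}{2}-2<\frac{2d}{3}$, i.e.\ once $d>3\nu-12$. That regime is allowed by \eqref{eq:HQcond}, which for Student's~$t$ reduces to $\nu>8$.

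The obstruction is structural, not a matter of bookkeeping. For Student's~$t$, $q^{-2}\Id$ is a round-sphere metric $g$. The Bakry--\'Emery term $\Gamma_2(u)-\|\nabla_g^2u\|_g^2$ of $-\Lambda$ is, to leading order, $\frac{d-\nu}{\nu}q^{3}\|\nabla u\|^{2}$ on tangential gradients and $\frac{\nu-d}{\nu}q^{3}\|\nabla u\|^{2}$ on radial ones. So no curvature lower bound $c\,q^{3}$ with $c>0$ exists. The condition $\theta>0$ is not a curvature condition; it is what the paper's Hardy-plus-bootstrap argument consumes.

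\textbf{Your fallback works, and more directly than the paper.} Testing $u+\Lambda u=h$ against $qu$ gives
\[
\int qu^{2}d\mu+G_3+\int q^{2}u\,\nabla u\cdot\nabla q\,d\mu=\int hqu\,d\mu .
\]
Integrating the cross term by parts against $q^{2}\rho\propto q^{1-\beta}$ turns it into exactly the Lyapunov expression of Lemma~\ref{lem:H4macro}:
\[
\int q^{2}u\,\nabla u\cdot\nabla q\,d\mu=-\frac12\int u^{2}\,(Gq)\,d\mu\ \ge\ \frac{\kappa_0}{2}\int q^{2}u^{2}d\mu-(\beta-1)m_q\int q\,u^{2}d\mu .
\]
Using $\int qu^{2}d\mu\le\|qu\|\,\|u\|$ and $\|u\|\le\|h\|$, this gives $G_3+\frac{\kappa_0}{2}\|qu\|^{2}\le\left(1+(\beta-1)m_q\right)\|h\|\,\|qu\|$. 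Hence $G_3\le\frac{(1+(\beta-1)m_q)^{2}}{2\kappa_0}\|h\|^{2}$, using only $\kappa_0>0$.

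The paper's Steps 2 and 3 contain this same cross term but bound it in absolute value both times. That is why the paper needs a separate weighted Hardy inequality and a bootstrap that closes only under $\theta=1-4L_q/\kappa_0>0$. Keeping the sign removes both.

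So drop the curvature-positivity claim and make the $qu$-test, with this integration by parts written out, your main argument. Your Bochner step then gives $H\le2\|\Lambda u\|^{2}+C\,G_3$, and the chaos bound finishes the proof.
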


\begin{proof}
We divide the proof into several steps.

\textbf{Step 0 (reduction).} As
$\left(AT(1-\Pi)\right)^{\ast}=-(1-\Pi)T\,A^{\ast}$ with
$A^{\ast}w=T\Pi(1+\Lambda)^{-1}\Pi w$, it suffices to prove
\begin{equation}\label{eq:H4goal}
  \left\|(1-\Pi)TTu\right\|\ \le\ C_{M}\left(\|u\|+\|\Lambda u\|\right)
\end{equation}
for $u:=(1+\Lambda)^{-1}\Pi w$, since $\|u\|\vee\|\Lambda u\|\le\|\Pi w\|$
by the spectral theorem; indeed, here, $\Lambda$ denotes the
nonnegative self-adjoint operator on $L^{2}(\mu)$ associated with the
closed quadratic form $u\mapsto\int_{\mathbb{R}^{d}}q^{2}\|\nabla u\|^{2}d\mu$,
i.e.\ the Friedrichs extension of the symmetric nonnegative operator of
Lemma~\ref{lem:H4structure}(iii), see \cite[Theorem X.23]{ReedSimon1975};
by the functional-calculus form of the spectral theorem for self-adjoint
operators \cite[Theorem VIII.5]{ReedSimon1980}, we have
$\|(1+\Lambda)^{-1}\|_{\mathrm{op}}\le\sup_{s\ge0}(1+s)^{-1}=1$ and
$\|\Lambda(1+\Lambda)^{-1}\|_{\mathrm{op}}\le\sup_{s\ge0}s(1+s)^{-1}\le1$,
which gives $\|u\|\le\|\Pi w\|$ and $\|\Lambda u\|\le\|\Pi w\|$. We prove \eqref{eq:H4goal} for smooth compactly
supported $u$ and conclude by approximation (Dirichlet resolvents
on balls and passage to the limit). Write
\begin{align*}
  &N:=\|u\|_{L^{2}(\mu)},\quad L:=\|\Lambda u\|_{L^{2}(\mu)},\qquad
  I:=\int_{\mathbb{R}^{d}} q^{2}u^{2}d\mu,\qquad
  \\
  &G_{s}:=\int_{\mathbb{R}^{d}} q^{s}\|\nabla u\|^{2}d\mu,\qquad
  H:=\int_{\mathbb{R}^{d}} q^{4}\|\nabla^{2}u\|_{F}^{2}\,d\mu .
\end{align*}

\textbf{Step 1 (chaos decomposition).} By Lemma~\ref{lem:H4structure}(iii),
\[
(1-\Pi)TTu=v^{\top}R(x)v-\operatorname{tr}R(x),
\]
with
$R:=q\,\mathrm{sym}(\nabla q\otimes\nabla u)+q^{2}\nabla^{2}u$, and since
$\E\left[\left(v^{\top}Rv-\operatorname{tr}R\right)^{2}\right]=2\|R\|_{F}^{2}$ for symmetric $R$
and $v\sim\mathcal N(0,\Id)$, the bound \eqref{eq:qbounds:gen} gives
\begin{equation}\label{eq:chaosbound}
  \left\|(1-\Pi)TTu\right\|^{2}
  \ \le\ 4c_{0}^{2}\,G_{3}+4H .
\end{equation}

\textbf{Step 2 (weighted Hardy inequality).} Integrating
$0=\int_{\mathbb{R}^{d}}\nabla\cdot\left(u^{2}q^{1-\beta}\nabla q\right)dx$,
\[
  (\beta-1)\int_{\mathbb{R}^{d}} u^{2}q^{-\beta}\|\nabla q\|^{2}dx
  =2\int_{\mathbb{R}^{d}} u\,q^{1-\beta}\nabla u\cdot\nabla q\,dx
  +\int_{\mathbb{R}^{d}} u^{2}q^{1-\beta}\Delta q\,dx .
\]
Bounding the left-hand side from below by
$2(\beta-1)m_{q}\left(I-J\right)$ with $J:=\int_{\mathbb{R}^{d}} qu^{2}d\mu\ge0$
\textup{(}by \eqref{eq:qbounds:gen}\textup{)}, the last term by
$dL_{q}I$, the middle term by
$c_{0}I^{1/2}G_{3}^{1/2}$, and $J\le I^{1/2}N$,
\begin{equation}\label{eq:hardy}
  I^{1/2}\ \le\ \frac{2c_{0}}{\kappa_{0}}\,G_{3}^{1/2}+C_{q}\,N,
  \qquad
  C_{q}:=\frac{2(\beta-1)m_{q}}{\kappa_{0}} .
\end{equation}

\textbf{Step 3 (weighted gradient bound).} 
For smooth compactly supported $u$ and
$\varphi$, the divergence form of $\Lambda$ in
Lemma~\ref{lem:H4structure}(iii) and an integration by parts (with no
boundary terms, since $\varphi$ has compact support) give
\[
  \int_{\mathbb{R}^{d}}(\Lambda u)\,\varphi\,d\mu
  =-\int_{\mathbb{R}^{d}}\nabla\cdot\left(q^{2}\rho\,\nabla u\right)\varphi\,dx
  =\int_{\mathbb{R}^{d}}q^{2}\,\nabla u\cdot\nabla\varphi\,d\mu .
\]
Choosing $\varphi:=qu$ and using $\nabla(qu)=q\nabla u+u\nabla q$,
\[
  \int_{\mathbb{R}^{d}}(\Lambda u)\,qu\,d\mu
  =\int_{\mathbb{R}^{d}}q^{3}\|\nabla u\|^{2}d\mu
  +\int_{\mathbb{R}^{d}}q^{2}u\,\nabla u\cdot\nabla q\,d\mu
  =G_{3}+\int_{\mathbb{R}^{d}}q^{2}u\,\nabla u\cdot\nabla q\,d\mu .
\]
Hence, by the Cauchy--Schwarz inequality, $\|qu\|_{L^{2}(\mu)}=I^{1/2}$
and $\|\nabla q\|\le c_{0}\sqrt q$ from \eqref{eq:qbounds:gen},
\[
  G_{3}
  \le\|\Lambda u\|_{L^{2}(\mu)}\,\|qu\|_{L^{2}(\mu)}
  +c_{0}\int_{\mathbb{R}^{d}}\left(q|u|\right)\left(q^{3/2}\|\nabla u\|\right)d\mu
  \le LI^{1/2}+c_{0}I^{1/2}G_{3}^{1/2}.
\]
Solving this quadratic inequality
for $G_{3}^{1/2}$ and using $\sqrt{a+b}\le\sqrt a+\sqrt b$ gives
\[
  G_{3}^{1/2}
  \le\frac12\left(c_{0}I^{1/2}+\sqrt{c_{0}^{2}I+4LI^{1/2}}\right)
  \le c_{0}I^{1/2}+L^{1/2}I^{1/4}.
\]

\textbf{Step 4 (bootstrap).} Substituting into \eqref{eq:hardy} and using
$\theta=1-2c_{0}^{2}/\kappa_{0}$ and Young's inequality,
\[
  \theta I^{1/2}\le\frac{2c_{0}}{\kappa_{0}}L^{1/2}I^{1/4}+C_{q}N
  \le\frac\theta2I^{1/2}+\frac{2c_{0}^{2}}{\theta\kappa_{0}^{2}}L+C_{q}N,
\]
so that
\begin{equation}\label{eq:Ibound}
  I^{1/2}\le K_{1}L+K_{2}N,
  \qquad
  G_{3}^{1/2}\le K_{3}L+K_{4}N,
\end{equation}
with
\[
K_{1}:=\frac{4c_{0}^{2}}{\theta^{2}\kappa_{0}^{2}},
\quad
K_{2}:=\frac{2C_{q}}{\theta},
\quad
K_{3}:=\left(c_{0}+\frac12\right)K_{1}+\frac12,
\quad
K_{4}:=\left(c_{0}+\frac12\right)K_{2}.
\]

\textbf{Step 5 (weighted Hessian bound).}
From $q^{2}\Delta u=(\beta-1)q\nabla q\cdot\nabla u-\Lambda u$ and
\eqref{eq:qbounds:gen},
\[
\int_{\mathbb{R}^{d}} q^{4}(\Delta u)^{2}d\mu\le2L^{2}+2(\beta-1)^{2}c_{0}^{2}G_{3}.
\]
With $w:=q^{4}\rho$ and two integrations by parts,
\[
  \int_{\mathbb{R}^{d}} q^{4}(\Delta u)^{2}d\mu
  =H+\sum_{i,j}\int_{\mathbb{R}^{d}}\left[\partial_{i}u\,\partial_{ij}u\,\partial_{j}w
  -\partial_{i}u\,\partial_{jj}u\,\partial_{i}w\right]dx ,
\]
and since $|\nabla w|\le c_{0}|3-\beta|q^{7/2}\rho$ and
$|\Delta u|\le\sqrt d\|\nabla^{2}u\|_{F}$, the error terms are bounded by
$(1+\sqrt d)c_{0}|\beta-3|G_{3}^{1/2}H^{1/2}$; Young's inequality gives
\begin{equation}\label{eq:Hbound}
  H\le4L^{2}+K_{5}G_{3},
  \qquad
  K_{5}:=\left[4(\beta-1)^{2}+(1+\sqrt d)^{2}(\beta-3)^{2}\right]c_{0}^{2}.
\end{equation}

\textbf{Step 6 (conclusion).} Combining
\eqref{eq:chaosbound}, \eqref{eq:Ibound}, \eqref{eq:Hbound},
\[
  \left\|(1-\Pi)TTu\right\|^{2}
  \le16L^{2}+4(c_{0}^{2}+K_{5})\left(K_{3}L+K_{4}N\right)^{2},
\]
which is \eqref{eq:H4goal} with
$C_{M}^{2}:=16+4(c_{0}^{2}+K_{5})(K_{3}+K_{4})^{2}$.
This completes the proof.
\end{proof}

In the next theorem, we provide an explicit exponential $L^2$ convergence for heavy-tailed targets.

\begin{theorem}[Exponential $L^{2}$ convergence for heavy-tailed targets]
\label{thm:heavyL2}
Let Assumption~\ref{ass:HQ} hold and let $(P_{t})_{t\ge0}$ be the
semigroup of the UALD \eqref{eq:UALD:heavy} on $L^{2}(\pi)$,
$\pi\propto q^{-(1+\beta)}e^{-\|v\|^{2}/2}$. With $\lambda_{M}$ and
$C_{M}$ the constants of Lemmas~\ref{lem:H4macro} and~\ref{lem:H4reg}, set
\[
  T_{M}:=\frac{\lambda_{M}}{1+\lambda_{M}},
  \qquad
  \varepsilon_{0}
  :=\min\left\{1,\ \frac\gamma2,\
  \frac{\gamma T_{M}}{2\left(C_{M}+\gamma/2\right)^{2}}\right\},
  \qquad
  \lambda:=\frac{\varepsilon_{0}T_{M}}{3} .
\]
Then for every $f\in L^{2}(\pi)$ and all $t\ge0$,
\[
  \left\|P_{t}f-\pi(f)\right\|_{L^{2}(\pi)}
  \ \le\ \sqrt3\,e^{-\lambda t}\,\left\|f-\pi(f)\right\|_{L^{2}(\pi)} .
\]
\end{theorem}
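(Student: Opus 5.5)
We follow the DMS $L^{2}$-hypocoercivity argument \cite{DMS2015}, using the three ingredients already established. Without loss of generality assume $\pi(f)=0$; since $\mathcal A\mathbf 1=0$ and $\pi$ is invariant (Theorem~\ref{thm:invariant}), $\pi(P_tf)=0$ for all $t$. For $\varepsilon\in(0,1)$ introduce the modified entropy
\[
\mathsf H_\varepsilon[f]:=\frac12\|f\|^{2}+\varepsilon\langle Af,f\rangle ,
\]
with $A=(1+\Lambda)^{-1}(T\Pi)^{\ast}$. By Lemma~\ref{lem:H4structure}(iv), $|\langle Af,f\rangle|\le\frac12\|(1-\Pi)f\|\,\|f\|\le\frac12\|f\|^{2}$. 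Hence for $\varepsilon\le1$ we have $\frac{1-\varepsilon}{2}\|f\|^{2}\le\mathsf H_\varepsilon[f]\le\frac{1+\varepsilon}{2}\|f\|^{2}$. This comparison is where the final prefactor comes from: with $\varepsilon_{0}\le1$ one may run the argument with $\varepsilon_0/2$, or use the cruder bound $(1+\varepsilon)/(1-\varepsilon)\le3$, giving the $\sqrt3$.

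Next we differentiate along $f_t=P_tf$, using $\mathcal A=T+S$, the antisymmetry of $T$ and the symmetry of $S$:
\[
-\frac{d}{dt}\mathsf H_\varepsilon[f_t]=-\langle Sf,f\rangle+\varepsilon\langle AT\Pi f,f\rangle+\varepsilon\langle AT(1-\Pi)f,f\rangle-\varepsilon\langle TAf,f\rangle-\varepsilon\langle ASf,f\rangle .
\]
We bound the five terms as follows.
\begin{itemize}
\item[(a)] By microscopic coercivity (Lemma~\ref{lem:H4structure}(i)), $-\langle Sf,f\rangle\ge\gamma\|(1-\Pi)f\|^{2}$.
\item[(b)] Since $AT\Pi=(1+\Lambda)^{-1}\Lambda$ on $\Pi L^{2}$, the spectral theorem together with Lemma~\ref{lem:H4macro} gives $\langle AT\Pi f,f\rangle\ge T_M\|\Pi f\|^{2}$. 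Here mean-zero of $\Pi f$ is needed, and $s\mapsto s/(1+s)$ is increasing.
\item[(c)] By Lemma~\ref{lem:H4reg}, $|\langle AT(1-\Pi)f,f\rangle|\le C_M\|(1-\Pi)f\|\,\|\Pi f\|$, using that $AT(1-\Pi)f\in\Pi L^2$.
\item[(d)] By Lemma~\ref{lem:H4structure}(iv), $TAf\in(1-\Pi)L^{2}$ with $\|TAf\|\le\|(1-\Pi)f\|$, so $|\langle TAf,f\rangle|\le\|(1-\Pi)f\|^{2}$.
\item[(e)] By Lemma~\ref{lem:H4structure}(iv)--(v), $\langle ASf,f\rangle=-\gamma\langle Af,\Pi f\rangle$, which is bounded by $\frac\gamma2\|(1-\Pi)f\|\,\|\Pi f\|$.
\end{itemize}
Setting $X=\|(1-\Pi)f\|$ and $Y=\|\Pi f\|$, these combine to
\[
-\frac{d}{dt}\mathsf H_\varepsilon\ge(\gamma-\varepsilon)X^{2}+\varepsilon T_MY^{2}-\varepsilon\left(C_M+\tfrac\gamma2\right)XY .
\]

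We then choose $\varepsilon=\varepsilon_0$ to make this quadratic form coercive. Young's inequality gives $\varepsilon(C_M+\gamma/2)XY\le\frac{\varepsilon T_M}{2}Y^{2}+\frac{\varepsilon(C_M+\gamma/2)^{2}}{2T_M}X^{2}$. The constraints $\varepsilon\le\gamma/2$ and $\varepsilon\le\gamma T_M/(2(C_M+\gamma/2)^{2})$ then leave an $X^{2}$-coefficient of at least $\gamma/4$, and note $\gamma/4\ge\varepsilon T_M/2$ because $T_M<1$ and $\varepsilon\le\gamma/2$. The $Y^{2}$-coefficient is at least $\varepsilon T_M/2$. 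Therefore
\[
-\frac{d}{dt}\mathsf H_\varepsilon\ge\frac{\varepsilon T_M}{2}\|f\|^{2}\ge\frac{\varepsilon T_M}{2}\cdot\frac{2}{1+\varepsilon}\,\mathsf H_\varepsilon\ge\frac{\varepsilon T_M}{2}\,\mathsf H_\varepsilon ,
\]
using $\varepsilon\le1$. This yields decay of $\mathsf H_\varepsilon$ at a rate at least $2\lambda=2\varepsilon_0T_M/3$, so $\|f_t\|$ decays at rate $\lambda$. The exact constant bookkeeping, i.e. matching the factor $3$ and the $\sqrt3$ prefactor, may require running with $\varepsilon_0/2$ in the entropy; we would adjust this at the end.

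The main obstacle is rigor rather than the algebra. The computation above is carried out on the core of smooth compactly supported functions, whereas $q$ is unbounded, so $T$ has unbounded coefficients and $P_t$ need not preserve the core. We would justify the entropy identity by first establishing it for $f$ in a core of $\mathcal A$ on which $P_tf$ stays in the domain, for example via essential self-adjointness or an m-dissipativity argument for $\mathcal A$ on $L^{2}(\pi)$, relying on the Lyapunov function $q$ from Lemma~\ref{lem:H4macro}. We would then conclude by density, since all estimates involve only $L^{2}(\pi)$ norms and bounded operators $A$, $TA$ and $AT(1-\Pi)$.
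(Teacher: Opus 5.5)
Your route is the paper's: the DMS entropy with $A=(1+\Lambda)^{-1}(T\Pi)^{\ast}$, the same five-term expansion bounded as in (a)--(e), the same Young step and the same choice of $\varepsilon_0$. However, the sign of your entropy is wrong for this paper's convention. Here $\partial_t f_t=(T+S)f_t$, whereas \cite{DMS2015} write $\partial_t f+Tf=Lf$. Their transport operator is therefore minus this $T$, and their $+\varepsilon\langle Af,f\rangle$ becomes $-\varepsilon\langle Af,f\rangle$ here. Indeed, along $f=f_t$, using the anti-symmetry of $T$ and $SAf=0$,
\[
\frac{d}{dt}\langle Af,f\rangle=\langle AT\Pi f,f\rangle+\langle AT(1-\Pi)f,f\rangle+\langle ASf,f\rangle-\langle TAf,f\rangle .
\]
So with your $\mathsf H_\varepsilon=\frac12\|f\|^2+\varepsilon\langle Af,f\rangle$, the macroscopic term $\varepsilon\langle AT\Pi f,f\rangle\ge\varepsilon T_M\|\Pi f\|^2$ enters $\frac{d}{dt}\mathsf H_\varepsilon$ with a plus sign and drives growth. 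Your displayed formula for $-\frac{d}{dt}\mathsf H_\varepsilon$ therefore does not follow from your definition. That formula is, up to the sign of the $\langle ASf,f\rangle$ term (which you only use in absolute value), the one for $H_\varepsilon=\frac12\|f\|^2-\varepsilon\langle Af,f\rangle$. This is the paper's choice, and with that sign the rest of your computation is correct.

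Second, your bookkeeping does not give the stated constants, and the proposed fix goes the wrong way.
\begin{itemize}
\item From $-\frac{d}{dt}\mathsf H_\varepsilon\ge\frac{\varepsilon T_M}{2}\mathsf H_\varepsilon$ you only get $\|f_t\|\le\sqrt{(1+\varepsilon)/(1-\varepsilon)}\,e^{-\varepsilon T_M t/4}\|f_0\|$, i.e.\ rate $\varepsilon_0T_M/4<\lambda$.
\item Running with $\varepsilon_0/2$ only halves the rate.
\item The bound $(1+\varepsilon)/(1-\varepsilon)\le3$ needs $\varepsilon\le\frac12$, which $\varepsilon_0\le1$ alone does not give.
\end{itemize}
The missing observation is that $Af\in\Pi L^2(\pi)$ by Lemma~\ref{lem:H4structure}(iv), so $\langle Af,f\rangle=\langle Af,\Pi f\rangle$ and $|\langle Af,f\rangle|\le\frac12XY\le\frac14\|f\|^2$. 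Hence $\frac14\|f\|^2\le H_\varepsilon[f]\le\frac34\|f\|^2$ for every $\varepsilon\le1$. Your coercivity estimate then gives $\frac{d}{dt}H_\varepsilon[f_t]\le-\frac{\varepsilon T_M}{2}\|f_t\|^2\le-\frac{2\varepsilon T_M}{3}H_\varepsilon[f_t]$. Gr\"onwall yields $\|f_t\|^2\le4H_\varepsilon[f_t]\le4e^{-2\varepsilon T_Mt/3}H_\varepsilon[f_0]\le3e^{-2\varepsilon T_Mt/3}\|f_0\|^2$, which is exactly the claim at $\varepsilon=\varepsilon_0$. Alternatively, keep your factor $\frac{\varepsilon T_M}{1+\varepsilon}$ instead of weakening it to $\frac{\varepsilon T_M}{2}$, and note that $C_M\ge4$ forces $\varepsilon_0\le T_M/16$.

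Your domain concern is legitimate but goes beyond the paper, which also works formally on a core.
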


\begin{proof}
Without loss of generality $\pi(f)=0$. Write $f_{t}:=P_{t}f$, so
$\partial_{t}f_{t}=(T+S)f_{t}$ on a core, and define, following
\cite{DMS2015}, the modified entropy
$H_{\varepsilon}[f]:=\frac12\|f\|^{2}-\varepsilon\langle Af,f\rangle$
with $\varepsilon\in(0,\varepsilon_{0}]$. By
Lemma~\ref{lem:H4structure}(iv),
$|\langle Af,f\rangle|\le\frac14\|f\|^{2}$, so
$\frac14\|f\|^{2}\le H_{\varepsilon}[f]\le\frac34\|f\|^{2}$. Using the
anti-symmetry of $T$, symmetry of $S$, $S(Af)=0$, and writing
$X:=\|(1-\Pi)f\|$, $Y:=\|\Pi f\|$:
\[
  \frac{d}{dt}H_{\varepsilon}[f_{t}]
  =\langle Sf,f\rangle
  -\varepsilon\left[\langle AT\Pi f,f\rangle
  +\langle AT(1-\Pi)f,f\rangle
  +\langle ASf,f\rangle
  -\langle TAf,f\rangle\right],
\]
where, by Lemmas~\ref{lem:H4structure}--\ref{lem:H4reg}:
$\langle Sf,f\rangle\le-\gamma X^{2}$;
$\langle AT\Pi f,f\rangle
=\langle(1+\Lambda)^{-1}\Lambda\Pi f,\Pi f\rangle\ge T_{M}Y^{2}$
\textup{(}spectral calculus and Lemma~\ref{lem:H4macro}, as $\Pi f$ has
mean zero\textup{)};
$|\langle AT(1-\Pi)f,f\rangle|\le C_{M}XY$;
$|\langle ASf,f\rangle|=\gamma|\langle Af,\Pi f\rangle|\le\frac\gamma2XY$;
$|\langle TAf,f\rangle|\le X^{2}$. Hence,
\[
  \frac{d}{dt}H_{\varepsilon}[f_{t}]
  \le-(\gamma-\varepsilon)X^{2}-\varepsilon T_{M}Y^{2}
  +\varepsilon\left(C_{M}+\frac\gamma2\right)XY .
\]
Since $\varepsilon\le\gamma/2$ and, by Young's inequality and the choice
of $\varepsilon_{0}$,
$\varepsilon(C_{M}+\frac\gamma2)XY\le\frac\gamma4X^{2}
+\frac{\varepsilon T_{M}}2Y^{2}$, we conclude
\[
  \frac{d}{dt}H_{\varepsilon}[f_{t}]
  \le-\frac\gamma4X^{2}-\frac{\varepsilon T_{M}}2Y^{2}
  \le-\frac{\varepsilon T_{M}}2\|f_{t}\|^{2}
  \le-\frac{2\varepsilon T_{M}}3H_{\varepsilon}[f_{t}] ,
\]
and Gr\"onwall's inequality with the norm equivalence yields
\[
\|f_{t}\|^{2}\le3\|f_{0}\|^{2}e^{-\frac{2\varepsilon_{0}T_{M}}3t}.
\]
This completes the proof.
\end{proof}

\begin{remark}\label{rem:heavyscope}
Assumption~\ref{ass:HQ} is exactly the class of anchorings considered
in \cite[Corollary 13]{GNZZ2025} for the overdamped dynamics: $q$ strongly
convex with Lipschitz gradient; the bound $\|\nabla q\|\le c_{0}\sqrt q$
assumed there is automatic from smoothness, by \eqref{eq:qbounds:gen}. The
targets covered are the convex-type heavy-tailed measures
$\pi_{X}\propto q^{-(1+\beta)}$; for the Student's~$t$ distribution
\textup{(}$m_{q}=L_{q}$\textup{)} condition \eqref{eq:HQcond} reads
$\nu>8$, independently of dimension.
\end{remark}

\subsection{Discrete-Time Analysis}\label{sec:discrete}

In this section, we obtain finite-time non-asymptotic convergence guarantees for the discrete-time UALMC algorithm \eqref{eq:scheme}:
\begin{equation}
\begin{aligned}
  v_{k+1}&=\psi_0(\eta)\,v_k-\psi_1(\eta)\,g_k+\sqrt{2\gamma}\,\xi_{k+1},\\
  x_{k+1}&=x_k+a_k\left(\psi_1(\eta)\,v_k-\psi_2(\eta)\,g_k\right)
     +\sqrt{2\gamma}\,a_k\,\xi'_{k+1},
\end{aligned}
\end{equation}
where $\psi_{i}(\eta)$, $i=0,1,2$,
are given in \eqref{eq:psidef}
and $(\xi_{k+1},\xi'_{k+1})$ is a centered $2d$-dimensional Gaussian random vector, independent across $k$
and across coordinates, with the per-coordinate covariance given in \eqref{eq:Cnoise}.
We couple the discrete-time dynamics synchronously to the stationary continuous diffusion, and
propagate the continuous contraction one step at a time, paying at each step
a local discretization error of order $\eta^2$.

\subsubsection{Notation and standing constants}
Throughout, $B$ denotes the Brownian motion driving
\eqref{eq:VL:modified:2}--\eqref{eq:XL:modified:2}, $d\ge1$ the dimension,
$\pi(x,v)\propto e^{-U(x)-\frac12\|v\|^2}$ the joint Gibbs law and
$\pi_X\propto e^{-U}$ its position marginal. As in
Corollary~\ref{cor:Wasserstein}, primes decorate the \emph{stationary} copy
of the dynamics. We use the weight $S$ and the twisted form $\mathcal H$,
the energy $\mathcal G$ of \eqref{eq:Gdef}, and the constants
$\kappa$ (Proposition~\ref{prop:linear}), $\beta_0=2\|S\|/\lambda_{\min}(S)$,
$C_1=\alpha_0(L+1)$, $C_2=2c_F(1+\sqrt L)$ with $\alpha_0=e^{\omega_0}-1
=a_{\max}-1$ and $c_F=e^{\omega_0}\omega_1=a_{\max}\omega_1$ (from
\eqref{eq:Ebound}), $C_\Gamma=8\gamma(1+\eta_{0}^2L)$, and $c>0$, $b=\gamma d$
from Lemma~\ref{lem:drift}.
Along the chain \eqref{eq:scheme} we abbreviate
\begin{equation}\label{eq:Gk}
  \mathcal G_k:=\mathcal G(x_k,v_k),\qquad
  \bar{\mathcal G}_k:=\bar{\mathcal G}(x_k,v_k)
  =\left(U_0(x_k)-U_0^\ast\right)+\frac12\|v_k\|^2 ,
\end{equation}
and we recall from the Notations that $\eta$ is the stepsize of
\eqref{eq:scheme} while $\eta_{0}$ is the (fixed) parameter of the
hypocoercive energy \eqref{eq:Gdef}. All the elementary properties of the
coefficient functions \eqref{eq:psidef} that we use below are collected here:
for $\eta>0$,
\begin{equation}\label{eq:psibounds}
\begin{aligned}
  &0<\psi_0(\eta)=e^{-\gamma\eta}\le1,\qquad
  \psi_1(\eta)\le\eta,\qquad
  \psi_2(\eta)\le\frac12\eta^2,
  \\
  &\psi_1(\eta)\left(1-\psi_0(\eta)\right)
  =\frac{\left(1-e^{-\gamma\eta}\right)^2}{\gamma}\le\gamma\eta^2,
\end{aligned}
\end{equation}
and, with $\sigma_i^2:=\int_0^\eta\psi_i^2(s)\,ds$ $(i=0,1)$ the diagonal
entries of the covariance \eqref{eq:Cnoise},
\begin{equation}\label{eq:sigmabounds}
  \sigma_0^2=\frac{1-e^{-2\gamma\eta}}{2\gamma}\le\eta,
  \qquad
  \sigma_1^2\le\frac{\eta^3}{3},
\end{equation}
and, whenever $\gamma\eta\le\frac12$,
\begin{equation}\label{eq:sigmabounds:2}
  \psi_1(\eta)\ge\frac12\eta,
  \qquad
  1-e^{-2\gamma\eta}\ge\gamma\eta .
\end{equation}
We fix
\begin{equation}\label{eq:alpharange}
  \alpha\in\left(0,\frac{c}{2C_\Gamma}\right),
  \qquad c':=c-\alpha C_\Gamma>0,\qquad c'':=c-2\alpha C_\Gamma>0,
\end{equation}
(the factor $2$ makes $e^{2\alpha\mathcal G}$ integrable at stationarity,
see Lemma~\ref{lem:statmom}(iii)), and we define, for $p\in\{1,2\}$,
\begin{equation}\label{eq:deltap}
  \delta_{(p)}:=p\,\beta_0C_1+\alpha b+\frac{p^2(\beta_0C_2)^2}{4\alpha c'},
  \qquad\text{so that }\delta_{(1)}=\delta(\alpha)\text{ of
  Theorem~\ref{thm:contraction}}.
\end{equation}
Note that $\alpha$ is \emph{not} multiplied by $p$ in \eqref{eq:deltap}:
the exponential weight $\Psi'=e^{\alpha\mathcal G'}$ is the same for both
values of $p$, only the power of $\mathcal H$ changes; see
Lemma~\ref{lem:cont-step}.
Our standing smallness condition, the discrete analogue of
$\delta<2\kappa$, is
\begin{equation}\label{eq:standing}
  \delta_{(2)}<4\kappa,
  \qquad
  \theta:=\kappa-\frac14\delta_{(2)}>0 .
\end{equation}

The following lemma shows that the standing condition
\eqref{eq:standing} is attainable for small anchoring mismatch, and
records two of its consequences that will keep the constants of the
discrete-time analysis free of the dimension: a bound on the product
$c_F\sqrt d$, used in the local error estimate of Lemma~\ref{lem:local},
and a bound on the exponential weight at stationarity, used in
Lemma~\ref{lem:local} and in Remark~\ref{rem:D0}.

\begin{lemma}[Consequences of the standing condition]\label{lem:standing}
Let Assumption~\ref{ass:main}\textup{(A)--(B)} hold, let $\alpha$ be as in
\eqref{eq:alpharange}, and let $\delta_{(p)}$, $\theta$ be as in
\eqref{eq:deltap}--\eqref{eq:standing}. Then:
\begin{itemize}
\item[(i)] \textup{(Attainability.)} One has $\delta_{(1)}(\alpha)<\delta_{(2)}(\alpha)$
for every admissible $\alpha$, and, at the optimizer
$\alpha_\circ=\beta_0C_2/(2\sqrt{bc'})$ of
Proposition~\ref{prop:smallmismatch},
\[
  \delta_{(2)}(\alpha_\circ)=2\delta_{(1)}(\alpha_\circ)+\alpha_\circ b
  \le3\,\delta_{(1)}(\alpha_\circ).
\]
Consequently $\delta_{(2)}(\alpha_\circ)\to0$ as $(\omega_0,\omega_1)\to0$, so
that the standing condition \eqref{eq:standing} holds, and $\theta>0$, for all
sufficiently small anchoring mismatch.
\item[(ii)] \textup{(Dimension-free mismatch.)} Under \eqref{eq:standing},
\begin{align}\label{eq:cFd}
  c_F\sqrt d\ \le\ \frac{\kappa}{\beta_0\left(1+\sqrt L\right)}\sqrt{\frac{c'}{\gamma}}.
\end{align}
In particular, $c_F=\mathcal O(d^{-1/2})$.
\item[(iii)] \textup{(Dimension-free weight.)} Under \eqref{eq:standing},
\begin{align}\label{eq:cw}
  c_w^2:=\E_\pi\left[e^{2\alpha\mathcal G}\right]\le e^{2\alpha b/c''}\le e^{8\kappa/c''}.
\end{align}
\end{itemize}
\end{lemma}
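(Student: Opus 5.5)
For part (i) the plan is to compare the two expressions in \eqref{eq:deltap} directly:
\[
\delta_{(2)}-\delta_{(1)}=\beta_0C_1+\frac{3(\beta_0C_2)^2}{4\alpha c'}\ge0 ,
\]
which is strictly positive unless $C_1=C_2=0$, i.e. the trivial anchoring case. At $\alpha_\circ=\beta_0C_2/(2\sqrt{bc'})$ one has $\frac{(\beta_0C_2)^2}{4\alpha_\circ c'}=\alpha_\circ b$, so that $\delta_{(1)}(\alpha_\circ)=\beta_0C_1+2\alpha_\circ b$. The same identity gives
\[
\delta_{(2)}(\alpha_\circ)=2\beta_0C_1+\alpha_\circ b+4\alpha_\circ b=2\delta_{(1)}(\alpha_\circ)+\alpha_\circ b .
\]
Since $\alpha_\circ b\le\delta_{(1)}(\alpha_\circ)$, this is at most $3\delta_{(1)}(\alpha_\circ)$. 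Proposition~\ref{prop:smallmismatch} then gives $\delta_{(2)}(\alpha_\circ)\to0$, so $\delta_{(2)}<4\kappa$ eventually; here $\kappa$ is independent of $(\omega_0,\omega_1)$ by Proposition~\ref{prop:linear}. I also need to check that $\alpha_\circ<c/(2C_\Gamma)$ for small $C_2$, and this holds because $\alpha_\circ\to0$.

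For part (ii) I will read a lower bound on the $\alpha$-dependent terms off the standing condition. From $\delta_{(2)}<4\kappa$ we get $\alpha b<4\kappa$ and $\frac{(\beta_0C_2)^2}{\alpha c'}<4\kappa$. Multiplying these two gives
\[
\frac{b(\beta_0C_2)^2}{c'}<16\kappa^2,
\]
which is equivalent to $\beta_0C_2\sqrt{b}<4\kappa\sqrt{c'}$. Inserting $C_2=2c_F(1+\sqrt L)$ and $b=\gamma d$ yields
\[
c_F\sqrt d<\frac{2\kappa}{\beta_0(1+\sqrt L)}\sqrt{\frac{c'}{\gamma}} .
\]
This is off by a factor $2$ from \eqref{eq:cFd}. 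To sharpen it I will use AM--GM on the full sum $\alpha b+\frac{(\beta_0C_2)^2}{\alpha c'}\le4\kappa$, which gives $2\beta_0C_2\sqrt{b/c'}\le4\kappa$, i.e. $\beta_0C_2\sqrt{b}\le2\kappa\sqrt{c'}$. Substituting then gives exactly \eqref{eq:cFd}. Since $\kappa,\beta_0,L,\gamma,c'$ are dimension-free, $c_F=\mathcal O(d^{-1/2})$.

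For part (iii) I will use $\pi$-stationarity together with the exponential drift \eqref{eq:expdrift} at parameter $2\alpha$. This is admissible because $2\alpha<c/C_\Gamma$, and it gives
\[
\mathcal A e^{2\alpha\mathcal G}\le 2\alpha e^{2\alpha\mathcal G}(-c''\mathcal G+b).
\]
Integrating against $\pi$, which formally satisfies $\int\mathcal A f\,d\pi=0$, gives $\E_\pi[e^{2\alpha\mathcal G}(c''\mathcal G-b)]\le0$. Jensen is awkward here, so instead I will use the entropy/Gibbs-variational trick. Let $\nu\propto e^{2\alpha\mathcal G}\pi$. The previous inequality says $\E_\nu[\mathcal G]\le b/c''$. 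On the other hand,
\[
\log\E_\pi[e^{2\alpha\mathcal G}]=2\alpha\E_\nu[\mathcal G]-\mathrm{KL}(\nu\|\pi)\le2\alpha b/c'' .
\]
This gives the first inequality in \eqref{eq:cw}. The second follows from $\alpha b\le\delta_{(2)}<4\kappa$.

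The main obstacle is the justification that $\E_\pi[\mathcal A e^{2\alpha\mathcal G}]=0$, since a priori integrability is unknown. I would truncate, either by replacing $e^{2\alpha\mathcal G}$ with $e^{2\alpha(\mathcal G\wedge n)}$-type smooth cutoffs, or by running Dynkin's formula from $\pi$ up to localizing times. I would then pass to the limit with Fatou's lemma, using that $\pi$ has Gaussian-type tails in $(x,v)$ under Assumption~\ref{ass:main}, because $U\ge U_0-\omega_0$ and $U_0$ is strongly convex.
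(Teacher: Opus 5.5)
Your proposal is correct. Parts (i) and (ii) match the paper's proof. You use the same direct comparison of the summands of \eqref{eq:deltap} at $\alpha_\circ$. After your discarded factor-$2$ attempt, you also reach the paper's AM--GM bound $2\beta_0C_2\sqrt{b/c'}\le\alpha b+(\beta_0C_2)^2/(\alpha c')\le\delta_{(2)}<4\kappa$.

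Part (iii) takes a slightly different route. The paper cites Lemma~\ref{lem:statmom}(iii). That lemma applies stationarity and \eqref{eq:expdrift} at every level $\beta\in(0,2\alpha]$ to get $\frac{d}{d\beta}\log\E_\pi[e^{\beta\mathcal G}]\le b/(c-\beta C_\Gamma)\le b/c''$, and then integrates over $[0,2\alpha]$.

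You use only the top level $\beta=2\alpha$, together with the identity $\log\E_\pi[e^{2\alpha\mathcal G}]=2\alpha\E_\nu[\mathcal G]-\mathrm{KL}(\nu\|\pi)$. This is convexity of the log-moment generating function $\Lambda$ with $\Lambda(0)=0$, which gives $\Lambda(2\alpha)\le2\alpha\Lambda'(2\alpha)$ with $\Lambda'(2\alpha)=\E_\nu[\mathcal G]$.

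What each route buys:
\begin{itemize}
\item Yours needs the stationarity identity, and hence the localization argument, only once. It also avoids differentiating in $\beta$ under the integral.
\item The paper's, if one keeps $b/(c-\beta C_\Gamma)$ instead of bounding it by $b/c''$, gives the slightly sharper $\frac{b}{C_\Gamma}\log(c/c'')$.
\end{itemize}
Both arrive at the stated $2\alpha b/c''$.

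Your a priori integrability claim holds, and you can make it precise. By \eqref{eq:equiv}, $2\alpha\mathcal G\le3\alpha(U_0-U_0^\ast)+\frac32\alpha\|v\|^2$, and $e^{-U}\le e^{\omega_0}e^{-U_0}$. From the proof of Lemma~\ref{lem:drift}, $c<\gamma$ and $C_\Gamma\ge8\gamma$, so $\alpha<c/(2C_\Gamma)<1/16$ and $3\alpha<1$. Hence $e^{2\alpha\mathcal G}$ and $\mathcal G\,e^{2\alpha\mathcal G}$ are $\pi$-integrable, which is what your truncation argument needs.
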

\begin{proof}
\textup{(i)} By \eqref{eq:deltap},
$\delta_{(2)}-\delta_{(1)}=\beta_0C_1+3(\beta_0C_2)^2/(4\alpha c')\ge0$,
with equality only if $C_1=C_2=0$; and
\begin{align*}
  \delta_{(2)}(\alpha)-2\delta_{(1)}(\alpha)
  &=\left(2\beta_0C_1-2\beta_0C_1\right)+\left(\alpha b-2\alpha b\right)
  +\frac{(\beta_0C_2)^2}{\alpha c'}\left(1-\frac12\right)
  \\
  &=-\alpha b+\frac{(\beta_0C_2)^2}{2\alpha c'}.
\end{align*}
At $\alpha=\alpha_\circ=\beta_0C_2/(2\sqrt{bc'})$ we have
$\alpha_\circ b=\frac12\beta_0C_2\sqrt{b/c'}$ and
$(\beta_0C_2)^2/(2\alpha_\circ c')=\beta_0C_2\sqrt{b/c'}$, and hence
$\delta_{(2)}(\alpha_\circ)-2\delta_{(1)}(\alpha_\circ)=\frac12\beta_0C_2\sqrt{b/c'}
=\alpha_\circ b$. Since $\alpha_\circ b$ is one of the three nonnegative
summands of $\delta_{(1)}(\alpha_\circ)$ in \eqref{eq:deltap}, we get
$\delta_{(2)}(\alpha_\circ)\le3\delta_{(1)}(\alpha_\circ)$, and
Proposition~\ref{prop:smallmismatch} gives
$\delta_{(1)}(\alpha_\circ)\to0$ as $(\omega_0,\omega_1)\to0$.

\textup{(ii)} By the AM--GM inequality applied to the last two summands of
$\delta_{(2)}$ in \eqref{eq:deltap} with $p=2$,
\[
  2\beta_0C_2\sqrt{b/c'}\le\alpha b+\frac{(\beta_0C_2)^2}{\alpha c'}\le\delta_{(2)}<4\kappa ,
\]
so $\beta_0C_2\sqrt{b/c'}<2\kappa$. Substituting $b=\gamma d$ and
$C_2=2c_F(1+\sqrt L)$ and rearranging gives the claim.

\textup{(iii)} The first inequality is Lemma~\ref{lem:statmom}(iii), whose
proof is independent of the present lemma. For the second, $\alpha b$ is one
of the summands of $\delta_{(2)}$ in \eqref{eq:deltap}, so
$\alpha b\le\delta_{(2)}<4\kappa$ by \eqref{eq:standing}.
\end{proof}

\subsubsection{Non-asymptotic error bounds}
Next, we obtain the non-asymptotic 2-Wasserstein error bound
between the distribution of the iterate of UALMC algorithm
and the target distribution.
\begin{theorem}[Discrete contraction and bias]\label{thm:disc}
Let Assumption~\ref{ass:main}\textup{(A)--(B)} hold, fix $\alpha$ as in
\eqref{eq:alpharange} and assume the smallness condition
\eqref{eq:standing}. Start the scheme \eqref{eq:scheme} from
$x_0\sim\nu_0$ and $v_0\sim\mathcal N(0,\Id)$ independent, where $\nu_0$
satisfies, for some $D_0\ge1$,
\begin{equation}\label{eq:init}
  \left(\E_{\nu_0}\left[(U_0-U_0^\ast)^8\right]\right)^{1/8}\ \le\ D_0\,d ,
\end{equation}
and let $\nu_k$ denote the law of $x_k$. Then there are $\bar{\eta}\in(0,1]$ and
$C_{\mathrm{loc}}>0$, depending only on
$(\gamma,m,L,\omega_0,\omega_1,\eta_{0},\alpha,c,\kappa,D_0)$ and \emph{not} on
$d$, $k$ or $\eta$, such that for every $\eta\le \bar{\eta}$ and every $k\ge0$,
\begin{equation}\label{eq:disc-bound}
  \mathcal W_2(\nu_k,\pi_X) \le
  c_\flat\left(1-\frac12\theta \eta\right)^{k}\,\mathcal D_0
  +\frac{2\,C_{\mathrm{loc}}}{\theta}\,\sqrt d\,\eta ,
\end{equation}
where $c_\flat:=\sqrt{\lambda_{\max}(S)/\lambda_{\min}(S)}$ and
\begin{equation}\label{eq:D0}
  \mathcal D_0:=\inf_{\zeta\in\Gamma_{\!\perp}(\nu_0,\pi)}
  \left(\E_\zeta\left[\left\|x-x'\right\|^4\,e^{\alpha\mathcal G(x',v')}\right]\right)^{1/4},
\end{equation}
where $\Gamma_{\!\perp}(\nu_0,\pi)$ denotes the set of couplings $\zeta$
of $(x,x',v')$ such that $x\sim\nu_0$, $(x',v')\sim\pi$, and $v'$ is
independent of $(x,x')$ (such couplings exist since under $\pi$ the
marginals $x'$ and $v'$ are already independent, $v'\sim\mathcal N(0,\Id)$;
the independence requirement is what makes the coupling compatible with the
initialization $v_0\perp x_0$ of the scheme).
\end{theorem}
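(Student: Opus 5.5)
We will run the scheme \eqref{eq:scheme} synchronously against a stationary copy $(X'_t,V'_t)$ of UALD \eqref{eq:VL:modified:2}--\eqref{eq:XL:modified:2}. Take $(x_0,v_0,X'_0,V'_0)$ with $(x_0,X'_0,V'_0)$ drawn from a near-optimal coupling $\zeta\in\Gamma_\perp(\nu_0,\pi)$ in \eqref{eq:D0}, and set $v_0:=V'_0$. This is allowed because $V'_0\sim\mathcal N(0,\Id)$ is independent of $(x_0,X'_0)$, and it gives $W_0=(0,x_0-X'_0)$. Realize the chain as the auxiliary frozen-coefficient process \eqref{eq:VL:auxiliary}--\eqref{eq:XL:auxiliary} driven by the same $B$. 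Then $\mathrm{Law}(X'_{k\eta},V'_{k\eta})=\pi$ for all $k$, and $\mathcal W_2(\nu_k,\pi_X)\le(\E\|x_k-X'_{k\eta}\|^2)^{1/2}$. We will track the weighted quantity
\[
e_k:=\Big(\E\big[\mathcal H_{k}^2\,e^{\alpha\mathcal G(X'_{k\eta},V'_{k\eta})}\big]\Big)^{1/4},
\qquad \mathcal H_k:=W_k^\top S W_k,
\]
where $W_k$ is the difference at time $k\eta$. We use the fourth-power version, i.e.\ $p=2$ in \eqref{eq:deltap}, because the local error will be multiplied against unbounded factors and Cauchy--Schwarz has to be applied once more. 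Since $\Psi'\ge1$, we have $e_k\ge\sqrt{\lambda_{\min}(S)}(\E\|W_k\|^4)^{1/4}$. At $k=0$ this gives $e_0\le\sqrt{\lambda_{\max}(S)}\,\mathcal D_0$, which is where $c_\flat$ comes from.

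\textbf{One-step decomposition.} On $[k\eta,(k+1)\eta]$ we introduce an intermediate process: the exact UALD started from the chain state $(x_k,v_k)$ and driven by the same $B$. We then split
\[
W_{k+1}=(\text{exact flow from chain state})-(\text{stationary flow}) \;+\; (\text{chain})-(\text{exact flow from chain state}).
\]
\emph{Contraction term} (Lemma~\ref{lem:cont-step}). Repeat the proof of Theorem~\ref{thm:contraction} with $\Phi=\mathcal H^2\Psi'$. Now $\tfrac{d}{dt}\mathcal H^2=2\mathcal H\dot{\mathcal H}\le2(-2\kappa+K_t)\mathcal H^2$. Completing the square with $2\beta_0C_2\sqrt{\mathcal G'}-\alpha c'\mathcal G'$ produces exactly $\delta_{(2)}$, so over one step
\[
\E[\mathcal H^2\Psi']\ \text{decays by a factor}\ e^{-(4\kappa-\delta_{(2)})\eta}=e^{-4\theta\eta}.
\]
Taking fourth roots, the weighted $L^4$ norm contracts by $e^{-\theta\eta}\le 1-\tfrac12\theta\eta$ for small $\eta$. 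Some care is needed here: the weight sits at the stationary point while the other copy starts from the chain state, and Lemma~\ref{lem:expmoment} supplies the martingale justification.

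\emph{Local error term} (Lemma~\ref{lem:local}). Expand the exact flow against the frozen-coefficient flow. The velocity error is $\int(g(X_s)-g(x_k))\,ds$. The position error is $\int\big[(a(X_s)-a(x_k))V_s+a(x_k)(V_s-\tilde V_s)\big]ds$, which contains the quadratic term $\langle F,X_s-x_k\rangle V_s$ noted in \eqref{eq:dX}. Since $X_s-x_k=\mathcal O(\eta\|V\|+\eta^{3/2})$, the error is of order $\eta^{3/2}\sqrt d$ in weighted $L^4$. The factors are $L+\omega$-Lipschitz, $\nabla U_0$, and $\|V\|$, so the leading contributions look like $(L+c_F\|\nabla U_0\|+c_F\|V\|^2)\cdot\eta^{3/2}\cdot(\text{noise})$. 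The $c_F\|V\|^2$ term gives $c_Fd$, and the dimension-free bound $c_F\sqrt d=\mathcal O(1)$ from Lemma~\ref{lem:standing}(ii) turns this into $\sqrt d$. The weight contributes at most $c_w$ by Cauchy--Schwarz and Lemma~\ref{lem:standing}(iii).

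\textbf{Combining the two terms.} As usual in KLMC analyses \cite{dalalyan2018kinetic}, the $\eta^{3/2}$ martingale-type part is conditionally mean zero given $\mathcal F_{k\eta}$. It contributes at order $\eta^{3/2}$ only to the square, not linearly, so the effective bias per step is $C_{\mathrm{loc}}\sqrt d\,\eta^2$. We obtain
\[
e_{k+1}\le(1-\tfrac12\theta\eta)e_k+C_{\mathrm{loc}}\sqrt d\,\eta^2,
\]
and unrolling gives \eqref{eq:disc-bound}, with $\sum_j(1-\tfrac12\theta\eta)^j\le 2/(\theta\eta)$.

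\textbf{Main obstacle.} The hard step is controlling these moments uniformly in $k$ and dimension-free apart from $d$ itself: $\E\|v_k\|^8$, $\E[(U_0(x_k)-U_0^*)^4]$, and their products with the weight. This is Lemma~\ref{lem:itermom}. We plan to prove it by a discrete Foster--Lyapunov recursion for $\mathcal G_k^4$, obtained from Lemma~\ref{lem:drift} and a Taylor expansion of one step, which holds for $\eta\le\bar\eta$. The initialization \eqref{eq:init} serves as the base case. One point needs attention: the weight lives only on the stationary copy, where Lemma~\ref{lem:statmom} applies, so chain moments should enter only through Hölder's inequality with $e^{2\alpha\mathcal G'}$ and never inside an exponential.
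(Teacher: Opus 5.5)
Your architecture matches the paper's. You use the same coupling with $v_0=V_0'$, the same weighted quantity (your $e_k$ is exactly the paper's $A_k$), the same splitting through an exact UALD path pinned at the chain state, and the same $p=2$ contraction producing $\delta_{(2)}$ and the factor $e^{-\theta\eta}$. The gap is in the local error. You estimate it as $\mathcal O(\eta^{3/2}\sqrt d)$ and then appeal to a conditional mean-zero cancellation to recover $\eta^2$. But your recursion $e_{k+1}\le(1-\frac12\theta\eta)e_k+(\text{local error})$ comes from Minkowski's inequality in the weighted $L^4$ norm, which cannot register cancellations; with an $\eta^{3/2}$ local error it only yields a bias of order $\sqrt d\,\eta^{1/2}$. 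To exploit a martingale part you would have to expand $\E\bigl[|\hat W_{(k+1)\eta}+\Xi_{(k+1)\eta}|_S^4\,\Psi'_{(k+1)\eta}\bigr]$. The first-order cross term there does not vanish, because both $\hat W_{(k+1)\eta}$ and the weight $\Psi'_{(k+1)\eta}$ depend on the Brownian increment of that same step and are not $\mathcal F_{k\eta}$-measurable. That device belongs to overdamped analyses; for kinetic schemes the local strong error is already of second order, which is also what \cite{dalalyan2018kinetic} relies on.

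The missing observation is that there is no $\eta^{3/2}$ term at all. The frozen-coefficient interpolant and the exact path carry the same additive noise with the same kernel $\psi_0$, so the stochastic convolutions cancel identically in the velocity difference. The position has no noise of its own, so $\|\hat X_{j\eta+s}-\hat X_{j\eta}\|\le a_{\max}sM_j$ with $M_j:=\sup_{\tau\le\eta}\|\hat V_{j\eta+\tau}\|$. Integrating \eqref{eq:glip} then gives, pathwise:
a velocity error $\frac12a_{\max}(a_{\max}L+c_FN_j)M_j\eta^2$, and a position error $\frac16a_{\max}^2(a_{\max}L+c_FN_j)M_j\eta^3+\frac12c_Fa_{\max}M_j^2\eta^2$, where $N_j:=\|\nabla U_0(\tilde X_{j\eta})\|$. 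Your own bound $X_s-x_k=\mathcal O(\eta\|V\|+\eta^{3/2})$ already gives $\eta^2$ after integrating over $s\in[0,\eta]$, so the $\eta^{3/2}$ was a miscount. Minkowski then closes the recursion directly with $C\sqrt d\,\eta^2$, with no cancellation needed.

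Two bookkeeping points also need fixing. First, the error involves the supremum $M_j$ of the exact velocity over the step, which requires its own moment bound beyond those of $v_k$. Second, only $e^{2\alpha\mathcal G'}$ is known to be integrable, so Cauchy--Schwarz against the weight turns the weighted $L^4$ norm of the products $M_jN_j$ and $M_j^2$ into $\|M_j\|_{L_{16}}$ and $\|N_j\|_{L_{16}}$. That means you need eighth moments of $\mathcal G_k$ rather than fourth, which is exactly why \eqref{eq:init} is imposed at order eight.
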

\begin{remark}[On the initialization condition \eqref{eq:init}]\label{rem:init}
The moment condition \eqref{eq:init} cannot be dispensed with in the statement
of Theorem~\ref{thm:disc}: the constant $C_{\mathrm{loc}}$ depends on $D_0$,
because the local error of Lemma~\ref{lem:local} is controlled through the
\emph{uniform} moments of the iterates established in
Lemma~\ref{lem:itermom}, and those moments are propagated from the
initialization. Condition \eqref{eq:init} is mild: it asks that the
initial potential energy have $d$-proportional moments up to order eight,
which holds for a deterministic start $\nu_0=\delta_{x_0}$ with
$\|x_0-x_\ast\|^2\lesssim d/L$, and for $\nu_0=\mathcal N(x_\ast,L^{-1}\Id)$.
The order eight is dictated by the two Cauchy--Schwarz splittings in
Lemma~\ref{lem:local} \textup{(}$L_4$ against the weight, then a product of
two unbounded factors\textup{)}.
\end{remark}
We infer from Theorem~\ref{thm:disc} that
the iterates converge
to a ball of radius $\mathcal{O}\left(\frac{C_{\mathrm{loc}}}{\theta}\sqrt d\,\eta\right)$
around $\pi_X$ at the geometric rate $1-\frac12\theta\eta$; the bias is
$\mathcal{O}(\sqrt d\,\eta)$.
We devote the rest of this section
to the proof of
Theorem~\ref{thm:disc}.

\subsubsection{Proof of Theorem~\ref{thm:disc}}
\textbf{Step 0: synchronous coupling, interpolation, and the weighted
distance.}
Let $(X'_t,V'_t)_{t\ge0}$ be a stationary solution of the UALD,
$(X'_0,V'_0)\sim\pi$, driven by a Brownian motion $B$; by
Theorem~\ref{thm:invariant} its position marginal is $\pi_X$ at all times,
and we write, as in the continuous section,
$\mathcal G_t':=\mathcal G(X'_t,V'_t)$ and $\Psi_t':=e^{\alpha\mathcal G_t'}$.
On the same probability space, with the \emph{same} $B$, run the continuous
interpolation $(\tilde X_t,\tilde V_t)$ of the scheme \eqref{eq:scheme}: on
each $[j\eta,(j+1)\eta]$, with $\tilde a_j:=a(\tilde X_{j\eta})$ and
$\tilde g_j:=\tilde a_j\nabla U_0(\tilde X_{j\eta})$,
\begin{equation}\label{eq:interp}
\begin{aligned}
  \tilde V_t&=\psi_0(t-j\eta)\tilde V_{j\eta}-\psi_1(t-j\eta)\,\tilde g_j
            +\sqrt{2\gamma}\!\int_{j\eta}^{t}\!\psi_0(t-s)\,dB_s,\\
  \tilde X_t&=\tilde X_{j\eta}+\tilde a_j\left(\psi_1(t-j\eta)\tilde V_{j\eta}
            -\psi_2(t-j\eta)\,\tilde g_j
            +\sqrt{2\gamma}\!\int_{j\eta}^{t}\!\psi_1(t-s)\,dB_s\right),
\end{aligned}
\end{equation}
started from $\tilde V_0=V'_0$ and from $\left(\tilde X_0,\left(X'_0,V'_0\right)\right)$
realizing (up to an arbitrarily small error) the infimum in \eqref{eq:D0}.
Evaluating \eqref{eq:interp} at $t=(j+1)\eta$ recovers \eqref{eq:scheme}, so an
induction gives $\left(\tilde X_{j\eta},\tilde V_{j\eta}\right)\overset{d}{=}(x_j,v_j)$ for
every $j$; note that $\tilde V_0=V'_0\sim\mathcal N(0,\Id)$ is independent
of $\tilde X_0$ because the infimum in \eqref{eq:D0} runs over the
couplings $\Gamma_{\perp}(\nu_0,\pi)$, consistent with the
initialization of the scheme. Consequently, since
$\tilde X_{k\eta}\sim\nu_k$ and $X'_{k\eta}\sim\pi_X$,
\begin{equation}\label{eq:W2-to-L2}
  \mathcal W_2(\nu_k,\pi_X)\ \le\ \left\|\tilde X_{k\eta}-X'_{k\eta}\right\|_{L_2}.
\end{equation}
Stack the difference
$\tilde W_t:=\left(\tilde V_t-V'_t,\ \tilde X_t-X'_t\right)\in\R^{2d}$, write
$|w|_S:=\sqrt{w^\top Sw}$, and define the \emph{weighted} $L_4$ distance
\begin{equation}\label{eq:Aj}
  A_j:=\left(\E\left[\,\left|\tilde W_{j\eta}\right|_S^4\,\Psi_{j\eta}'\,\right]\right)^{1/4}.
\end{equation}
The weight $\Psi'$ is forced on us by Theorem~\ref{thm:contraction}, which
contracts $\E[\mathcal H\Psi']$ and not $\E[\mathcal H]$: the weight does
not cancel between steps and must be carried along; the exponent $4$ is
needed because the local error below contains \emph{products} of unbounded
quantities. Since $\Psi'\ge1$, Jensen's inequality and
$\lambda_{\min}(S)\|w\|^2\le|w|_S^2\le\lambda_{\max}(S)\|w\|^2$ give the two
endpoint relations
\begin{equation}\label{eq:ends}
  \left\|\tilde X_{k\eta}-X'_{k\eta}\right\|_{L_2}
  \le\left(\lambda_{\min}(S)\right)^{-1/2}A_k,
  \qquad
  A_0\le\sqrt{\lambda_{\max}(S)}\;\mathcal D_0,
\end{equation}
where the first inequality in \eqref{eq:ends} is due to
\begin{align*}
\E\left\|\tilde X_{k\eta}-X'_{k\eta}\right\|^2
&\le\left(\lambda_{\min}(S)\right)^{-1}
\E\left[\left|\tilde W_{k\eta}\right|_S^2\right]
\\
&\le\left(\lambda_{\min}(S)\right)^{-1}
\left(\E\left[\left|\tilde W_{k\eta}\right|_S^4\right]\right)^{1/2}\le\left(\lambda_{\min}(S)\right)^{-1}A_k^2,
\end{align*}
and
the second inequality in \eqref{eq:ends} is because $\tilde V_0=V'_0$ makes
\[
\left|\tilde W_0\right|_S^2\le\lambda_{\max}(S)\left\|\tilde X_0-X'_0\right\|^2.
\]
The whole proof
is a bound on the sequence $(A_j)$.

\textbf{Step 1: one continuous step (weighted contraction).}
On $[j\eta,(j+1)\eta]$ introduce the auxiliary \emph{exact} UALD trajectory
$(\hat X_t,\hat V_t)$ driven by the same $B$ and pinned to the interpolant
at the left endpoint, $\left(\hat X_{j\eta},\hat V_{j\eta}\right)=\left(\tilde X_{j\eta},\tilde
V_{j\eta}\right)$. Write $\Xi_t:=\left(\tilde V_t-\hat V_t,\ \tilde X_t-\hat X_t\right)$ for the
local (discretization) difference and
\[
\hat W_t:=\left(\hat V_t-V'_t,\ \hat X_t-X'_t\right),
\]
for the coupled difference of
the two exact trajectories, so that
\[
\tilde W_t=\Xi_t+\hat W_t
\qquad\text{and}\qquad
\hat W_{j\eta}=\tilde W_{j\eta}.
\]
Since
$Y\mapsto\left(\E\left[|Y|^4\Psi_{(j+1)\eta}'\right]\right)^{1/4}$ is a norm (it is an $L_4$ norm
with respect to the finite measure $\Psi_{(j+1)\eta}'\,d\mathbb P$), Minkowski's
inequality implies
\begin{equation}\label{eq:split}
  A_{j+1}
  \le \underbrace{\left(\E\left[\left|\Xi_{(j+1)\eta}\right|_S^4\,\Psi_{(j+1)\eta}'\right]
  \right)^{1/4}}_{=:L_j\ \text{(local error)}}
  \ +\
  \underbrace{\left(\E\left[\left|\hat W_{(j+1)\eta}\right|_S^4\,\Psi_{(j+1)\eta}'\right]
  \right)^{1/4}}_{=:\widehat C_j\ \text{(continuous step)}} .
\end{equation}

The continuous-step term $\widehat C_j$ in \eqref{eq:split}
is controlled by the weighted one-step contraction in the following lemma, which
strengthens Theorem~\ref{thm:contraction} in two directions: it is
conditional on $\mathcal F_s$, so that it can be applied step by step
along the chain, and it is formulated for the $p$-th power of
$\widehat{\mathcal H}$ with $p\in\{1,2\}$, the case $p=2$ being the one
that matches the weighted $L_4$ distance \eqref{eq:Aj}.

\begin{lemma}[Weighted one-step contraction]\label{lem:cont-step}
Let $(\hat X,\hat V)$ and $(X',V')$ be two solutions of the UALD driven by
the same Brownian motion, with $(X'_0,V'_0)\sim\pi$, and let
$\widehat{\mathcal H}_t:=|\hat W_t|_S^2$. Then for $p\in\{1,2\}$ and all
$0\le s\le t$,
\begin{equation}\label{eq:cont-step}
  \E\left[\widehat{\mathcal H}_t^{\,p}\,\Psi_t'\,\big|\,\mathcal F_s\right]
  \ \le\ e^{-(2p\kappa-\delta_{(p)})(t-s)}\,
  \widehat{\mathcal H}_s^{\,p}\,\Psi_s'
  \qquad\text{a.s.}
\end{equation}
Consequently, in \eqref{eq:split},
$\widehat C_j\le e^{-\theta\eta}A_j\le(1-\frac12\theta\eta)A_j$ whenever
$\theta \eta\le1$.
\end{lemma}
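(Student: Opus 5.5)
We mimic the proof of Theorem~\ref{thm:contraction}, now for the process $\Phi^{(p)}_t:=\widehat{\mathcal H}_t^{\,p}\Psi_t'$ and conditionally on $\mathcal F_s$. Since $\hat W$ solves the random ODE \eqref{eq:diffdyn} (synchronous coupling cancels the noise), $\widehat{\mathcal H}_t$ has finite variation. By \eqref{eq:Hineq},
\[
\frac{d}{dt}\widehat{\mathcal H}_t^{\,p}=p\,\widehat{\mathcal H}_t^{\,p-1}\dot{\widehat{\mathcal H}}_t\le p(-2\kappa+K_t)\widehat{\mathcal H}_t^{\,p}.
\]
Here $K_t\le\beta_0(C_1+C_2\sqrt{\mathcal G_t'})$ involves only the primed (stationary) copy, as in \eqref{eq:Kt}; this is legitimate because $E_t$ in \eqref{eq:Et} is built from $a_t$ at $\hat X$ but otherwise from the primed point. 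Itô's product rule, with no cross-variation term, gives
\[
d\Phi^{(p)}_t=\Phi^{(p)}_t\left(p(-2\kappa+K_t)+\frac{(\mathcal A\Psi)_t'}{\Psi_t'}\right)dt+\widehat{\mathcal H}_t^{\,p}\,dM_t .
\]
Inserting \eqref{eq:expdrift} and completing the square in $s=\sqrt{\mathcal G_t'}$, namely $p\beta_0C_2 s-\alpha c's^2\le p^2(\beta_0C_2)^2/(4\alpha c')$, bounds the drift coefficient by $-(2p\kappa-\delta_{(p)})$. This matches \eqref{eq:deltap}, where $\alpha b$ appears unmultiplied by $p$.

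To obtain the conditional statement, we avoid taking expectations of a possibly non-integrable quantity. Instead we note that
\[
Z_t:=e^{(2p\kappa-\delta_{(p)})(t-s)}\Phi^{(p)}_t,\qquad t\ge s,
\]
has nonpositive drift and is therefore a nonnegative local supermartingale on $[s,\infty)$. Localizing with stopping times $\tau_n$ and applying conditional Fatou gives $\E[Z_t\mid\mathcal F_s]\le Z_s$, which is exactly \eqref{eq:cont-step}. This route needs no true-martingale property, so Lemma~\ref{lem:expmoment} is not even required; it would only be needed to obtain equality.

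\textbf{Main obstacle.} The delicate point is that $\hat W_{j\eta}=\tilde W_{j\eta}$ is random and $\mathcal F_{j\eta}$-measurable. We must check that the drift inequality \eqref{eq:Hineq} holds pathwise for this pair of exact trajectories started at time $j\eta$. It does: Proposition~\ref{prop:linear} applies pointwise because $H_t\in[m\Id,L\Id]$, and \eqref{eq:Ebound} is a deterministic bound in terms of $\mathcal G_t'$. We must also check that $\Psi'$ is evaluated along the stationary copy, whose generator bound \eqref{eq:expdrift} holds everywhere.

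For the consequence, take $p=2$, $s=j\eta$, $t=(j+1)\eta$, and take total expectations to get
\[
\widehat C_j^{\,4}=\E\bigl[\widehat{\mathcal H}_{(j+1)\eta}^{2}\Psi_{(j+1)\eta}'\bigr]\le e^{-(4\kappa-\delta_{(2)})\eta}A_j^4=e^{-4\theta\eta}A_j^4 .
\]
Hence $\widehat C_j\le e^{-\theta\eta}A_j$. Finally, $e^{-x}\le1-x/2$ for $x\in[0,1]$ (by convexity on $[0,1]$, since $e^{-1}\le 1/2$) gives $\widehat C_j\le(1-\tfrac12\theta\eta)A_j$ whenever $\theta\eta\le1$.
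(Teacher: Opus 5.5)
Your proposal is correct and follows the paper's proof essentially step for step. It uses the same ingredients: a pathwise bound on $\frac{d}{dt}\widehat{\mathcal H}_t^{\,p}$ with the unbounded entries of $E_t$ evaluated along the stationary copy, the product rule with Dynkin's formula for $\Psi'$, completing the square to produce exactly $\delta_{(p)}$, and a nonnegative local supermartingale closed by localization and conditional Fatou. The consequence also follows the paper: $p=2$, the tower property, and $e^{-x}\le 1-\frac12x$ on $[0,1]$. The only blemish is notational. Your product-rule display should be an inequality on the drift rather than an identity, because you have already substituted the differential inequality for $\widehat{\mathcal H}_t^{\,p}$.
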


\begin{proof}
The pair is a synchronously coupled difference of exactly the kind analyzed
in the continuous-time analysis of
Section~\ref{sec:continuous}, namely in the synchronous coupling
\eqref{eq:UALD:coupled:1}--\eqref{eq:UALD:coupled:2}, in the derivation
of \eqref{eq:diffdyn}--\eqref{eq:Hineq} and in the proof of
Theorem~\ref{thm:contraction}, with the stationary copy playing the role of the
primed trajectory in \eqref{eq:Et}--\eqref{eq:Ebound}; this matters because
the unbounded entries of $E_t$ ($V'_t$ and $\nabla U_0(X'_t)$) are then
evaluated along the \emph{stationary} path, while $a_t=a(\hat X_t)\in
[a_{\min},a_{\max}]$ and $\|F_t\|\le c_F$ are bounded regardless. The
derivation of \eqref{eq:Hineq} is pathwise, so
\begin{align*}
\dot{\widehat{\mathcal H}}_t\le(-2\kappa+K_t)\widehat{\mathcal H}_t,
\end{align*}
with $K_t$ as in \eqref{eq:Kt},
\begin{align*}
K_t\le\beta_0\left(C_1+C_2\sqrt{\mathcal G_t'}\right),
\end{align*}
and since
$\widehat{\mathcal H}^{\,p}$ is absolutely continuous,
\begin{align*}
\frac{d}{dt}\widehat{\mathcal H}^{\,p}_t\le
p(-2\kappa+K_t)\widehat{\mathcal H}^{\,p}_t.
\end{align*}
By Dynkin's formula,
$d\Psi_t'=(\mathcal A\Psi)_t'\,dt+dM_t$ with $M$ a local martingale, and as
$\widehat{\mathcal H}^{\,p}$ has finite variation the product rule carries
no cross-variation term. Hence, inserting \eqref{eq:expdrift} in the form
\[
\frac{(\mathcal A\Psi)_t'}{\Psi_t'}\le-\alpha c'\mathcal G_t'+\alpha b,
\]
and writing
$s:=\sqrt{\mathcal G_t'}$, the process
\[
Z_t:=e^{(2p\kappa-\delta_{(p)})t}\,\widehat{\mathcal H}^{\,p}_t\Psi_t'
\]
satisfies
\begin{align*}
  dZ_t&\le Z_t\left(\left(2p\kappa-\delta_{(p)}\right)-2p\kappa
  +p\beta_0C_1+p\beta_0C_2\,s-\alpha c's^2+\alpha b\right)dt
  \\
  &\qquad\qquad\qquad
  +e^{(2p\kappa-\delta_{(p)})t}\widehat{\mathcal H}^{\,p}_t\Psi_t'\,dM_t/\Psi_t'.
\end{align*}
By completing the square, we get
\[
p\beta_0C_2\,s-\alpha c's^2\le \frac{p^2(\beta_0C_2)^2}{4\alpha c'},
\]
which implies that
the drift
factor satisfies
\begin{align*}
&\left(2p\kappa-\delta_{(p)}\right)-2p\kappa
  +p\beta_0C_1+p\beta_0C_2\,s-\alpha c's^2+\alpha b
\\
&\le-\delta_{(p)}+p\beta_0C_1+\alpha b
+\frac{p^2(\beta_0C_2)^2}{4\alpha c'}=0,
\end{align*}
by \eqref{eq:deltap}. Thus $Z$ is a
nonnegative local supermartingale; a localizing sequence and Fatou's lemma
(conditionally on $\mathcal F_s$, exactly as in the proof of
Lemma~\ref{lem:expmoment}) give
$\E[Z_t\mid\mathcal F_s]\le Z_s$, which is the claim. For $p=1$, $s=0$ this
recovers Theorem~\ref{thm:contraction}. Finally, applying the claim \eqref{eq:cont-step} with
$p=2$, $s=j\eta$, $t=(j+1)\eta$ and $\widehat{\mathcal H}_{j\eta}=
\left|\tilde W_{j\eta}\right|_S^2$,
\[
  \widehat C_j^{\,4}
  =\E\left[\E\left[\widehat{\mathcal H}^{\,2}_{(j+1)\eta}\Psi_{(j+1)\eta}'
  \,\big|\,\mathcal F_{j\eta}\right]\right]
  \le e^{-(4\kappa-\delta_{(2)})\eta}\,A_j^4=e^{-4\theta\eta}A_j^4 ,
\]
and $e^{-x}\le1-\frac12x$ for $x\in[0,1]$ gives the second bound.
\end{proof}
\textbf{Step 2: moment bounds.}
The local error will be expressed through the path quantities
\[
  M_j:=\sup_{0\le\tau\le \eta}\left\|\hat V_{j\eta+\tau}\right\|,
  \qquad
  N_j:=\left\|\nabla U_0(\tilde X_{j\eta})\right\| .
\]
Unlike the classical KLMC in \cite{dalalyan2018kinetic}, the local error below contains the \emph{products}
$M_jN_j$ and $M_j^2$, so an $L_2$ bound on $(M_j,N_j)$ does not suffice.
Instead of the deviation splitting
in \cite{dalalyan2018kinetic} (which controls
$M_j$ by $\sqrt d+A_j$ in $L_2$), we prove \emph{uniform polynomial moment
bounds} directly: for the stationary path from the drift condition
\eqref{eq:drift}, and for the algorithm iterates from a discrete
Foster--Lyapunov inequality. This closes the argument at any polynomial
order and, as a by-product, removes the $A_j$-dependence of the local
error entirely.

We begin with the stationary copy $(X',V')$, whose moments
follow from the Gaussian velocity marginal of $\pi$ and from the drift
condition \eqref{eq:drift}; part (iii) below is the stationary
exponential-moment bound anticipated in \eqref{eq:cw}.

\begin{lemma}[Moments along the stationary path]\label{lem:statmom}
For every $p\ge1$ there is $c_\sharp(p)$, depending only on
$(\gamma,m,L,\omega_0,\eta_{0},p)$, such that for all $t\ge0$ and $\eta\le1$:
\begin{itemize}
\item[(i)] $\left\|\sup_{0\le\tau\le \eta}\|V'_{t+\tau}\|\right\|_{L_p}
  \le c_\sharp(p)\sqrt d$;
\item[(ii)] $\left\|\,\|\nabla U_0(X'_t)\|\,\right\|_{L_p}\le
  c_\sharp(p)\sqrt d$;
\item[(iii)] $\E\left[e^{2\alpha\mathcal G_t'}\right]\le e^{2\alpha b/c''}$,
  so that $c_w$ of \eqref{eq:cw} is finite and dimension-free under
  \eqref{eq:standing}.
\end{itemize}
\end{lemma}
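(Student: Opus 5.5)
The three parts use different tools. Parts (i) and (ii) rest on stationarity, which reduces them to moment bounds for $\pi$. Part (iii) rests on the exponential drift inequality \eqref{eq:expdrift} with $\alpha$ replaced by $2\alpha$. Since $\mathrm{Law}(X'_t,V'_t)=\pi$ for every $t$ by Theorem~\ref{thm:invariant}, none of the bounds will depend on $t$.

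\emph{Part (ii).} Since $\|\nabla U_0\|^2\le 2L(U_0-U_0^\ast)\le L^2\|x-x_\ast\|^2$, it suffices to bound $\|\,\|X'_t-x_\ast\|\,\|_{L_p}$ under $\pi_X\propto e^{-U}$. Assumption~\ref{ass:main}(A) gives $e^{-U}\le e^{2\omega_0}\cdot e^{-U_0}/\!\int e^{-U_0}$ after normalization. Hence $\E_{\pi_X}[\|x-x_\ast\|^p]\le e^{2\omega_0}\E_{\mu_0}[\|x-x_\ast\|^p]$ with $\mu_0\propto e^{-U_0}$. For the $m$-strongly log-concave $\mu_0$, the standard bound $\E_{\mu_0}\|x-x_\ast\|^p\le (C_p\,d/m)^{p/2}$ holds (e.g.\ via Gaussian-type concentration of the Lipschitz function $\|x-x_\ast\|$ around its mean, with mean $\lesssim\sqrt{d/m}$). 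This gives $\|\,\|\nabla U_0(X'_t)\|\,\|_{L_p}\le c_\sharp(p)\sqrt d$, with $c_\sharp$ depending on $(L,m,\omega_0,p)$.

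\emph{Part (i).} Use the variation-of-constants formula on $[t,t+\eta]$:
\[
V'_{t+\tau}=e^{-\gamma\tau}V'_t-\int_0^\tau e^{-\gamma(\tau-s)}a(X'_{t+s})\nabla U_0(X'_{t+s})\,ds+\sqrt{2\gamma}\int_0^\tau e^{-\gamma(\tau-s)}\,dB_{t+s}.
\]
This gives the pathwise bound
\[
\sup_{\tau\le\eta}\|V'_{t+\tau}\|\le\|V'_t\|+a_{\max}\int_0^\eta\|\nabla U_0(X'_{t+s})\|\,ds+\sqrt{2\gamma}\sup_{\tau\le\eta}\Big\|\int_0^\tau e^{-\gamma(\tau-s)}dB_{t+s}\Big\|.
\]
The three terms are handled as follows.
\begin{itemize}
\item[1.] $V'_t\sim\mathcal N(0,\Id)$, so its $L_p$ norm is $\lesssim\sqrt{d+p}$.
\item[2.] By Minkowski's integral inequality, stationarity, part (ii) and $\eta\le1$, the integral term is at most $a_{\max}c_\sharp(p)\sqrt d$.
\item[3.] For the stochastic convolution, integrate by parts: $\int_0^\tau e^{-\gamma(\tau-s)}dB_{t+s}=\tilde B_\tau-\gamma\int_0^\tau e^{-\gamma(\tau-s)}\tilde B_s\,ds$ with $\tilde B_s=B_{t+s}-B_t$. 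Its supremum is therefore at most $(1+\gamma\eta)\sup_{s\le\eta}\|\tilde B_s\|$. Doob's (or the BDG) inequality bounds the $L_p$ norm of the latter by $C_p\sqrt{d\eta}$.
\end{itemize}

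\emph{Part (iii).} Since $2\alpha<c/C_\Gamma$ by \eqref{eq:alpharange}, \eqref{eq:expdrift} applied with $2\alpha$ gives
\[
\mathcal A e^{2\alpha\mathcal G}\le 2\alpha e^{2\alpha\mathcal G}\left(-c''\mathcal G+b\right).
\]
Integrating against the invariant law $\pi$, where $\int\mathcal A f\,d\pi=0$, yields $\E_\pi[\mathcal G e^{2\alpha\mathcal G}]\le (b/c'')\E_\pi[e^{2\alpha\mathcal G}]$. To convert this into the claim, set $f(s):=\log\E_\pi[e^{2\alpha s\mathcal G}]$. This $f$ is convex with $f(0)=0$ and $f'(1)=2\alpha\,\E_\pi[\mathcal G e^{2\alpha\mathcal G}]/\E_\pi[e^{2\alpha\mathcal G}]\le 2\alpha b/c''$. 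Convexity gives $f(1)\le f(0)+f'(1)$, which is exactly $\E_\pi[e^{2\alpha\mathcal G}]\le e^{2\alpha b/c''}$.

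\emph{Main obstacle.} The delicate step is justifying $\int\mathcal A e^{2\alpha\mathcal G}d\pi=0$, since $e^{2\alpha\mathcal G}$ is unbounded. First, integrability: $\mathcal G\le\frac32\bar{\mathcal G}$ by \eqref{eq:equiv}, and $\alpha<c/(2C_\Gamma)$ is small (here $c\lesssim\gamma$ and $C_\Gamma\ge8\gamma$), so $3\alpha<1/2$. Hence $e^{2\alpha\mathcal G}\le e^{3\alpha\bar{\mathcal G}}$ is integrable against $e^{-U_0-\|v\|^2/2}$, and therefore against $\pi$. Second, the identity itself: I would apply the drift inequality to the truncations $\phi_R(e^{2\alpha\mathcal G})$ with $\phi_R$ concave and nondecreasing, use Dynkin's formula along the stationary process up to time $1$, and pass $R\to\infty$ by monotone convergence. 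If that route becomes messy, the fallback is to start the process at $\pi$, localize, use $\E[\mathcal A$-drift$]$ together with the positive part $b\,e^{2\alpha\mathcal G}$ (integrable by the above), and apply Fatou's lemma as in Lemma~\ref{lem:expmoment}.
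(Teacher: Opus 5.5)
Your proposal is correct. Part (i) follows the paper essentially verbatim, while (ii) and (iii) take different routes.

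\textbf{Part (ii).} The paper stays inside its Lyapunov machinery. From $\E_\pi[\mathcal A\mathcal G^p]=0$, the drift \eqref{eq:drift} and the carr\'e-du-champ bound \eqref{eq:carre}, it derives $c\,\E[\mathcal G^p]\le(b+(p-1)C_\Gamma)\E[\mathcal G^{p-1}]$. This gives $\E_\pi[\mathcal G^p]\le((b+pC_\Gamma)/c)^p$, and the paper concludes with $\|\nabla U_0\|^2\le4L\mathcal G$. You argue statically instead, through the density-ratio bound $\pi_X\le e^{2\omega_0}\mu_0$ and moment/concentration bounds for the $m$-strongly log-concave $\mu_0$.

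What your route buys:
\begin{itemize}
\item It needs no generator calculus.
\item It needs no extended-domain justification for the unbounded $\mathcal G^p$, which the paper only asserts.
\item Its constant depends on $(L,m,\omega_0,p)$ alone. The paper's constant carries a factor $1/c$, and $c\le m\eta_0a_{\min}$ is small when $\eta_0$ is.
\end{itemize}
The cost is importing external facts: $\E_{\mu_0}\|x-x_\ast\|^2\le d/m$, and sub-Gaussian concentration of Lipschitz functions under strongly log-concave laws. The paper's argument is self-contained and mirrors the discrete Lyapunov argument of Lemma~\ref{lem:itermom}.

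\textbf{Part (iii).} The paper applies the stationarity identity at every level $\beta\in(0,2\alpha]$ and integrates $\frac{d}{d\beta}\log\E[e^{\beta\mathcal G}]\le b/c''$. You use the identity only at the top level $2\alpha$. Convexity of $\beta\mapsto\log\E[e^{\beta\mathcal G}]$ then carries the bound down: its derivative is nondecreasing, so the top level is the worst case. This is the same mechanism with a single application of the identity.

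Your treatment of the ``main obstacle'' also fills a step the paper passes over. Note that only $\E_\pi[\mathcal A e^{2\alpha\mathcal G}]\ge0$ is actually needed, not equality. That is exactly the inequality your localization-plus-Fatou fallback produces, once $\E_\pi[e^{2\alpha\mathcal G}]<\infty$ is secured by your bound $e^{2\alpha\mathcal G}\le e^{3\alpha\bar{\mathcal G}}$ with $3\alpha<1/2$.
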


\begin{proof}
Under $\pi$ the velocity marginal is exactly $\mathcal N(0,\Id)$, and hence all
its $L_p$ norms are bounded by $c(p)\sqrt d$, with the explicit constant
\begin{equation}\label{eq:chimoment}
  \left\|\,\|V'_t\|\,\right\|_{L_p}
  =\left(\E\left[\chi_d^{\,p}\right]\right)^{1/p}
  =\sqrt2\left(\frac{\Gamma\!\left(\frac{d+p}{2}\right)}{\Gamma\!\left(\frac d2\right)}\right)^{1/p}
  \le\sqrt{d+p}\ \le\ \sqrt{1+p}\,\sqrt d ,
\end{equation}
where $\chi_d$ denotes a chi-distributed random variable with $d$ degrees of
freedom; the first inequality follows from
\begin{align*}
\E\left[\chi_d^{\,p}\right]\le\left(\E\left[\chi_d^{\,2\lceil p/2\rceil}\right]\right)^{p/(2\lceil p/2\rceil)},
\end{align*}
and
\begin{align*}
\E\left[\chi_d^{\,2r}\right]=d(d+2)\cdots(d+2r-2)\le(d+p)^{r},
\end{align*}
and the second inequality follows from
$d\ge1$. Thus one may take $c(p)=\sqrt{1+p}$.
For the potential part, stationarity
gives $\E[\mathcal A\mathcal G^p]=0$ (a standard localization argument,
using the drift \eqref{eq:drift} to justify that $\mathcal G^p$ is in the
domain of the extended generator and integrable under $\pi$); since
\begin{align*}
\mathcal A\mathcal G^{p}=p\mathcal G^{p-1}\mathcal A\mathcal G
+\gamma p(p-1)\mathcal G^{p-2}\|\nabla_v\mathcal G\|^2,
\end{align*}
and
$\gamma\|\nabla_v\mathcal G\|^2\le C_\Gamma\mathcal G$ by \eqref{eq:carre},
the drift \eqref{eq:drift} yields
\begin{align*}
c\,\E[\mathcal G^p]\le\left(b+(p-1)C_\Gamma\right)\E[\mathcal G^{p-1}],
\end{align*}
and hence by induction
\begin{align*}
\E[\mathcal G^p]\le\left((b+pC_\Gamma)/c\right)^p\le(c(p)\,d)^p,
\end{align*}
using
$b=\gamma d$ and $d\ge1$, with $c(p)=(\gamma+pC_\Gamma)/c$. Together with (by \eqref{eq:equiv})
\[
\|\nabla U_0\|^2\le2L(U_0-U_0^\ast)\le2L\bar{\mathcal G}\le4L\mathcal G,
\]
this proves (ii). For (iii), stationarity and
\eqref{eq:expdrift} at level $\beta\in(0,2\alpha]$ give
\begin{align*}
\E\left[\mathcal G e^{\beta\mathcal G}\right]\le\frac{b}{c-\beta C_\Gamma}
\E\left[e^{\beta\mathcal G}\right]\le\frac{b}{c''}\E\left[e^{\beta\mathcal G}\right],
\end{align*}
which implies that
\begin{align*}
\frac{d}{d\beta}\log\E\left[e^{\beta\mathcal G}\right]\le\frac{b}{c''}.
\end{align*}
Integrating over
$[0,2\alpha]$ gives (iii). For (i), from the velocity equation and
$0\le\psi_0\le1$,
\[
  \sup_{\tau\le \eta}\left\|V'_{t+\tau}\right\|
  \le\left\|V'_t\right\|+a_{\max}\!\int_0^\eta\!\left\|\nabla U_0(X'_{t+s})\right\|\,ds
  +\sqrt{2\gamma}\,\sup_{\tau\le \eta}
  \left\|\int_0^\tau e^{-\gamma(\tau-s)}dB_{t+s}\right\|,
\]
where
\[
\left\Vert a_{\max}\!\int_0^\eta\!\left\|\nabla U_0(X'_{t+s})\right\|\,ds\right\Vert_{L_{p}}
\leq a_{\max}\eta\,c_\sharp(p)\sqrt d,
\]
by Minkowski's
integral inequality and (ii), and the stochastic convolution satisfies
\[
\int_0^\tau e^{-\gamma(\tau-s)}dB_s=B_\tau-\gamma\int_0^\tau
e^{-\gamma(\tau-s)}B_s\,ds,
\]
so that
\[
\sup_{\tau\leq\eta}
\left\Vert\int_0^\tau e^{-\gamma(\tau-s)}dB_s\right\Vert
\leq
(1+\gamma\eta)\sup_{\tau\le \eta}\|B_\tau\|,
\]
and by Doob's inequality, we conclude that
\[
\left\Vert
\sup_{\tau\leq\eta}
\left\Vert\int_0^\tau e^{-\gamma(\tau-s)}dB_s\right\Vert\right\Vert_{L_{p}}
\leq
(1+\gamma\eta)
\left\Vert
\sup_{\tau\le \eta}\|B_\tau\|\right\Vert_{L_{p}}
\leq
c(p)\sqrt{\eta d},
\]
where, for $p>1$, one may take $c(p)=\frac{p}{p-1}\sqrt{1+p}\,(1+\gamma)$
by Doob's $L_p$ maximal inequality together with \eqref{eq:chimoment} applied
to $B_\eta/\sqrt\eta\sim\mathcal N(0,\Id)$.
Combining this proves (i).
\end{proof}

We next turn to the iterates of the scheme \eqref{eq:scheme},
which are not stationary. The following lemma shows that their polynomial
moments are nevertheless bounded uniformly in $k$, with the dimension
dependence inherited from the initialization \eqref{eq:init}; the proof
is a discrete Foster--Lyapunov argument for the hypocoercive energy
$\mathcal G$ of \eqref{eq:Gdef}, i.e.\ the discrete-time counterpart of
Lemma~\ref{lem:drift}.

\begin{lemma}[Uniform moments of the iterates]\label{lem:itermom}
Let \eqref{eq:init} hold and recall $\bar{\mathcal G}_k$ from \eqref{eq:Gk}.
There are $\bar{\eta}\in(0,1]$ and, for each $p\in[1,8]$, a
constant $c_I(p)$, both depending only on
$(\gamma,m,L,\omega_0,\eta_{0},p)$, such that for all $\eta\le \bar{\eta}$,
\[
  \sup_{k\ge0}\left(\E\left[\bar{\mathcal G}_k^{\,p}\right]\right)^{1/p}
  \ \le\ c_I(p)\,(1+D_0)\,d .
\]
\end{lemma}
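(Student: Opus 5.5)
We will prove a discrete Foster--Lyapunov inequality for $\mathcal G^p$ along the chain,
\begin{equation*}
  \E\left[\mathcal G_{k+1}^{\,p}\,\big|\,\mathcal F_{k\eta}\right]\le\left(1-\tfrac{c}{4}p\,\eta\right)\mathcal G_k^{\,p}+C(p)\,\eta\,d^{\,p},
\end{equation*}
for $\eta\le\bar\eta$. Iterating this gives $\sup_k\E[\mathcal G_k^p]\le\E[\mathcal G_0^p]+4C(p)d^p/(cp)$. The equivalence \eqref{eq:equiv} then transfers the bound to $\bar{\mathcal G}_k$. The initial term is handled by \eqref{eq:init}: we have $\bar{\mathcal G}_0\le(U_0(x_0)-U_0^\ast)+\frac12\|v_0\|^2$ with $v_0\sim\mathcal N(0,\Id)$, so by Minkowski and \eqref{eq:chimoment}, $\|\bar{\mathcal G}_0\|_{L_p}\le D_0d+c(p)d$ for $p\le8$. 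This produces the factor $(1+D_0)d$.

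To get the one-step inequality, we will work on the interpolation \eqref{eq:interp} over $[k\eta,(k+1)\eta]$ and apply It\^o's formula to $\mathcal G^p(\tilde X_t,\tilde V_t)$. The frozen generator is
\begin{equation*}
  \tilde{\mathcal A}_k=a_k\,v\cdot\nabla_x-(\gamma v+g_k)\cdot\nabla_v+\gamma\Delta_v,
\end{equation*}
so that $\tilde{\mathcal A}_k\mathcal G^p=\mathcal A\mathcal G^p+R_t$. The remainder $R_t$ collects the frozen-coefficient discrepancies:
\begin{itemize}
\item[(a)] $(a_k-a(\tilde X_t))\,\tilde V_t\cdot\nabla_x\mathcal G^p$;
\item[(b)] $(a(\tilde X_t)\nabla U_0(\tilde X_t)-g_k)\cdot\nabla_v\mathcal G^p$.
\end{itemize}
The main part is treated with Lemma~\ref{lem:drift} and \eqref{eq:carre}, exactly as in Lemma~\ref{lem:statmom}:
\begin{equation*}
  \mathcal A\mathcal G^p\le-pc\,\mathcal G^p+p(b+(p-1)C_\Gamma)\mathcal G^{p-1}\le-\tfrac{pc}{2}\mathcal G^p+C(p)d^p,
\end{equation*}
where the last step uses Young's inequality and $b=\gamma d$.

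For the remainder we use that on one step
\begin{equation*}
  \|\tilde X_t-\tilde X_{k\eta}\|\le a_{\max}\left(\eta\|\tilde V_{k\eta}\|+\eta^2\|g_k\|+\sqrt{2\gamma}\,\|\textstyle\int\psi_1\,dB\|\right),
\end{equation*}
and that $a$ and $a\nabla U_0$ are Lipschitz with constants of order $a_{\max}(\omega_1+L+\omega_1\|\nabla U_0\|)$ (cf.\ the non-Lipschitz remark \eqref{eq:glip}). Together with $\|\nabla_x\mathcal G\|,\|\nabla_v\mathcal G\|\lesssim\sqrt{\mathcal G}$ (from $L$-smoothness and \eqref{eq:equiv}), this gives a bound of the form
\begin{equation*}
  |R_t|\lesssim\mathcal G_t^{p-1}\left(\eta\,\mathcal G_{k}^{3/2}+\sqrt{\mathcal G_t}\,\|\textstyle\int\psi_1\,dB\|\,(1+\sqrt{\mathcal G_k})\right).
\end{equation*}

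The main obstacle is this remainder: the force is not globally Lipschitz, so $R_t$ is superlinear in $\mathcal G$ (order $\mathcal G^{p+1/2}$ times $\eta$) and cannot be absorbed naively by the drift $-\tfrac{pc}{2}\mathcal G^p$. We will try two remedies, first by restricting the step size.
\begin{itemize}
\item[(1)] Combine $\eta\,\mathcal G^{p+1/2}$ with the within-step growth bound. Using conditional Gaussian moments of $\int\psi_1\,dB$ (variance $\le\eta^3$) and $\sup_t\mathcal G_t\lesssim\mathcal G_k+\eta^2\mathcal G_k^2+\text{noise}$, we will show that $\E[\int_{k\eta}^{(k+1)\eta}|R_t|\,dt\mid\mathcal F_k]\le\eta\left(\tfrac{pc}{4}\mathcal G_k^p+C d^p\right)$ on the event $\{\eta\sqrt{\mathcal G_k}\le\varepsilon\}$.
\item[(2)] On the complement, compare directly with the explicit one-step map \eqref{eq:scheme}. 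There we expect the strong convexity of $U_0$ to make the deterministic part contractive for $\eta\le\bar\eta$: the map $x\mapsto x-a_k\psi_2g_k$ together with the velocity damping $\psi_0$ should make $U_0(x_{k+1})$ decrease.
\end{itemize}
If this case split proves delicate, we fall back on proving the bound for $\mathcal G\wedge R$, a localized truncation, and letting $R\to\infty$. The noise terms, of order $\sqrt{\eta d}$ times polynomial factors, only contribute $C(p)\eta d^p$ via Young's inequality. Restricting to $p\le8$ keeps all constants finite and dependent only on $(\gamma,m,L,\omega_0,\eta_0,p)$.
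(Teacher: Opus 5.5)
Your outer structure is the same as the paper's: a conditional Foster--Lyapunov inequality for $\mathcal G^p$, iterated, transferred to $\bar{\mathcal G}$ via \eqref{eq:equiv}, with the initial term bounded by \eqref{eq:init} and \eqref{eq:chimoment}. The gap is the one-step inequality itself, which is the whole content of the lemma and which you do not establish.

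The superlinear remainder you call the main obstacle is created by your split $\tilde{\mathcal A}_k=\mathcal A+R_t$. Reintroducing the exact coefficients $a(\tilde X_t)$ and $g(\tilde X_t)$ forces you to compare $a$ and $g=a\nabla U_0$ at two different points. That costs $c_F=a_{\max}\omega_1$ times $\|v\|$ or $\|\nabla U_0\|$, which produces the terms of size $c_F\eta\,\mathcal G^{p+1/2}$. None of your remedies closes this:
\begin{itemize}
\item[(1)] This remedy needs $\varepsilon\lesssim c/c_F$. Then $\bar\eta$ and the constants depend on $\omega_1$, which the statement excludes. In any case the complement $\{\eta\sqrt{\mathcal G_k}>\varepsilon\}$ still has to be handled, and it is the typical event once $\eta\gtrsim\varepsilon/\sqrt d$, since $\mathcal G_k$ is of order $d$.
\item[(2)] This remedy therefore carries the entire proof, and it is stated only as an expectation. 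Its mechanism is also wrong: $U_0(x_{k+1})$ need not decrease. The transport contribution $\psi_1\langle g_k,v_k\rangle$ is of order $\eta$ and of either sign, while $-\psi_2a_k^2\|\nabla U_0(x_k)\|^2$ is only of order $\eta^2$. Only the full hypocoercive energy dissipates, and position dissipation enters solely through the $\eta_0$ cross term.
\item[(3)] Truncating to $\mathcal G\wedge R$ justifies integrability but does nothing against superlinear growth of the remainder.
\end{itemize}

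The missing observation is that the frozen scheme never needs any regularity of $a$ or $g$. Compute the drift of $\mathcal G$ directly for the frozen dynamics, either by applying $\tilde{\mathcal A}_k$ to $\mathcal G$ without splitting off $\mathcal A$, or, as the paper does, on the explicit one-step map. The same frozen $a_k$ multiplies both the transport term and the force. So the Hamiltonian cross terms cancel up to $a_k\langle v,\nabla U_0(x)-\nabla U_0(x_k)\rangle$ in continuous form, or up to $\psi_1(1-\psi_0)\langle g_k,v_k\rangle\le\gamma\eta^2\|v_k\|\,\|g_k\|$ in discrete form. The only two-point comparison left is $\nabla U_0(x_{k+1})$ versus $\nabla U_0(x_k)$, which $L$-smoothness of $U_0$ controls. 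With $\|g_k\|\le a_{\max}\|\nabla U_0(x_k)\|$, every error term is $O(\eta)(\mathcal G_k+d)$. This yields $\E[\mathcal G_{k+1}\mid\mathcal F_k]\le(1-c_d\eta)\mathcal G_k+C_0\eta d$, with constants free of $\omega_1$ and $d$.

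The paper then lifts this to $p$-th powers by writing $\mathcal G_{k+1}=\mu_k+\Delta_k$, with $\mu_k$ the conditional mean. It uses conditional Gaussian $L_q$ bounds on the fluctuation $\Delta_k$ inside a binomial expansion. This also avoids a step your It\^o route would need but your sketch omits: converting $\int_{k\eta}^{(k+1)\eta}\mathcal G_t^p\,dt$ back into $\eta\,\mathcal G_k^p$.
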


\begin{proof}
We verify a one-step drift for the hypocoercive energy $\mathcal G$ of
\eqref{eq:Gdef}, then upgrade it to $p$-th powers, and finally transfer to
$\bar{\mathcal G}$ by \eqref{eq:equiv}. Throughout, $C$ denotes a constant
depending only on $(\gamma,m,L,\omega_0,\eta_{0},p)$ whose value may change
from line to line, and we assume $\gamma\eta\le\frac12$ so that
\eqref{eq:psibounds}--\eqref{eq:sigmabounds} apply.

\textbf{Step (i): one-step drift for $\mathcal G$.}
Condition on $\mathcal F_k$ and write $x=x_k$, $v=v_k$,
$a=a_k\in[a_{\min},a_{\max}]$, $g=g_k=a\nabla U_0(x)$, $\psi_i=\psi_i(\eta)$.
It is convenient to split the update \eqref{eq:scheme} into its
$\mathcal F_k$-measurable part and its noise part,
\begin{equation}\label{eq:vbarxbar}
\begin{aligned}
  &v_{k+1}=\bar v+\sqrt{2\gamma}\,\xi_{k+1},\qquad \bar v:=\psi_0v-\psi_1g,
  \\
  &x_{k+1}=\bar x+\sqrt{2\gamma}\,a\,\xi'_{k+1},\qquad
  \bar x:=x+a\left(\psi_1v-\psi_2g\right).
\end{aligned}
\end{equation}

\textbf{(a) Kinetic part.} Using $\E[\xi_{k+1}\mid\mathcal F_k]=0$ and
$\E[\|\xi_{k+1}\|^2\mid\mathcal F_k]=\sigma_0^2d$,
\begin{align}
  \E\left[\frac12\|v_{k+1}\|^2\,\big|\,\mathcal F_k\right]
  &=\frac12\psi_0^2\|v\|^2-\psi_0\psi_1\langle v,g\rangle
  +\frac12\psi_1^2\|g\|^2+\gamma\sigma_0^2d
  \nonumber
  \\
  &\le\frac12\|v\|^2-\frac{\gamma\eta}{2}\|v\|^2
  -\psi_0\psi_1\langle v,g\rangle+\frac{\eta^2}{2}\|g\|^2+\gamma\eta d,
  \label{eq:kinetic:part}
\end{align}
where we used $\frac12\psi_0^2=\frac12-\frac12(1-e^{-2\gamma\eta})
\le\frac12-\frac12\gamma\eta$ and \eqref{eq:sigmabounds}.

\textbf{(b) Potential part.} By $L$-smoothness of $U_0$,
\[
  \E\left[U_0(x_{k+1})\,\big|\,\mathcal F_k\right]
  \le U_0(x)+\E\left[\langle\nabla U_0(x),x_{k+1}-x\rangle\,\big|\,\mathcal F_k\right]
  +\frac L2\E\left[\|x_{k+1}-x\|^2\,\big|\,\mathcal F_k\right].
\]
Since $a\langle\nabla U_0(x),v\rangle=\langle g,v\rangle$ and
$\langle\nabla U_0(x),ag\rangle=a^2\|\nabla U_0(x)\|^2\ge0$,
\begin{equation}\label{eq:potential:cross}
  \E\left[\langle\nabla U_0(x),x_{k+1}-x\rangle\,\big|\,\mathcal F_k\right]
  =\psi_1\langle g,v\rangle-\psi_2a^2\|\nabla U_0(x)\|^2
  \le\psi_1\langle g,v\rangle ,
\end{equation}
while, by $\|u_1+u_2+u_3\|^2\le3(\|u_1\|^2+\|u_2\|^2+\|u_3\|^2)$ and
\eqref{eq:psibounds}--\eqref{eq:sigmabounds},
\begin{align}
  \E\left[\|x_{k+1}-x\|^2\,\big|\,\mathcal F_k\right]
  &\le3a_{\max}^2\left(\psi_1^2\|v\|^2+\psi_2^2\|g\|^2+2\gamma\sigma_1^2d\right)
  \nonumber
  \\
  &\le3a_{\max}^2\left(\eta^2\|v\|^2+\frac{\eta^4}{4}\|g\|^2
  +\frac{2\gamma\eta^3}{3}d\right).\label{eq:xincr}
\end{align}

\textbf{(c) The two cross terms.} The two Hamiltonian cross terms in \eqref{eq:kinetic:part} and \eqref{eq:potential:cross} are precisely
$-\psi_0\psi_1\langle v,g\rangle$, produced by the kinetic part (a), and
$+\psi_1\langle g,v\rangle$, produced by the potential part (b). In the
continuous-time computation of Lemma~\ref{lem:drift} these two terms cancel
exactly; after discretization they only \emph{almost} cancel, leaving
\[
  -\psi_0\psi_1\langle v,g\rangle+\psi_1\langle g,v\rangle
  =\psi_1\left(1-\psi_0\right)\langle g,v\rangle
  \le\gamma\eta^2\|v\|\,\|g\|
  \le\frac{\gamma\eta^2}{2}\left(\|v\|^2+\|g\|^2\right),
\]
by \eqref{eq:psibounds}; that is, the exact continuous cancellation survives
discretization up to an error of order $\eta^2$.

\textbf{(d) Hypocoercive cross term.} Write
\begin{equation}\label{eq:hypo:split}
  \langle v_{k+1},\nabla U_0(x_{k+1})\rangle
  =\langle v_{k+1},\nabla U_0(x)\rangle
  +\langle v_{k+1},\nabla U_0(x_{k+1})-\nabla U_0(x)\rangle .
\end{equation}
For the first summand, \eqref{eq:vbarxbar} gives
\begin{align*}
  \E\left[\langle v_{k+1},\nabla U_0(x)\rangle\,\big|\,\mathcal F_k\right]
  &=\psi_0\langle v,\nabla U_0(x)\rangle-\psi_1a\|\nabla U_0(x)\|^2
  \\
  &=\langle v,\nabla U_0(x)\rangle-(1-\psi_0)\langle v,\nabla U_0(x)\rangle
  -\psi_1a\|\nabla U_0(x)\|^2 .
\end{align*}
The last term yields the position dissipation: by \eqref{eq:sigmabounds},
\[
-\psi_1a\|\nabla U_0(x)\|^2\le-\frac12a_{\min}\eta\|\nabla U_0(x)\|^2.
\]
For the middle term we use $1-\psi_0\le\gamma\eta$ and Young's inequality
with the parameter $\zeta:=2\gamma/a_{\min}$, exactly as in the proof of
Lemma~\ref{lem:drift},
\[
  \left|(1-\psi_0)\langle v,\nabla U_0(x)\rangle\right|
  \le\gamma\eta\left(\frac{\zeta}{2}\|v\|^2+\frac1{2\zeta}\|\nabla U_0(x)\|^2\right)
  =\frac{\gamma^2\eta}{a_{\min}}\|v\|^2
  +\frac{a_{\min}\eta}{4}\|\nabla U_0(x)\|^2 ,
\]
so that half of the dissipation survives. For the second summand in \eqref{eq:hypo:split}, by
$L$-smoothness, the Cauchy--Schwarz inequality (applied to the conditional
expectation given $\mathcal F_k$), \eqref{eq:vbarxbar} and \eqref{eq:xincr},
\begin{align*}
  \left|\E\left[\langle v_{k+1},\nabla U_0(x_{k+1})-\nabla U_0(x)\rangle
  \,\big|\,\mathcal F_k\right]\right|
  &\le L\left(\E\left[\|v_{k+1}\|^2\big|\mathcal F_k\right]\right)^{1/2}
  \left(\E\left[\|x_{k+1}-x\|^2\big|\mathcal F_k\right]\right)^{1/2}
  \\
  &\le3La_{\max}\eta\left(\|v\|^2+\eta^2\|g\|^2+2\gamma\eta d\right),
\end{align*}
where we used
\[
\left(\E\left[\left\|v_{k+1}\right\|^2|\mathcal F_k\right]\right)^{1/2}\le\|v\|+\eta\|g\|+\sqrt{2\gamma\eta d},
\]
and
\[
\left(\E\left[\left\|x_{k+1}-x\right\|^2|\mathcal F_k\right]\right)^{1/2}\le a_{\max}\eta
\left(\|v\|+\eta\|g\|+\sqrt{2\gamma\eta d}\right).
\]

\textbf{Collecting.} Adding (a)--(d), multiplying (d) by $\eta_{0}$, and using
$\|g\|^2\le a_{\max}^2\|\nabla U_0(x)\|^2$, we obtain
\[
  \E\left[\mathcal G_{k+1}\,\big|\,\mathcal F_k\right]
  \le\mathcal G_k-\eta\,A_v\,\|v\|^2-\eta\,A_g\,\|\nabla U_0(x)\|^2+C_0\,\eta\,d,
\]
with
\begin{align*}
  &A_v:=\frac{\gamma}{2}-\frac{\gamma\eta}{2}-\frac{3La_{\max}^2\eta}{2}
  -\eta_{0}\left(\frac{\gamma^2}{a_{\min}}+3La_{\max}\right),
  \\
  &A_g:=\frac{\eta_{0}a_{\min}}{4}
  -\left(\frac{\gamma\eta a_{\max}^2}{2}+\frac{3La_{\max}^4\eta^3}{8}
  +3\eta_{0}La_{\max}^3\eta^2\right),
\end{align*}
and
\[
C_0:=\gamma+2\gamma La_{\max}^2+6\eta_{0}\gamma La_{\max}.
\]
Choosing first
$\eta_{0}\le\bar\eta_0$ so small that
$\eta_{0}(\gamma^2/a_{\min}+3La_{\max})\le\gamma/8$, and then
$\bar\eta\in(0,1]$ so small that
$\frac{\gamma\bar\eta}{2}+\frac{3La_{\max}^2\bar\eta}{2}\le\gamma/8$ and
that the bracket in $A_g$ is at most $\eta_{0}a_{\min}/8$, we get
$A_v\ge\gamma/4$ and $A_g\ge\eta_{0}a_{\min}/8$ for all $\eta\le\bar\eta$.
Finally, the Polyak--{\L}ojasiewicz bound
$\|\nabla U_0(x)\|^2\ge2m(U_0(x)-U_0^\ast)$ and \eqref{eq:equiv} give
\begin{equation}\label{eq:onestepdrift}
  \E\left[\mathcal G_{k+1}\,\big|\,\mathcal F_k\right]
  \le\left(1-c_d\,\eta\right)\mathcal G_k+C_0\,\eta\,d,
  \qquad
  c_d:=\frac13\min\left\{\frac{\gamma}{2},\frac{m\,\eta_{0}a_{\min}}{4}\right\}>0 .
\end{equation}

\textbf{Step (ii): size of the fluctuation.}
Let $\mu_k:=\E[\mathcal G_{k+1}\mid\mathcal F_k]$ and
$\Delta_k:=\mathcal G_{k+1}-\mu_k$, so $\E[\Delta_k\mid\mathcal F_k]=0$. We
claim that for every $q\ge2$ there is $C_q$ with
\begin{equation}\label{eq:fluct}
  \left\|\Delta_k\right\|_{L_q(\mathbb P(\cdot\mid\mathcal F_k))}
  \ \le\ C_q\sqrt\eta\left(\sqrt{\mathcal G_k}+\sqrt d\right),
  \qquad \eta\le\bar\eta .
\end{equation}
Indeed, writing $\Theta:=\sqrt{2\gamma}\,\xi_{k+1}$ and
$\Theta':=\sqrt{2\gamma}\,a\,\xi'_{k+1}$, which are conditionally Gaussian with
$\|\,\|\Theta\|\,\|_{L_q}\le c_q\sqrt{\gamma\eta d}$ and
$\|\,\|\Theta'\|\,\|_{L_q}\le c_qa_{\max}\sqrt{\gamma\eta^3d}$ by
\eqref{eq:sigmabounds} and \eqref{eq:chimoment}, we have
\[
\|\Delta_k\|_{L_q}\le2\left\|\mathcal G_{k+1}-\mathcal G(\bar x,\bar v)\right\|_{L_q},
\]
and, by \eqref{eq:Gdef}, $L$-smoothness and the Cauchy--Schwarz inequality,
\begin{align*}
  \left|\mathcal G_{k+1}-\mathcal G(\bar x,\bar v)\right|
  &\le\left(\|\nabla U_0(\bar x)\|+\eta_{0}L\|\bar v\|\right)\|\Theta'\|
  +\frac L2\|\Theta'\|^2
  \\
  &\qquad
  +\left(\|\bar v\|+\eta_{0}\|\nabla U_0(\bar x)\|\right)\|\Theta\|
  +\frac12\|\Theta\|^2+\eta_{0}L\|\Theta\|\,\|\Theta'\| .
\end{align*}
By using
$\|v\|\le2\sqrt{\mathcal G_k}$ and $\|\nabla U_0(x)\|\le2\sqrt{L\mathcal G_k}$ from
\eqref{eq:equiv}, we have
\[
\|\bar v\|\le\|v\|+\eta\|g\|\le C\sqrt{\mathcal G_k},
\]
and
\[
\|\nabla U_0(\bar x)\|\le\|\nabla U_0(x)\|+L\|\bar x-x\|
\le C\sqrt{\mathcal G_k},
\]
for $\eta\le1$, and by taking $L_q$ norms and using $\eta\le1$,
it yields
\eqref{eq:fluct}.

\textbf{Step (iii): $p$-th moments.}
Let $p\ge2$ be an integer. Expanding $\mathcal G_{k+1}^{\,p}=(\mu_k+\Delta_k)^p$
and using $\E[\Delta_k\mid\mathcal F_k]=0$,
\begin{equation}\label{eq:binomial:expansion}
  \E\left[\mathcal G_{k+1}^{\,p}\,\big|\,\mathcal F_k\right]
  =\mu_k^{\,p}+\sum_{i=2}^{p}\binom pi\mu_k^{\,p-i}
  \E\left[\Delta_k^{\,i}\,\big|\,\mathcal F_k\right].
\end{equation}
By \eqref{eq:fluct}, for $i\ge2$ and $\eta\le1$,
\[
|\E[\Delta_k^i\mid\mathcal F_k]|\le
(C_p\sqrt\eta)^i\left(\sqrt{\mathcal G_k}+\sqrt d\right)^i\le C\eta(\mathcal G_k+d)^{i/2},
\]
while $\mu_k\le\mathcal G_k+C_0\eta d$ by
\eqref{eq:onestepdrift}; since $d\ge1$ implies $\mathcal G_k+d\ge1$, every
term of the sum on the right-hand side of \eqref{eq:binomial:expansion} is bounded by $C\eta(\mathcal G_k+d)^{p-1}$. Moreover, by the
binomial expansion of
\[
\mu_k^{\,p}\le\left((1-c_d\eta)\mathcal G_k
+C_0\eta d\right)^p,
\]
we obtain
\[
  \mu_k^{\,p}\le(1-c_d\eta)^p\mathcal G_k^{\,p}+C\eta\,d\left(\mathcal G_k+d\right)^{p-1}
  \le\left(1-pc_d\eta\right)\mathcal G_k^{\,p}
  +C\eta\,d\left(\mathcal G_k+d\right)^{p-1}.
\]
Applying Young's inequality
$d(\mathcal G_k+d)^{p-1}\le\varepsilon(\mathcal G_k+d)^p+C_\varepsilon d^{\,p}$
together with $(\mathcal G_k+d)^p\le2^{p-1}(\mathcal G_k^{\,p}+d^{\,p})$, and
choosing $\varepsilon$ small enough, we arrive at
\[
  \E\left[\mathcal G_{k+1}^{\,p}\,\big|\,\mathcal F_k\right]
  \le\left(1-\frac12pc_d\eta\right)\mathcal G_k^{\,p}+C_p\,\eta\,d^{\,p},
  \qquad \eta\le\bar\eta(p).
\]
Taking expectations and iterating over $k$,
\[
  \sup_{k\ge0}\E\left[\mathcal G_k^{\,p}\right]
  \le\E\left[\mathcal G_0^{\,p}\right]+\frac{2C_p\,d^{\,p}}{p\,c_d},
\]
and non-integer $p\in[1,8]$ follows from the integer case by Jensen's
inequality. It remains to bound the initial term: by \eqref{eq:equiv},
\[
\mathcal G_0\le\frac32\bar{\mathcal G}_0
=\frac32\left(\left(U_0(x_0)-U_0^\ast\right)+\frac12\|v_0\|^2\right),
\]
so by
\eqref{eq:init}, the independence of $v_0\sim\mathcal N(0,\Id)$ from $x_0$ and
\eqref{eq:chimoment},
\[
  \left(\E\left[\mathcal G_0^{\,p}\right]\right)^{1/p}
  \le\frac32\left(D_0d+\frac12\left\|\,\|v_0\|\,\right\|^2_{L_{2p}}\right)
  \le\frac32\left(D_0+\frac12(1+2p)\right)d .
\]
Finally $\bar{\mathcal G}_k\le2\mathcal G_k$ by \eqref{eq:equiv}, which gives
the assertion with $c_I(p)$ depending only on $(\gamma,m,L,\omega_0,\eta_{0},p)$, as required in the statement of the lemma.
\end{proof}

Combining Lemmas~\ref{lem:statmom} and~\ref{lem:itermom}, we
now bound the moments of the two path quantities $M_j$ and $N_j$ through
which the local error of Step 3 will be expressed; moments of order up
to $16$ are needed because of the two splittings using Cauchy--Schwarz inequality in the
proof of Lemma~\ref{lem:local}, see also Remark~\ref{rem:init}.

\begin{lemma}[Path moments entering the local error]\label{lem:pathmom}
There are $\bar{\eta}\in(0,1]$ and $c_\natural$, depending only on
$(\gamma,m,L,\omega_0,\eta_{0},D_0)$, such that for $\eta\le \bar{\eta}$, every $j$, and
every $p\le16$,
\[
  \|M_j\|_{L_p}\ \vee\ \|N_j\|_{L_p}\ \le\ c_\natural\sqrt d .
\]
\end{lemma}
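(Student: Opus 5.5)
The plan is to treat $N_j$ and $M_j$ separately. $N_j$ is a function of the iterate alone and follows directly from Lemma~\ref{lem:itermom}. $M_j$ is a path supremum of the auxiliary exact UALD trajectory $(\hat X,\hat V)$ started at the iterate; we bound it by the same pathwise decomposition used in the proof of Lemma~\ref{lem:statmom}(i), with the stationary moments replaced by iterate moments.

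\textbf{Bound on $N_j$.} Since $(\tilde X_{j\eta},\tilde V_{j\eta})\overset{d}{=}(x_j,v_j)$, $L$-smoothness gives
\[
N_j^2\le 2L\,\bigl(U_0(\tilde X_{j\eta})-U_0^\ast\bigr)\le 2L\,\bar{\mathcal G}_j .
\]
Hence
\[
\|N_j\|_{L_p}\le \sqrt{2L}\,\bigl\|\bar{\mathcal G}_j\bigr\|_{L_{p/2}}^{1/2}.
\]
For $p\le16$ we have $p/2\le 8$, so Lemma~\ref{lem:itermom} applies and yields $\|N_j\|_{L_p}\le \sqrt{2L\,c_I(p/2)(1+D_0)}\,\sqrt d$. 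Note that $p=16$ is precisely what the eighth-moment condition \eqref{eq:init} allows; this explains the restriction $p\le16$. For $p<2$ we fall back on $p=2$ via Jensen.

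\textbf{Bound on $M_j$.} For $\tau\in[0,\eta]$, the variation-of-constants formula for the velocity equation of the exact UALD gives
\[
\hat V_{j\eta+\tau}=\psi_0(\tau)\hat V_{j\eta}-\int_0^\tau\psi_0(\tau-s)\,a(\hat X_{j\eta+s})\nabla U_0(\hat X_{j\eta+s})\,ds+\sqrt{2\gamma}\int_0^\tau\psi_0(\tau-s)\,dB_{j\eta+s}.
\]
\begin{itemize}
\item The first term is at most $\|\tilde V_{j\eta}\|\le \sqrt{2\bar{\mathcal G}_j}$, so Lemma~\ref{lem:itermom} again bounds it by $C\sqrt d$ in $L_p$.
\item The stochastic convolution is handled exactly as in Lemma~\ref{lem:statmom}(i), using the integration-by-parts representation and Doob's inequality. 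This gives a bound $c(p)\sqrt{\eta d}$.
\item The drift term is the only place where the unknown path enters. The position speed is $\|\hat V\|a_{\max}$, so for $s\le\tau\le \eta$,
\[
\|\nabla U_0(\hat X_{j\eta+s})\|\le N_j+L a_{\max}\,\eta\, M_j .
\]
The drift term is therefore bounded by $a_{\max}\eta\,(N_j+La_{\max}\eta M_j)$.
\end{itemize}

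\textbf{Closing the estimate (main obstacle).} Taking the supremum over $\tau$ yields the pathwise inequality
\[
M_j\le \|\tilde V_{j\eta}\|+a_{\max}\eta N_j+La_{\max}^2\eta^2M_j+\sqrt{2\gamma}(1+\gamma\eta)\sup_{\tau\le\eta}\|B_{j\eta+\tau}-B_{j\eta}\|.
\]
The main obstacle is this self-referential appearance of $M_j$; we close it by absorption. Choosing $\bar\eta$ so that $La_{\max}^2\bar\eta^2\le\frac12$, we move that term to the left-hand side and obtain
\[
M_j\le 2\bigl(\|\tilde V_{j\eta}\|+a_{\max}\eta N_j+C\sup_{\tau\le\eta}\|B_{j\eta+\tau}-B_{j\eta}\|\bigr).
\]
This holds pathwise, which avoids any Gr\"onwall argument with a random exponent. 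One technical point: the absorption requires $M_j<\infty$ a.s., which holds because the exact UALD has continuous paths under Assumption~\ref{ass:main}. (Alternatively, one can absorb on a stopped interval and let the stopping time go to $\eta$.)

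Taking $L_p$ norms via Minkowski and inserting the three bounds above then gives $\|M_j\|_{L_p}\le c_\natural\sqrt d$, with $c_\natural$ depending only on $(\gamma,m,L,\omega_0,\eta_0,D_0)$. We take $\bar\eta$ to be the minimum of the value from Lemma~\ref{lem:itermom} and the absorption threshold.
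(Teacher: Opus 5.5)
Your proposal is correct and follows the paper's proof essentially step for step. You bound $N_j$ via $L$-smoothness and Lemma~\ref{lem:itermom} at order $p/2\le 8$. You bound $M_j$ via the mild velocity formula, the displacement estimate through $a_{\max}M_j$, pathwise absorption of the $\eta^2 M_j$ term (justified by a.s.\ finiteness of $M_j$), and Doob's inequality for the Brownian supremum. The only cosmetic difference is that you bound the displacement by $a_{\max}\eta M_j$ rather than the paper's $a_{\max}sM_j$, so your absorption threshold is $La_{\max}^2\bar\eta^2\le\frac12$ instead of $\frac12 La_{\max}^2\eta^2\le\frac12$.
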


\begin{proof}
$N_j\le\sqrt{2L\,\bar{\mathcal G}_j}$ by $L$-smoothness, so
Lemma~\ref{lem:itermom} (with $p\le8$ for $\bar{\mathcal G}$, i.e.\ $p\le16$
for $N_j$) gives the second bound. For $M_j$: exactly as in
Lemma~\ref{lem:statmom}(i), along the auxiliary exact path,
\[
  M_j\le\|\tilde V_{j\eta}\|
  +a_{\max}\!\int_0^\eta\left\|\nabla U_0(\hat X_{j\eta+s})\right\|\,ds
  +\sqrt{2\gamma}(1+\gamma\eta)\sup_{\tau\le \eta}\|B_{j\eta+\tau}-B_{j\eta}\| ,
\]
and $\|\nabla U_0(\hat X_{j\eta+s})\|\le N_j+La_{\max}sM_j$ since
$\|\hat X_{j\eta+s}-\tilde X_{j\eta}\|\le a_{\max}sM_j$. Hence,
\begin{equation}\label{eq:Mjrearr}
\left(1-\frac12La_{\max}^2\eta^2\right)M_j\le\|\tilde V_{j\eta}\|+a_{\max}\eta N_j+\sqrt{2\gamma}
(1+\gamma \eta)\sup_{\tau\le \eta}\|B_{j\eta+\tau}-B_{j\eta}\| .
\end{equation}
Here $M_j<\infty$ almost surely, since $\hat V$ has continuous paths on
the compact interval $[j\eta,(j+1)\eta]$, and $\|M_j\|_{L_p}<\infty$ by the
standard $L_p$ estimates for SDEs with Lipschitz drift and bounded diffusion
coefficient; see, e.g., \cite[Problem 5.3.15]{KaratzasShreve1991}
or \cite[Chapter 2, Section 4]{Mao2007}; these estimates require only
that the drift be locally Lipschitz and of linear growth and that the
diffusion coefficient be bounded, which is the case for the UALD
\eqref{eq:UALD:1}--\eqref{eq:UALD:2}: its drift
$(v,x)\mapsto\left(-\gamma v-a(x)\nabla U_0(x),\,a(x)v\right)$ satisfies
$\|a(x)\nabla U_0(x)\|\le a_{\max}L\|x-x_\ast\|$ and
$\|a(x)v\|\le a_{\max}\|v\|$ by Assumption~\ref{ass:main}, and its
diffusion coefficient $\sqrt{2\gamma}$ is constant; recall from
\eqref{eq:glip} that the force $g=a\nabla U_0$ is locally, but not
globally, Lipschitz. Therefore \eqref{eq:Mjrearr} may be rearranged. The prefactor on
the left-hand side of \eqref{eq:Mjrearr} satisfies
$1-\frac12La_{\max}^2\eta^2\ge\frac12$ as soon as
$\eta\le\bar\eta:=\min\left\{1,(La_{\max}^2)^{-1/2}\right\}$, and then, taking $L_p$
norms in \eqref{eq:Mjrearr} and applying the triangle inequality,
\begin{align*}
  \|M_j\|_{L_p}
  &\le2\left(\left\|\,\|\tilde V_{j\eta}\|\,\right\|_{L_p}
  +a_{\max}\eta\,\|N_j\|_{L_p}
  +\sqrt{2\gamma}(1+\gamma\eta)\left\|\sup_{\tau\le\eta}
  \|B_{j\eta+\tau}-B_{j\eta}\|\right\|_{L_p}\right)
  \\
  &\le c_\natural\sqrt d ,
\end{align*}
where we used
\[
\|\,\|\tilde V_{j\eta}\|\,\|_{L_p}
\le
\sqrt2\left\|\bar{\mathcal G}_j^{1/2}\right\|_{L_p}\le\sqrt2\,(c_I(p/2))^{1/2}
((1+D_0)d)^{1/2},
\]
from Lemma~\ref{lem:itermom}, the bound on $\|N_j\|_{L_p}$
just proved, and Doob's $L_p$ maximal inequality together with
\eqref{eq:chimoment} for the Brownian supremum, which contributes
$\frac{p}{p-1}\sqrt{1+p}\sqrt{\eta d}$.
\end{proof}
\textbf{Step 3: the local discretization error.}
Recall the coefficient regularity implied by
Assumption~\ref{ass:main}(A)--(B): $a_{\min}\le a\le a_{\max}$ and
$\nabla a=a\nabla(U-U_0)$, so that $a$ is Lipschitz with constant
$c_F=a_{\max}\omega_1$ (the same constant entering $C_2$ in
\eqref{eq:Ebound}); and the force $g=a\nabla U_0$ obeys, for all $x,y$,
\begin{equation}\label{eq:glip}
  \|g(x)-g(y)\|\le a_{\max}L\,\|x-y\|+c_F\,\|x-y\|\,\|\nabla U_0(y)\| ,
\end{equation}
as follows from the decomposition
\[
g(x)-g(y)=a(x)\left(\nabla U_0(x)-\nabla U_0(y)\right)
+\left(a(x)-a(y)\right)\nabla U_0(y).
\]

We are now ready to estimate the local error term $L_j$ in
\eqref{eq:split}. The following lemma shows that it is of order
$\eta^{2}$ with a prefactor of order $\sqrt d$; after division by the
contraction gap $\frac12\theta\eta$ in the recursion \eqref{eq:recursion}
below, this is what produces the $\mathcal O(\sqrt d\,\eta)$ bias term in
\eqref{eq:disc-bound}.

\begin{lemma}[Local strong error]\label{lem:local}
There are $\bar{\eta}\in(0,1]$ and $C_\ast>0$, depending only on
$(\gamma,m,L,\omega_0,\omega_1,\eta_{0},\alpha,c,\kappa,D_0)$ and not on $d$,
such that for all $\eta\le \bar{\eta}$ and all $j$,
\[
  L_j\ \le\ C_\ast\,\sqrt d\,\eta^2 .
\]
Moreover, the contribution to $C_\ast$ carrying $c_F$ vanishes as
$U_0\to U$.
\end{lemma}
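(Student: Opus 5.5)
The approach is to write the local difference $\Xi_t=(\tilde V_t-\hat V_t,\tilde X_t-\hat X_t)$ on $[j\eta,(j+1)\eta]$ as the solution of a linear ODE with a forcing term, bound the forcing pathwise by products of $M_j$, $N_j$ and $\eta$, and then pass to the weighted $L_4$ norm. Since both processes share the same Brownian increments and start from the same point, the noise cancels and
\[
\tfrac{d}{dt}(\tilde V_t-\hat V_t)=-\gamma(\tilde V_t-\hat V_t)-\left(\tilde g_j-g(\hat X_t)\right),\qquad
\tfrac{d}{dt}(\tilde X_t-\hat X_t)=\tilde a_j\tilde V_t-a(\hat X_t)\hat V_t .
\]
We rewrite the position equation as $\tilde a_j(\tilde V_t-\hat V_t)+(\tilde a_j-a(\hat X_t))\hat V_t$. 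By the Lipschitz bound on $a$ (constant $c_F$) and $\|\hat X_t-\tilde X_{j\eta}\|\le a_{\max}(t-j\eta)M_j$, the second term is at most $c_Fa_{\max}\eta M_j^2$. This is the velocity-quadratic term mentioned in the introduction (\eqref{eq:dX}). For the force, \eqref{eq:glip} gives $\|\tilde g_j-g(\hat X_t)\|\le(a_{\max}L+c_FN_j)a_{\max}\eta M_j$.

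Next we run Gr\"onwall on the pair. With $r_t:=\|\tilde V_t-\hat V_t\|$ and $s_t:=\|\tilde X_t-\hat X_t\|$, both vanishing at $t=j\eta$, we get $r_t\le\int_{j\eta}^t(a_{\max}L+c_FN_j)a_{\max}\eta M_j\,du\le C\eta^2(M_j+c_FN_jM_j)$. Then
\[
s_t\le a_{\max}\int r\le C\eta^3(\cdots)+c_Fa_{\max}\eta^2M_j^2 .
\]
Hence pathwise $|\Xi_{(j+1)\eta}|_S\le C\eta^2\left(M_j+c_FM_jN_j+c_FM_j^2\right)$ for $\eta\le1$. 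There is no genuine Gr\"onwall feedback, because the frozen-coefficient system differs from the exact one only through the explicitly bounded terms.

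For the weighted norm we apply Cauchy--Schwarz once:
\[
L_j^4\le\left(\E|\Xi|_S^8\right)^{1/2}\left(\E[\Psi'^2_{(j+1)\eta}]\right)^{1/2}.
\]
The second factor is $c_w$, which Lemma~\ref{lem:statmom}(iii) and Lemma~\ref{lem:standing}(iii) make dimension-free. The first factor requires $L_8$ norms of $M_j$, $M_jN_j$ and $M_j^2$. A second Cauchy--Schwarz gives $\|M_jN_j\|_{L_8}\le\|M_j\|_{L_{16}}\|N_j\|_{L_{16}}$, and Lemma~\ref{lem:pathmom} then yields $\|M_j\|_{L_8}\le c\sqrt d$ and $\|M_jN_j\|_{L_8},\|M_j^2\|_{L_8}\le c^2 d$. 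The result is $L_j\le C\eta^2 c_w^{1/4}\left(\sqrt d+c_F d\right)$.

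The main obstacle is that the product terms scale like $d$, not $\sqrt d$. This is handled by Lemma~\ref{lem:standing}(ii): \eqref{eq:cFd} gives $c_Fd=(c_F\sqrt d)\sqrt d\le C\sqrt d$ with $C$ depending only on $(\kappa,\beta_0,L,c',\gamma)$. That turns the bound into $C_\ast\sqrt d\,\eta^2$, and every $c_F$-carrying contribution visibly carries a factor $c_F\to0$ as $U_0\to U$. The only care needed is to keep $\bar\eta$ below the thresholds of Lemmas~\ref{lem:itermom} and~\ref{lem:pathmom}.
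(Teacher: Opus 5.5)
Your proposal is correct and follows essentially the same route as the paper. Both arguments bound the velocity and position discrepancies pathwise, including the frozen-speed term $c_Fa_{\max}M_j^2\eta^2$, then apply one Cauchy--Schwarz against the weight $e^{\alpha\mathcal G'}$ to get $c_w^{1/4}\|\cdot\|_{L_8}$, a second one to reach the $L_{16}$ moments of Lemma~\ref{lem:pathmom}, and finally use \eqref{eq:cFd} to turn $c_Fd$ into $\mathcal O(\sqrt d)$. The only cosmetic difference is that you integrate the difference ODE directly (using $t-j\eta\le\eta$), whereas the paper uses the variation-of-constants form with $0\le\psi_0\le1$; this changes only the constants.
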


\begin{proof}
Fix the step, write $\tau\in[0,\eta]$, and abbreviate the common left endpoint
$\hat V_0:=\tilde V_{j\eta}$, $\hat X_0:=\tilde X_{j\eta}$. The interpolant
\eqref{eq:interp} and the exact trajectory share the same additive Brownian
terms, which cancel in $\Xi$, leaving two pathwise identities.

\textbf{Velocity.}
The auxiliary exact trajectory $(\hat X,\hat V)$ solves
the UALD \eqref{eq:UALD:1}--\eqref{eq:UALD:2} on $[j\eta,(j+1)\eta]$; writing
its velocity equation \eqref{eq:UALD:1} in the variation-of-constants (mild)
form gives
\[
  \hat V_{j\eta+\tau}=\psi_0(\tau)\hat V_0
  -\int_0^\tau\psi_0(\tau-s)\,g(\hat X_{j\eta+s})\,ds
  +\sqrt{2\gamma}\int_0^\tau\psi_0(\tau-s)\,dB_{j\eta+s},
\]
with $g=a\nabla U_0$ as in \eqref{eq:glip}. The interpolant \eqref{eq:interp}
has exactly the same form with the force frozen at the left endpoint, since by
\eqref{eq:psidef},
\[
\psi_1(\tau)g(\hat X_0)=\int_0^\tau\psi_0(\tau-s)g(\hat X_0)\,ds.
\]
Subtracting the two identities, the initial and the
Brownian terms cancel and, using $0\le\psi_0\le1$, \eqref{eq:glip}, and
\[
\left\|\hat X_{j\eta+s}-\hat X_0\right\|=\left\|\int_0^s a(\hat X_{j\eta+r})\hat V_{j\eta+r}\,dr\right\|\le
a_{\max}sM_j,
\]
we get
\begin{equation}\label{eq:dV}
\left\|\tilde V_{j\eta+\tau}-\hat V_{j\eta+\tau}\right\|
  \le\int_0^\tau\|g(\hat X_{j\eta+s})-g(\hat X_0)\|\,ds
  \le\frac12\,a_{\max}\left(a_{\max}L+c_FN_j\right)M_j\,\tau^2 .
\end{equation}

\textbf{Position.} By \eqref{eq:interp} and \eqref{eq:psidef}, the
interpolated position satisfies
\[
\tilde X_{j\eta+\tau}=\hat X_0+a(\hat X_0)\int_0^\tau\tilde V_{j\eta+s}\,ds,
\]
whereas \eqref{eq:UALD:2} gives
\[
\hat X_{j\eta+\tau}=\hat X_0+\int_0^\tau a(\hat X_{j\eta+s})\hat V_{j\eta+s}\,ds;
\]
that is, the scheme freezes the \emph{speed} $a$ as well as the force.
Splitting the difference accordingly,
\[
\tilde X_{j\eta+\tau}-\hat X_{j\eta+\tau}
=a(\hat X_0)\int_0^\tau\left(\tilde V_{j\eta+s}-\hat V_{j\eta+s}\right)\,ds
+\int_0^\tau\left(a(\hat X_0)-a(\hat X_{j\eta+s})\right)\hat V_{j\eta+s}\,ds,
\]
and using
$|a(\hat X_0)-a(\hat X_{j\eta+s})|\le c_Fa_{\max}sM_j$ and $\|\hat V_{j\eta+s}\|\le M_j$,
\begin{equation}\label{eq:dX}
\left\|\tilde X_{j\eta+\tau}-\hat X_{j\eta+\tau}\right\|
  \le\frac16\,a_{\max}^2\left(a_{\max}L+c_FN_j\right)M_j\,\tau^3
  +\frac12\,c_Fa_{\max}M_j^2\,\tau^2 .
\end{equation}

\textbf{Assembly.} Evaluate \eqref{eq:dV}--\eqref{eq:dX} at $\tau=\eta\le1$ and
use $|w|_S\le\left(\lambda_{\max}(S)\right)^{1/2}\|w\|$:
\[
  \left|\Xi_{(j+1)\eta}\right|_S
  \le\left(\lambda_{\max}(S)\right)^{1/2}\eta^2\left[
  c_3\left(a_{\max}L+c_FN_j\right)M_j+\frac12a_{\max}c_FM_j^2\right],
\]
with
\[
c_3:=a_{\max}\left(\frac12+\frac16a_{\max}\right).
\]
For any nonnegative $P$, Cauchy--Schwarz inequality and Lemma~\ref{lem:statmom}(iii)
give
\begin{align*}
\left(\E\left[P^4\Psi_{(j+1)\eta}'\right]\right)^{1/4}\le
\left(\E\left[P^8\right]\right)^{1/8}\,\left(\E\left[e^{2\alpha\mathcal G'}\right]\right)^{1/8}
\le c_w^{1/4}\,\|P\|_{L_8},
\end{align*}
so that with Lemma~\ref{lem:pathmom} ($p\le16$) and
\begin{align*}
&\|M_jN_j\|_{L_8}\le\|M_j\|_{L_{16}}\|N_j\|_{L_{16}}\le c_\natural^2d,
\\
&\|M_j^2\|_{L_8}=\|M_j\|_{L_{16}}^2\le c_\natural^2d,
\end{align*}
we get
\begin{align*}
  L_j&\le\left(\lambda_{\max}(S)\right)^{1/2}c_w^{1/4}\eta^2\left[
  c_3a_{\max}L\,c_\natural\sqrt d
  +\left(c_3+\frac12a_{\max}\right)c_F\,c_\natural^2\,d\right]
  \\
  &=\left(\lambda_{\max}(S)\right)^{1/2}c_w^{1/4}c_\natural\left[
  c_3a_{\max}L+\left(c_3+\frac12a_{\max}\right)c_\natural\,c_F\sqrt d
  \right]\sqrt d\,\eta^2 .
\end{align*}
By \eqref{eq:cFd} the factor $c_F\sqrt d$ is bounded by a dimension-free
constant under the standing condition \eqref{eq:standing}; absorbing
everything into $C_\ast$ proves the bound, and the $c_F$-part of the
bracket vanishes as $U_0\to U$.
\end{proof}

\textbf{Step 4: the recursion and completing the proof of Theorem~\ref{thm:disc}.}
Insert Lemmas~\ref{lem:cont-step} and~\ref{lem:local} into the splitting
\eqref{eq:split}: for $\eta\le \bar{\eta}\wedge\theta^{-1}$,
\begin{equation}\label{eq:recursion}
  A_{j+1}\ \le\ \left(1-\frac12\theta\eta\right)A_j+C_\ast\sqrt d\,\eta^2 .
\end{equation}
Because the local error $L_j\le C_\ast\sqrt d\,\eta^2$ of Lemma~\ref{lem:local}, i.e.\ the last term in \eqref{eq:recursion},
no longer involves $A_j$, no a priori bound or
bootstrap is needed: unfolding \eqref{eq:recursion} directly,
\[
  A_k\le\left(1-\frac12\theta\eta\right)^kA_0
  +C_\ast\sqrt d\,\eta^2\sum_{i=0}^{k-1}\left(1-\frac12\theta\eta\right)^i
  \le\left(1-\frac12\theta\eta\right)^kA_0
  +\frac{2C_\ast\sqrt d}{\theta}\,\eta .
\]
Combining with the endpoint relations \eqref{eq:W2-to-L2} and
\eqref{eq:ends},
\[
  \mathcal W_2(\nu_k,\pi_X)
  \le\left(\lambda_{\min}(S)\right)^{-1/2}A_k
  \le\sqrt{\frac{\lambda_{\max}(S)}{\lambda_{\min}(S)}}
  \left(1-\frac12\theta\eta\right)^k\mathcal D_0
  +\frac{2C_\ast}{\theta\sqrt{\lambda_{\min}(S)}}\,\sqrt d\,\eta ,
\]
which is \eqref{eq:disc-bound} with
$C_{\mathrm{loc}}:=C_\ast/\sqrt{\lambda_{\min}(S)}$ and
$\bar{\eta}$ redefined as the minimum of the step ceilings of
Lemmas~\ref{lem:itermom}--\ref{lem:local}, $(2\gamma)^{-1}$
and $\theta^{-1}\wedge1$.
\hfill$\qed$
\begin{remark}[The novelty compared to classical KLMC]\label{rem:novelty}
The scheme \eqref{eq:recursion} is the familiar contraction-plus-local-error
recursion of the KLMC analysis \cite{dalalyan2018kinetic}. Two features are
specific to anchoring, and both are visible in the local error. First, the
position equation \eqref{eq:XL:modified:2} is driven at the state-dependent
\emph{speed} $a(x)$, which the scheme freezes; this produces the term
$\frac12c_Fa_{\max}M_j^2$ in \eqref{eq:dX}, quadratic in the velocity, with no
counterpart in KLMC. Second, the anchored force $g=a\nabla U_0$ is not
globally Lipschitz, and the amplitude-dependent term $c_FN_j$ in
\eqref{eq:glip}--\eqref{eq:dV} reflects this. Both carry the prefactor
$c_F=a_{\max}\omega_1$ and hence vanish as $U_0\to U$. Because these two
terms are \emph{products} of unbounded factors, the $L_2$ deviation splitting
of \cite{dalalyan2018kinetic} is no longer available, and we replace it by
the uniform polynomial moment bounds of Lemma~\ref{lem:itermom}; this in turn
forces the weighted $L_4$ metric \eqref{eq:Aj} and the power $p=2$ in
Lemma~\ref{lem:cont-step}.
\end{remark}

\begin{remark}[The initial weighted distance]\label{rem:D0}
By Cauchy--Schwarz inequality against the weight and \eqref{eq:cw},
$\mathcal D_0\le c_w^{1/4}\inf_{\zeta\in\Gamma_{\!\perp}(\nu_0,\pi)}(\E_\zeta\|x-x'\|^8)^{1/8}$, an
$L_8$-Wasserstein-type distance between $\nu_0$ and $\pi_X$. For a
deterministic start $\nu_0=\delta_{x_0}$, since $U\ge U_0-\omega_0$ makes
$\pi_X$ sub-Gaussian around $x_\ast$ with proxy variance $\lesssim1/m$,
\[
  \mathcal D_0\ \le\ c_w^{1/4}\left(\E_{\pi_X}\|x_0-X\|^8\right)^{1/8}
  \ \lesssim\ \|x_0-x_\ast\|+\sqrt{d/m}\,.
\]
\end{remark}

As a corollary of Theorem~\ref{thm:disc}, we obtain the following iteration complexity
for UALMC algorithm.
\begin{corollary}[Iteration complexity]\label{cor:iteration}
Take $\epsilon=\gamma/2$ and $\gamma\ge2\sqrt L$ in
Proposition~\ref{prop:linear}, so that $\kappa\asymp m/\gamma$, and assume
small mismatch, so that $\theta=\kappa-\delta_{(2)}/4\asymp m/\gamma$. Then
\eqref{eq:disc-bound} gives
$\mathcal W_2(\nu_K,\pi_X)\le\varepsilon$ after $K$ iterates, where
\[
K=\mathcal{O}\!\left(\frac1{\theta\eta}\log\frac{c_\flat\mathcal D_0}
  {\varepsilon}\right)
  \quad\text{with}\quad
  \eta\asymp\frac{\theta\,\varepsilon}{C_{\mathrm{loc}}\sqrt d},
  \qquad\text{i.e.}\quad
  K=\widetilde {\mathcal{O}}\left(\frac{\sqrt d}
  {\varepsilon}\right),
\]
where the notation $\widetilde{\mathcal O}$ hides logarithmic
factors as well as the constants $\theta$ and $C_{\mathrm{loc}}$, which are
$\mathcal O(1)$ in $d$ and $\varepsilon$.
\end{corollary}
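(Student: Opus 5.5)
The plan is to read the complexity directly off the bound \eqref{eq:disc-bound} of Theorem~\ref{thm:disc}, after first checking that the parameter choices give $\kappa\asymp m/\gamma$. With $\epsilon=\gamma/2$ in Proposition~\ref{prop:linear}, we have $\epsilon(\gamma-\epsilon)=\gamma^2/4$, so
\[
q(h)=\gamma^2-4h+4h^2/\gamma^2=(\gamma-2h/\gamma)^2 .
\]
Since $\gamma\ge2\sqrt L$ gives $2h/\gamma\le\gamma$ for every $h\le L$, we get $\sqrt{q(h)}=\gamma-2h/\gamma$. The maximum over $[m,L]$ is attained at $h=m$, so $\kappa=\frac12\cdot\frac{2m}{\gamma}=m/\gamma$. (Strict inequality \eqref{eq:star} needs $\gamma>\sqrt L$, which holds.) Under small mismatch, Lemma~\ref{lem:standing}(i) gives $\delta_{(2)}\le3\delta_{(1)}\to0$. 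Hence we may assume, say, $\delta_{(2)}\le2\kappa$, so that $\theta\in[\kappa/2,\kappa]$ and $\theta\asymp m/\gamma$.

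Next I split the error budget $\varepsilon$ into two halves. For the bias term, I choose $\eta=\min\{\bar\eta,\theta^{-1},\theta\varepsilon/(4C_{\mathrm{loc}}\sqrt d)\}$. This makes $\frac{2C_{\mathrm{loc}}}{\theta}\sqrt d\,\eta\le\varepsilon/2$, and for small $\varepsilon$ the last entry is the active one, giving $\eta\asymp\theta\varepsilon/(C_{\mathrm{loc}}\sqrt d)$. For the transient term, I use $(1-\frac12\theta\eta)^K\le e^{-\theta\eta K/2}$. It then suffices to take
\[
K\ge\frac{2}{\theta\eta}\log\frac{2c_\flat\mathcal D_0}{\varepsilon},
\]
which yields $K=\mathcal O\bigl(\frac1{\theta\eta}\log\frac{c_\flat\mathcal D_0}{\varepsilon}\bigr)$. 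Substituting $\eta$ gives $K\asymp\frac{C_{\mathrm{loc}}\sqrt d}{\theta^2\varepsilon}\log\frac{c_\flat\mathcal D_0}{\varepsilon}$.

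The remaining point, and the only real obstacle, is to verify that nothing hidden grows polynomially in $d$. By Theorem~\ref{thm:disc}, $C_{\mathrm{loc}}$ and $\bar\eta$ are dimension-free. $\theta$ is dimension-free given the small mismatch condition, which Lemma~\ref{lem:standing}(ii) shows is consistent with $c_F\sqrt d=\mathcal O(1)$. $c_\flat$ depends only on $S$, hence only on $\gamma$. By Remark~\ref{rem:D0}, $\mathcal D_0\lesssim\|x_0-x_\ast\|+\sqrt{d/m}$, which is polynomial in $d$ and so enters only through the logarithm. Collecting these facts gives $K=\widetilde{\mathcal O}(\sqrt d/\varepsilon)$.
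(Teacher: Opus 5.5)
Your proposal is correct and follows the paper's proof: the same evaluation of $\kappa$ from Proposition~\ref{prop:linear} at $\epsilon=\gamma/2$, and the same split of $\varepsilon$ into a bias half (via $\eta\asymp\theta\varepsilon/(C_{\mathrm{loc}}\sqrt d)$) and a transient half (via $(1-\frac12\theta\eta)^K\le e^{-\theta\eta K/2}$). Your perfect-square observation $q(h)=(\gamma-2h/\gamma)^2$, giving $\kappa=m/\gamma$ exactly, and your explicit enforcement of the step-size ceiling $\bar\eta$ are small refinements of what the paper writes, not a different route.
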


\begin{proof}
With $\epsilon=\gamma/2$ we have $\epsilon(\gamma-\epsilon)=\gamma^2/4$ and
$q(h)=\gamma^2-4h+4h^2/\gamma^2$ in \eqref{eq:kappa}; for
$\gamma\ge2\sqrt L\ge2\sqrt m$ the convex function $q$ is maximized over
$[m,L]$ at $h=m$, and
\[
\kappa=\frac12\left(\gamma-\sqrt{q(m)}\right)
=\frac12\gamma\left(1-\sqrt{1-4m/\gamma^2+4m^2/\gamma^4}\right)\asymp m/\gamma.
\]
Taking $\eta\asymp\theta\varepsilon/(C_{\mathrm{loc}}\sqrt d)$ makes the bias
term in \eqref{eq:disc-bound} at most $\varepsilon/2$, and
$(1-\frac12\theta\eta)^k\le e^{-\theta\eta k/2}$ makes the first term at most
$\varepsilon/2$ once $k\ge\frac{2}{\theta\eta}\log(2c_\flat\mathcal D_0/\varepsilon)$.
Substituting the value of $\eta$ gives the stated iteration complexity.
\end{proof}

\begin{remark}[Dimension dependence]
In Corollary~\ref{cor:iteration}, we obtained the same scaling $\tilde{\mathcal{O}}(\sqrt d/\varepsilon)$ as the classical KLMC \cite{dalalyan2018kinetic}, with
all anchoring overhead confined to $C_{\mathrm{loc}}$ and to the
perturbative gap $\delta_{(2)}$ inside $\theta$. One caveat should be made
explicit: the standing condition \eqref{eq:standing} is genuinely
dimension-dependent. Since $b=\gamma d$, the optimized deficit scales like
$\beta_0C_2\sqrt{\gamma d/c'}$, so \eqref{eq:standing} (like $\delta<2\kappa$
in the continuous theory) implicitly requires
$\omega_1=\mathcal{O}(d^{-1/2})$; this is inherited from the continuous
perturbation argument, not created by the discretization, and it is
exactly the mechanism (via \eqref{eq:cFd}) that keeps $C_{\mathrm{loc}}$
dimension-free.
\end{remark}

\begin{remark}[Comparison with ALMC]\label{rmk:comparison}
The iteration complexity for anchored Langevin Monte Carlo (ALMC) algorithm,
that is based on the Euler-Maruyama discretization of \eqref{overdamped:SDE:modified}
is obtained in Theorem 14 in \cite{GNZZ2025}, which states that
\begin{equation}\label{ALMC:result}
\mathcal{W}_{2}(\nu_{k},\pi)\leq\sqrt{2}e^{-(m-\varsigma) k\eta}\mathcal{W}_{2}(\nu_{0},\pi)+\sqrt{2}C\eta^{\frac{1}{2}},
\end{equation}
where $\nu_{k}$ is the distribution of the $k$-th iterate of ALMC
and $\pi$ is the target distribution, and the constants $\varsigma,C$
are explicitly given such that $C=\mathcal{O}(\sqrt{d})$.
Here $\varsigma$ denotes the constant called $\alpha$ in
\cite[Assumption 12]{GNZZ2025}, renamed to avoid a clash with the parameter
$\alpha$ of the exponential weight in \eqref{eq:alpharange}.
We can deduce from \eqref{ALMC:result} that
$\mathcal{W}_{2}(\nu_{K},\pi)\leq\varepsilon$
after $K$ iterates, where
\[
K=\mathcal{O}\!\left(\frac1{(m-\varsigma)\eta}\log\frac{\mathcal{W}_{2}(\nu_{0},\pi)}
  {\varepsilon}\right)
  \quad\text{with}\quad
  \eta\asymp\frac{\varepsilon^{2}}{C^{2}}\asymp\frac{\varepsilon^{2}}{d},
  \qquad\text{i.e.}\quad
  K=\widetilde {\mathcal{O}}\left(\frac{d}
  {\varepsilon^{2}}\right).
\]
Hence, we conclude that the iteration complexity
$K=\widetilde{\mathcal O}(\sqrt d/\varepsilon)$ of
Corollary~\ref{cor:iteration}
for UALMC algorithm has better dependence on both dimension $d$
and accuracy level $\varepsilon$
than the complexity $K=\widetilde{\mathcal O}(d/\varepsilon^{2})$ of ALMC;
the improvement is by a factor $\sqrt d$ in the dimension and by a factor
$1/\varepsilon$ in the accuracy.
\end{remark}

\section{Numerical Experiments}\label{sec:numerical}

We conduct numerical experiments to validate the
theory and to investigate the practical performance of UALMC. We apply the
algorithm to four tasks, in increasing order of complexity: (i) simulating
multivariate Laplace distributions (synthetic data, where the exact
$2$-Wasserstein distance to the target is measurable); (ii) Bayesian logistic
regression with the non-differentiable regularizers above, on two real data
sets from the UCI Machine Learning Repository; (iii) Bayesian inference for a
two-layer feedforward neural network with ReLU activation, on the same real
data sets; and (iv) sampling heavy-tailed multivariate Student's $t$
distributions (synthetic), the regime of Assumption~\ref{ass:HQ}. Every
reference potential is evaluated in closed form, using the formulas of
Appendix~\ref{app:closed:form} and their extensions derived there.

\subsection{Setup}\label{sec:setup}

In our numerical examples, $U$ always takes the form:
\begin{equation}
U(x)=f(x)+g(x),
\end{equation}
where $f(x)$ is differentiable but $g(x)$ is not.
We construct the anchored reference potential as
\begin{equation}
U_0(x)=f(x)+g_{0}(x),
\end{equation}
where $g_{0}(x)$ is a smoothed version of $g(x)$,
e.g. the Gaussian smoothing; see the discussions in Appendix~\ref{sec:nonsmooth}:
\begin{equation}
g_{0}(x)=\mathbb{E}_{\xi}[g(x+\mu\xi)],\qquad\xi\sim\mathcal{N}(0,\Id),
\end{equation}
and in \cite{GNZZ2025} its gradient $\nabla g_{0}(x)$ is computed using a Monte Carlo step.
As in \cite{GNZZ2025}, we consider the following three non-differentiable regularizers:
\begin{itemize}
    \item Lasso regularizer:
    $g(x) = \lambda\sum_{i=1}^{d}|x_i|$.
        \item Minimax concave penalty (MCP) regularizer (see e.g. \cite{Zhang10}) is $g(x) = \sum_{i=1}^{d}p_{\lambda}(x_i)$ where $p_{\lambda}(x)$ has the form:
    $p_{\lambda}(x) =
            \lambda |x|-\frac{x^2}{2a}$ if $|x| \leq a\lambda$, and
            $p_{\lambda}(x) =\frac{a\lambda^2}{2}$ otherwise,
where $\lambda>0$ is the regularization parameter and $a>1$ the scaling
parameter.
    \item Smoothly clipped absolute deviation (SCAD) regularizer (see e.g. \cite{Fan01}) is $g(x) = \sum_{i=1}^{d}p_{\lambda}(x_i)$ where $p_{\lambda}(x)$ is defined as (with $a>1$):
    \begin{equation}
        p_{\lambda}(x) = \begin{cases}
            \lambda |x| &\quad \text{if } |x| \leq \lambda, \\
            \frac{2a\lambda |x|-x^2-\lambda^2}{2(a-1)} &\quad \text{if } \lambda < |x| \leq a\lambda, \\
            \frac{\lambda^2(a+1)}{2} &\quad \text{otherwise}.
        \end{cases}
    \end{equation}
\end{itemize}

Indeed, all three examples of non-differentiable regularizers are separable, $g(x)=\sum_{i=1}^{d}p(x_i)$, with $p$
continuous and \emph{piecewise quadratic}; for such $p$ the Gaussian
convolution \eqref{defn:g:0} and its derivative are available in closed form
in terms of $\Phi$ and $\phi$, so that no Monte Carlo simulation is needed; see Appendix~\ref{app:closed:form} for details.
The UALMC iteration \eqref{eq:scheme} requires, at each step, the two
quantities
\[
  a_{k}=e^{U(x_{k})-U_{0}(x_{k})}=e^{g(x_{k})-g_{0}(x_{k})},
  \qquad
  \nabla U_{0}(x_{k})=\nabla f(x_{k})+\nabla g_{0}(x_{k}),
\]
both of which are now evaluated to machine precision in $\mathcal{O}(d)$
operations using \eqref{eq:cf:lasso}--\eqref{eq:cf:scad} in Appendix~\ref{app:closed:form}. This removes the
inner Monte Carlo estimator of $g_{0}$ and $\nabla g_{0}$: besides the obvious
speed-up, it also removes a source of bias, since $a_{k}$ depends on $g_{0}$
\emph{nonlinearly} (through the exponential), so that an unbiased Monte Carlo
estimator of $g_{0}(x_{k})$ does \emph{not} produce an unbiased estimator of
$a_{k}$.

\paragraph{Simulating Laplace distributions (synthetic).}
The sampling benchmark is the product Laplace distribution with density
\begin{equation}\label{eq:laplacetarget}
  \pi(x)=\prod_{i=1}^{d}\frac{1}{\sqrt2}\,e^{-\sqrt2\,|x_{i}|}
  \;\propto\; e^{-\sqrt2\,\|x\|_{1}},
  \qquad x\in\R^{d},
\end{equation}
i.e.\ $U(x)=\sqrt2\,\|x\|_{1}$ is the Lasso regularizer with $f\equiv0$ and
$\lambda=\sqrt2$, chosen so that each marginal is Laplace with scale
$b=1/\sqrt2$ and unit variance. The reference potential $U_{0}$ is its
closed-form Gaussian smoothing with parameter $\mu$. Because the marginal
quantile function is explicit, the $2$-Wasserstein distance to the true
target is \emph{measured} rather than proxied, which makes it ideal to validate
the theory.

\paragraph{Bayesian logistic regression with regularizers (real data).}
Given data $(X_{i},y_{i})_{i=1}^{n}$ with $X_{i}\in\R^{d}$,
$y_{i}\in\{0,1\}$, the target is the posterior density of the weights
$x\in\R^{d}$,
\begin{equation}\label{eq:logistictarget}
  \pi(x)\;\propto\;e^{-U(x)},\qquad
  U(x)=f(x)+m_{0}\|x\|^{2}+g(x),
\end{equation}
where
\begin{equation}\label{eq:logisticnll}
  f(x)=-\sum_{i=1}^{n}\Bigl[y_{i}\log\sigma\bigl(x^{\top}X_{i}\bigr)
  +(1-y_{i})\log\bigl(1-\sigma\bigl(x^{\top}X_{i}\bigr)\bigr)\Bigr]
\end{equation}
is the negative log-likelihood with the sigmoid $\sigma(z)=1/(1+e^{-z})$, and
$e^{-m_{0}\|x\|^{2}-g(x)}$ is the prior: a Gaussian ridge factor, which
provides the strong convexity required by Assumption~\ref{ass:main}(B),
times a non-differentiable penalty factor with $g$ one of the regularizers
above. The reference potential replaces $g$ by its Gaussian smoothing,
$U_{0}=f+m_{0}\|x\|^{2}+g_{0}$. We use two real data sets from the UCI
Machine Learning Repository~\cite{Dua19}: the Breast Cancer Wisconsin
(Diagnostic) data set ($n=569$ samples, $d=30$ features of cell nuclei in
digitized images of breast masses) and the Banknote Authentication data set
($n=1372$ samples, $d=4$ features extracted from images of genuine and forged
banknotes), with features normalized to $[-1,1]$ and no intercept.
Throughout, $m_{0}=0.5$, $\lambda=1$, and $a=10$ for SCAD and MCP, matching
\cite{GNZZ2025}; the elastic net and the capped $\ell_{1}$ penalty of
Section~\ref{sec:penalties} use $\lambda_{2}=0.5$ and $\theta=1.5$. Each run
evolves $20$ particles, initialized from $\mathcal N(0,\frac14 I_{d})$, for
$1{,}500$ iterations with $\eta=0.02$ and $\gamma=1$. Accuracy is the
training-set accuracy of the particles' classifiers, averaged over the
particles and over $3$ runs.

\paragraph{Bayesian neural network (real data).}
The third task is Bayesian inference for a two-layer feedforward network on
the same two data sets: $32$ hidden units with ReLU activation
$r(t)=\max(t,0)$ feeding one sigmoid output unit, and
\begin{equation}\label{eq:nntarget}
  \pi(w)\propto e^{-U(w)},\quad
  U(w)=\mathrm{BCE}(w)+m_{0}\|w\|^{2},\quad
  \mathrm{pred}(w_{1},w_{2})
  =\sigma\Bigl(\textstyle\sum_{j=1}^{32}r\bigl(X w_{1,j}\bigr)\,w_{2,j}\Bigr),
\end{equation}
where $\mathrm{BCE}$ is the binary cross-entropy of those predictions,
$w=(w_{1},w_{2})$ collects both layers' weights ($992$ parameters on Breast
Cancer), and a small ridge $m_{0}=0.01$ is included for stability. The
non-differentiability now sits in the \emph{activation} rather than in a
penalty: the reference potential smooths $r$ by
Corollary~\ref{cor:relu}, so this experiment exercises the closed forms
outside the penalty class. We also use the leaky ReLU $\max(t,0.1t)$. Each
run evolves $10$ particles, initialized from $\mathcal N(0,4I)$, for
$1{,}200$ iterations with $\gamma=2$ and $\eta\in\{0.003,0.01\}$, two runs
per step size.

\paragraph{Sampling heavy-tailed distributions (synthetic).}
The heavy-tailed benchmark is the multivariate Student's $t$ distribution
with $\nu$ degrees of freedom and density
\begin{equation}\label{eq:studenttarget}
  \pi(x)\;\propto\;\left(1+\frac{\|x\|^{2}}{\nu}\right)^{-(\nu+d)/2},
  \qquad x\in\R^{d}.
\end{equation}
In the notation of Assumption~\ref{ass:HQ} this is
$q(x)=1+\|x\|^{2}/\nu$, $U=(\beta+1)\log q$ and $U_{0}=\beta\log q$ with
$\beta=(\nu+d)/2-1$; with $\Sigma=\Id$ we have $m_{q}=L_{q}=2/\nu$ and the
condition of Assumption~\ref{ass:HQ} reduces to $\nu>8$. We sweep $\nu$
across both sides of that threshold, with $\gamma=3$, step sizes
$\eta\in\{0.05,0.01\}$, simulated time $T=40$, $200{,}000/d$ chains and two
runs per setting.

\begin{sloppypar}
\paragraph{Algorithms.} UALMC is \eqref{eq:scheme} with the correlated
increment of covariance $C$ in \eqref{eq:Cnoise}, initialized as in
\eqref{eq:init} with $v_{0}\sim\mathcal N(0,I_{d})$ independent of
$x_{0}$. ALMC is the overdamped anchored algorithm of \cite{GNZZ2025}.
We also run kinetic Langevin Monte Carlo (KLMC) and unadjusted Langevin algorithm (ULA) on
$U_{0}$; these target $\pi_{0}\propto e^{-U_{0}}$ and isolate the effect of the
anchoring.
\end{sloppypar}

\paragraph{Measurement.} An empirical $\mathcal{W}_{2}$ carries a finite-sample floor of
its own, so alongside every measurement we compute the $\mathcal{W}_{2}$ between the
same number of exact i.i.d.\ draws and the true quantile function, averaged over
repetitions, and report the discretization bias as
$\sqrt{\max\left(\mathcal{W}_{2}^{2}-\mathrm{floor}^{2},0\right)}$. Where a quantity falls at or
below that floor we say so rather than reporting it. Iteration counts $K$ are
medians over repetitions, minimized over the free parameters as the definition
of iteration complexity requires, with checkpoints every five iterations.

\subsection{Invariance and the Anchoring Mechanism}\label{sec:mech}

Theorem~\ref{thm:invariant} states that UALD preserves
$\pi(x,v)\propto e^{-\|v\|^{2}/2-U(x)}$, so the algorithm targets $\pi$ and not
the smoothed law $\pi_{0}$. This is what distinguishes the construction from
smoothing-based methods such as \cite{Chatterji20}, which converge to the
smoothed target and carry an asymptotic bias growing with the smoothing
parameter $\mu$. The effect is directly observable when simulating the
univariate Laplace distribution \eqref{eq:laplacetarget} at
$d=1$, $\eta=0.02$, $\gamma=3$ (medians over four repetitions after $T=300$
of simulated time):

\begin{center}
\begin{tabular}{lrr}
\toprule
$\mathcal{W}_{2}$ to the \emph{true} Laplace target & $\mu=0.5$ & $\mu=1.0$ \\
\midrule
UALMC \eqref{eq:scheme}      & $0.010$ & $0.008$ \\
kinetic Langevin on $U_{0}$         & $0.094$ & $0.243$ \\
\bottomrule
\end{tabular}
\end{center}

The un-anchored sampler converges to $\pi_{0}$ and its error to $\pi$ grows with
$\mu$, as it must; the anchored sampler's does not --- both of its entries are
within run-to-run noise of the estimator's sampling floor ($\approx0.007$
here), so no bias is resolvable in either column. At $\mu=1$ the improvement
is a factor of at least $30$ in $\mathcal{W}_{2}$, and, unlike for the smoothing-based benchmark, the error of
the anchored sampler is insensitive to the choice of $\mu$ on this target.

\subsection{Discretization Bias in the Step Size}\label{sec:eta}

\begin{figure}[t]
\centering
\includegraphics[width=0.92\textwidth]{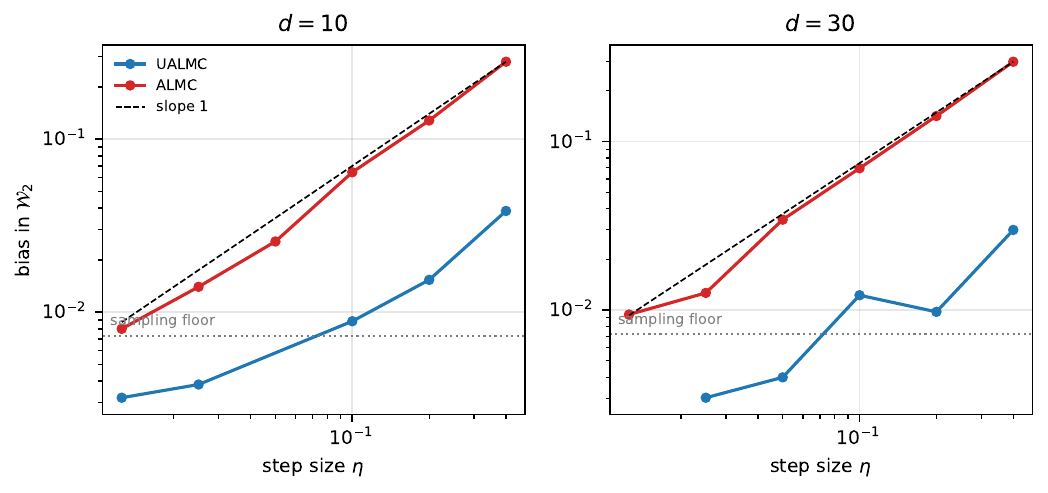}
\caption{Discretization bias against step size on the product Laplace target,
$\mu=1/d$, after $T=300$ of simulated time, for UALMC \eqref{eq:scheme} with
$\gamma=1.5\sqrt L$ and ALMC \cite{GNZZ2025}; one run per step size. Dotted:
the estimator's sampling floor; points whose measured $\mathcal{W}_{2}$ does not
exceed the floor are omitted.}
\label{fig:bias}
\end{figure}

We next measure the asymptotic bias in the step size, again on the synthetic
Laplace target \eqref{eq:laplacetarget}, where it is exactly measurable.
Theorem~\ref{thm:disc} bounds it by
$(2C_{\mathrm{loc}}/\theta)\sqrt d\,\eta$, and Figure~\ref{fig:bias} is
consistent with this: at $d=10$,
$\eta=0.4$ the measured UALMC bias is $0.038$ against $0.280$ for ALMC
--- an improvement by a factor of about $7$ at the largest step size ---
and for $\eta\le0.05$ the UALMC bias falls below what the estimator
resolves, while ALMC needs $\eta\approx0.0125$ to reach the same level.
Equivalently, at matched bias the underdamped scheme tolerates a step size
four to seven times larger at $d=10$. The ALMC exponents, fitted over the
step sizes whose bias exceeds twice the floor, are $+1.14$
and $+1.04$ at $d=10,30$ (and $+1.04$ at $d=1$ in a shorter $T=30$ run).

\begin{remark}[On the comparison of rates]\label{rem:rates}
Theorem 14 of \cite{GNZZ2025} bounds the ALMC bias by
$\sqrt2\,C\eta^{1/2}$. That estimate comes from a mean-square argument, made
necessary by the state-dependent diffusion coefficient
$\sigma(x)=e^{(U-U_{0})/2}$ of the overdamped anchored SDE, and strong order
$1/2$ is generic in that setting. The distributional error measured by $\mathcal{W}_{2}$
is typically one order better than the pathwise error, so the observed
$\eta^{1}$ behavior is compatible with that bound rather than in conflict with
it. Both schemes are governed by upper bounds conservative to differing
degrees, and the comparison in Remark~\ref{rmk:comparison} is best read as a
comparison of guarantees rather than of realized performance.
\end{remark}

\subsection{Iteration Complexity in the Dimension}\label{sec:dim}

\begin{figure}[t]
\centering
\includegraphics[width=0.56\textwidth]{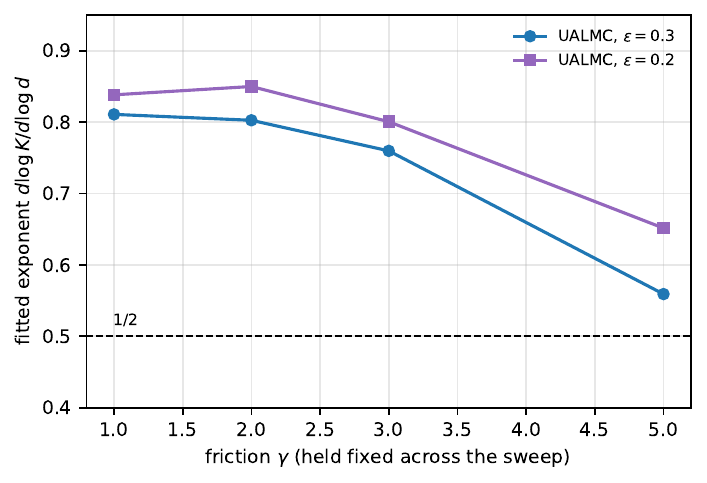}
\caption{Fitted exponent of $K$ in $d$, with $\gamma$ held fixed across the
sweep as in Corollary~\ref{cor:iteration}, against the value at which it is
held. Dashed: the exponent $1/2$ of Corollary~\ref{cor:iteration}.}
\label{fig:expgamma}
\end{figure}

We now test the dimension dependence of the iteration complexity, sweeping
$d$ from $1$ to $100$ on the synthetic Laplace family
\eqref{eq:laplacetarget} with $\mu=1/d$ and counting the iterations $K$ until
the measured $\mathcal{W}_{2}$ falls below $\varepsilon$.
Corollary~\ref{cor:iteration} gives $K=\widetilde{\mathcal O}(\sqrt
d/\varepsilon)$ with $\gamma$ \emph{fixed}, its contraction constant provided by
Proposition~\ref{prop:linear}, which requires $\gamma>\sqrt L$. The faithful
test is therefore to hold $\gamma$ fixed, sweep the dimension, and minimize $K$
over the step size alone.

\begin{table}[t]
\centering
\begin{tabular}{rrrrrrrrr}
\toprule
$\gamma$ & $d=1$ & $2$ & $5$ & $10$ & $20$ & $50$ & $100$ & exponent \\
\midrule
$1$ & 10 & 20 & 75 & 100 & 132 & 352 & 412 & $+0.811$ \\
$2$ & 15 & 20 & 50 & 105 & 140 & 415 & 452 & $+0.803$ \\
$3$ & 15 & 25 & 35 &  70 & 160 & 218 & 525 & $+0.760$ \\
$5$ & 25 & 35 & 45 &  62 & 108 & 255 & 280 & $\mathbf{+0.559}$ \\
\bottomrule
\end{tabular}
\caption{Iterations to $\mathcal{W}_{2}<0.3$ on the product Laplace target, $\mu=1/d$,
minimized over the step size with $\gamma$ held fixed, medians over four
repetitions. The exponent approaches the $1/2$ of
Corollary~\ref{cor:iteration} as $\gamma$ grows; at $\varepsilon=0.2$ the
value at $\gamma=5$ is $+0.651$.}
\label{tab:fixedgamma}
\end{table}

At $\gamma=5$ the fitted exponent is $+0.559$ against the predicted $1/2$, and
the approach is monotone in $\gamma$ at $\varepsilon=0.3$ and nearly so at
$\varepsilon=0.2$ (Figure~\ref{fig:expgamma}). This is what the
analysis predicts: Proposition~\ref{prop:linear} produces a contraction in
the constant twisted metric only for $\gamma>\sqrt L$, and
Corollary~\ref{cor:iteration} inherits that requirement, so agreement should
improve as $\gamma$ increases and degrade below the threshold, which is what is
observed.

\begin{remark}[The role of $\gamma$]\label{rem:gammarole}
Minimizing $K$ over $\gamma$ as well as $\eta$ raises the fitted exponent to
$+0.734\pm0.027$ at $\varepsilon=0.3$ (standard error of the least-squares
slope over the seven dimensions). Writing $K\asymp T_{\mathrm{mix}}/\eta\asymp
(\gamma/m)(\sqrt d/\varepsilon)$, an optimal friction growing like
$\gamma^{*}\propto d^{\,0.3}$,
consistent with the observed $\gamma^{*}$
rising from $1$ to $5$ across the sweep, that yields $K\propto d^{\,0.8}$. The two
measurements are accounted for by the same expression and differ only in whether
$\gamma$ is a constant or a free parameter. A smaller $\gamma$ buys a smaller
bias at fixed $d$ (Figure~\ref{fig:mu}, right) at the cost of a worse dimensional
scaling.
\end{remark}

\subsection{Choice of the Smoothing Scale}\label{sec:mu}

\begin{figure}[t]
\centering
\includegraphics[width=0.50\textwidth]{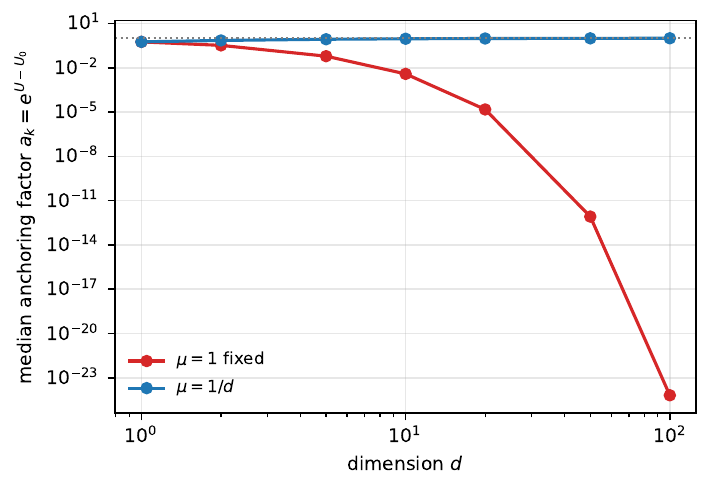}
\hfill
\includegraphics[width=0.46\textwidth]{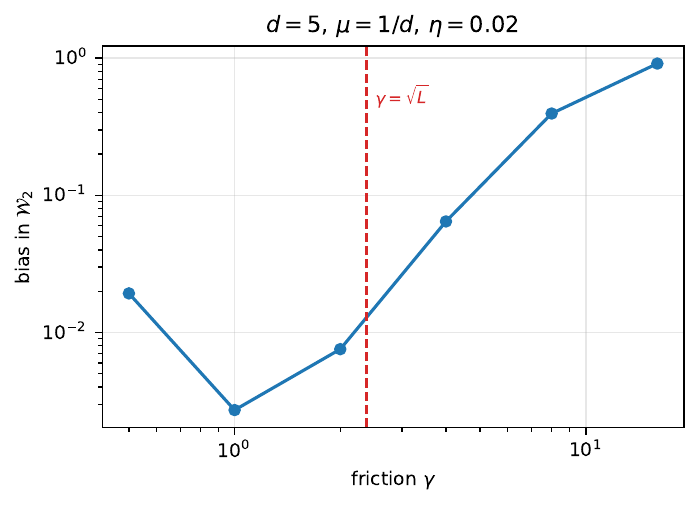}
\caption{Left: median anchoring factor $a_{k}=e^{U-U_{0}}$ at draws from the
target. Right: bias against friction at $d=5$, $\eta=0.02$; dashed:
$\gamma=\sqrt L$, the threshold of Proposition~\ref{prop:linear}.}
\label{fig:mu}
\end{figure}

Assumption~\ref{ass:main}(A) requires $\omega_{0}=\|U-U_{0}\|_{\infty}$
finite, and the constants of Theorem~\ref{thm:disc} depend on it. For a
separable penalty the mismatch is additive over coordinates, and for the
Gaussian-smoothed $\ell_{1}$ penalty it is available exactly:
\begin{equation}\label{eq:omega0}
  \omega_{0}(d)=d\,\lambda\,\mu\sqrt{2/\pi},\qquad \omega_{1}(d)=\lambda\sqrt d .
\end{equation}
Assumption~\ref{ass:main}(A) therefore holds for every $d$ and every $\mu$,
but keeping $\omega_{0}$, and hence the constants, bounded as the dimension
grows requires $\mu=O(1/d)$. The practical consequence is sharp on the
synthetic Laplace family: at $\mu=1$
the median $a_{k}$ falls from $0.56$ at $d=1$ to $6.6\times10^{-25}$ at
$d=100$, whereas at $\mu=1/d$ it rises to $0.99$
(Figure~\ref{fig:mu}, left). Since the position increment
carries the factor $a_{k}$, the chain is otherwise frozen. The same
mechanism operates on real data: in the Bayesian logistic regression
\eqref{eq:logistictarget} with the Lasso penalty
(Figure~\ref{fig:logistic}), the anchored methods collapse to chance
accuracy at $\mu=1$ on Breast Cancer ($d=30$) but are untouched on Banknote
($d=4$), exactly as the exponential-in-$d$ decay of $a_{k}$ predicts.

\begin{figure}[t]
\centering
\includegraphics[width=0.95\textwidth]{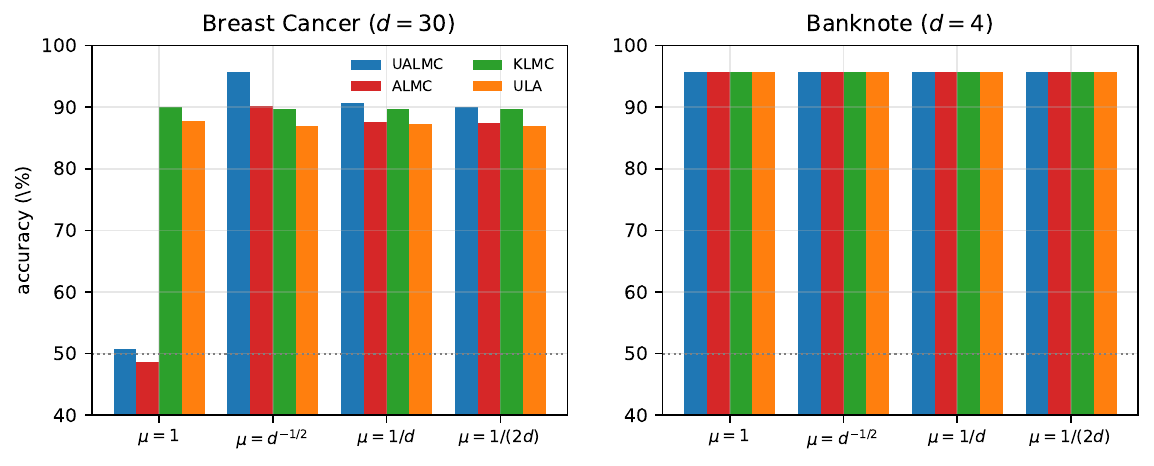}
\caption{Bayesian logistic regression with the $\ell_{1}$ penalty and exact
Gaussian smoothing, final accuracy against the choice of $\mu$, for both data
sets and all four samplers. Dotted: chance level. The anchored methods fail at
$\mu=1$ on Breast Cancer ($d=30$) and are unaffected on Banknote ($d=4$), as
\eqref{eq:omega0} predicts.}
\label{fig:logistic}
\end{figure}

\subsection{Non-Differentiable Regularizers}\label{sec:penalties}

\begin{figure}[t]
\centering
\includegraphics[width=0.86\textwidth]{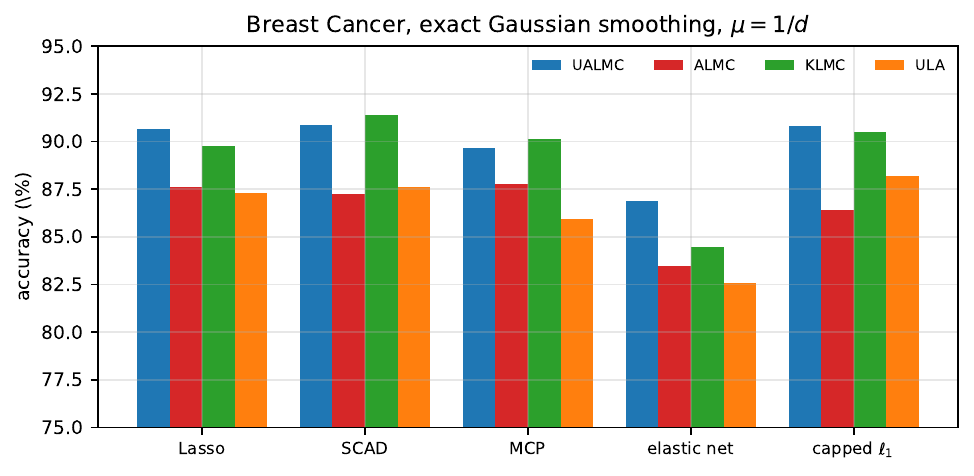}
\caption{Bayesian logistic regression on Breast Cancer ($n=569$, $d=30$) with
mixed penalty $U=f+m_{0}\|x\|^{2}+g$, exact Gaussian smoothing, $\mu=1/d$.}
\label{fig:penalties}
\end{figure}

We now conduct Bayesian logistic regression with the posterior
\eqref{eq:logistictarget} on the real Breast Cancer Wisconsin data set,
comparing five separable penalties for $g$: the Lasso, SCAD and MCP
regularizers of the main text, and, by Corollary~\ref{cor:more}, the elastic
net and the capped $\ell_{1}$.
UALMC leads ALMC on all five, by $1.9$ to $4.4$ accuracy points
(Figure~\ref{fig:penalties}), so the improvement of the underdamped over the
overdamped anchored algorithm is not an artifact of a particular
regularizer. The elastic net is hardest for every sampler, including KLMC and
ULA, which never form $a_{k}$, so the drop reflects the target, which carries
the extra ridge term $\lambda_{2}x^{2}$, rather than the anchoring: its
smoothing $\lambda_{2}(x^{2}+\mu^{2})$ (Corollary~\ref{cor:more}) adds only a
constant $\lambda_{2}\mu^{2}$ per coordinate to the mismatch $U-U_{0}$, which
rescales $a_{k}$ uniformly.

The choice of $\mu$ dominates everything else. On the same problem, with the
Lasso penalty,
\begin{center}
\begin{tabular}{lrrrr}
\toprule
Breast Cancer, $d=30$ & $\mu=1$ & $\mu=d^{-1/2}$ & $\mu=1/d$ & $\mu=1/(2d)$ \\
\midrule
UALMC & $50.8\%$ & $\mathbf{95.8\%}$ & $90.6\%$ & $90.1\%$ \\
ALMC  & $48.7\%$ & $90.2\%$ & $87.6\%$ & $87.4\%$ \\
KLMC  & $90.1\%$ & $89.8\%$ & $89.8\%$ & $89.8\%$ \\
median $a_{k}$ & $4\times10^{-7}$ & $0.6$ & $1.0$ & $1.0$ \\
\bottomrule
\end{tabular}
\end{center}
At $\mu=1$ the anchored methods collapse to chance while KLMC, which never
forms $a_{k}$, is unaffected. Among the scaled choices the best here is
$\mu=d^{-1/2}$ rather than $\mu=1/d$: keeping $\omega_{0}$ bounded is what
$\mu=\Theta(1/d)$ buys, but at finite $d$ there is a trade-off, since smaller
$\mu$ sharpens $U_{0}$ and inflates the contribution $2\lambda/(\mu\sqrt{2\pi})$
of $g_{0}$ to $L$. The
low-dimensional Banknote problem ($d=4$) is insensitive to the choice
(Figure~\ref{fig:logistic}).

\begin{remark}[Accuracy as a diagnostic]\label{rem:acc}
Classification accuracy is a functional of a decision boundary, not of the law,
and it need not separate an asymptotically unbiased sampler from a biased one:
KLMC, which targets $\pi_{0}$, matches UALMC to within a point at $\mu=1/d$
for the Lasso, SCAD, MCP and capped $\ell_{1}$ penalties, and at every $\mu$
on Banknote (Figure~\ref{fig:logistic}). We report it for comparability with \cite{GNZZ2025}; the
$\mathcal{W}_{2}$ experiments of Sections~\ref{sec:mech}--\ref{sec:dim} are what discriminate.
\end{remark}

\subsection{Group Penalties}\label{sec:group}

\begin{figure}[t]
\centering
\includegraphics[width=0.95\textwidth]{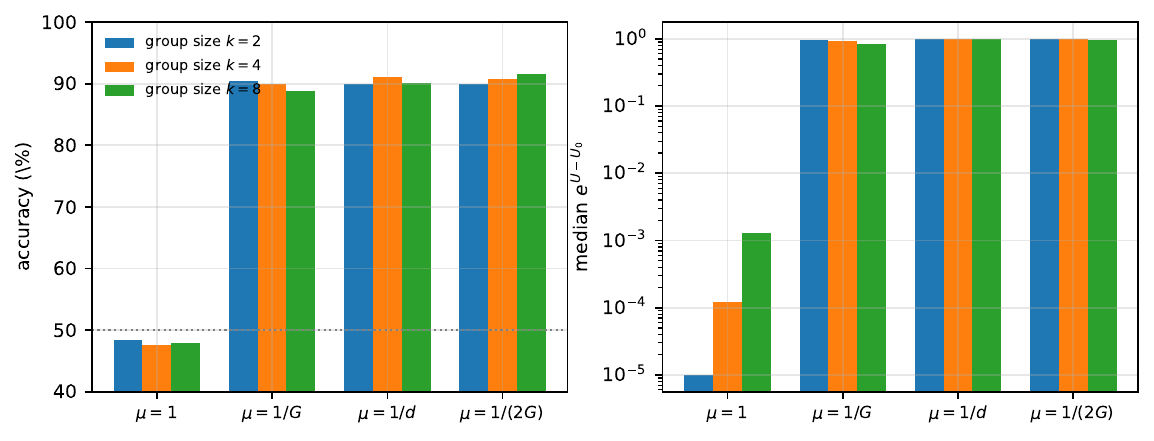}
\caption{Group-lasso logistic regression on Breast Cancer, UALMC. Left:
accuracy. Right: median anchoring factor. Dotted: chance level.}
\label{fig:group}
\end{figure}

Staying with Bayesian logistic regression on the Breast Cancer data set, we
replace the separable penalty in \eqref{eq:logistictarget} by the group-lasso
regularizer $g(x)=\lambda\sum_{j=1}^{G}\|x_{\mathcal G_{j}}\|_{2}$, where
$\mathcal G_{1},\ldots,\mathcal G_{G}$ partition the $d=30$ coordinates into
$G$ contiguous groups of size $k$, the last group taking the remainder when $k$
does not divide $d$ --- a non-differentiable penalty \emph{outside}
the separable piecewise-quadratic class, smoothed exactly by
Lemma~\ref{lem:group}. With this smoothing the
mismatch becomes
\[
  \omega_{0}=\lambda\,\mu\sum_{j=1}^{G}c_{|\mathcal G_{j}|}\le G\,\lambda\,\mu\,c_{k},
  \qquad c_{k}=\frac{\sqrt2\,\Gamma\left(\frac{k+1}{2}\right)}{\Gamma\left(\frac{k}{2}\right)},
\]
with equality when all groups have size $k$, so it is the number of
\emph{groups}, not the dimension, that sets the scale, and at fixed $k$ the
requirement becomes $\mu=O(1/G)$. At fixed $\mu=1$ coarser grouping
relieves the pressure but does not remove it:

\begin{center}
\begin{tabular}{rrrrr}
\toprule
$k$ & $G$ & $\omega_{0}$ at $\mu=1$ & median $e^{U-U_{0}}$ & accuracy \\
\midrule
2 & 15 & $18.8$ & $1\times10^{-5}$ & $48.4\%$ \\
4 &  8 & $14.4$ & $1\times10^{-4}$ & $47.6\%$ \\
8 &  4 & $10.6$ & $1\times10^{-3}$ & $47.9\%$ \\
\bottomrule
\end{tabular}
\end{center}

\noindent
All three remain at chance; with $\mu=1/G$ accuracy is $88.9$--$90.4\%$ and
the median anchoring factor lies in $[0.8,1.0]$ (Figure~\ref{fig:group}). Since $c_{k}$ grows like $\sqrt k$ while $G$ falls like
$1/k$, the product $Gc_{k}$ decreases only like $1/\sqrt k$.

\subsection{Heavy-Tailed Targets}\label{sec:heavy}

To sample heavy-tailed targets, we need to discretize \eqref{eq:UALD:heavy}
properly. We define the
\emph{tamed UALMC}:
\begin{equation}\label{eq:tamedscheme}
\begin{aligned}
  &v_{k+1}=\psi_0(\eta)v_{k}-\psi_1(\eta)\,\beta\nabla q(x_{k})
  +\sqrt{2\gamma}\,\xi_{k+1},
  \\
  &x_{k+1}=x_{k}+\bar a_{k}\left(\psi_1(\eta)v_{k}
  -\psi_2(\eta)\,\beta\nabla q(x_{k})\right)
  +\sqrt{2\gamma}\,\bar a_{k}\,\xi'_{k+1},
\end{aligned}
\end{equation}
by replacing $a_{k}$ in \eqref{eq:scheme} with
\begin{equation}\label{eq:tamed}
  \bar a_{k}:=\frac{q(x_{k})}{1+\sqrt\eta\,q(x_{k})}\ \le\
  \min\left\{q(x_{k}),\,\eta^{-1/2}\right\},
\end{equation}
while keeping the force untamed,
$g_{k}:=a_{k}\nabla U_{0}(x_{k})=\beta\nabla q(x_{k})$.

\begin{figure}[t]
\centering
\includegraphics[width=0.95\textwidth]{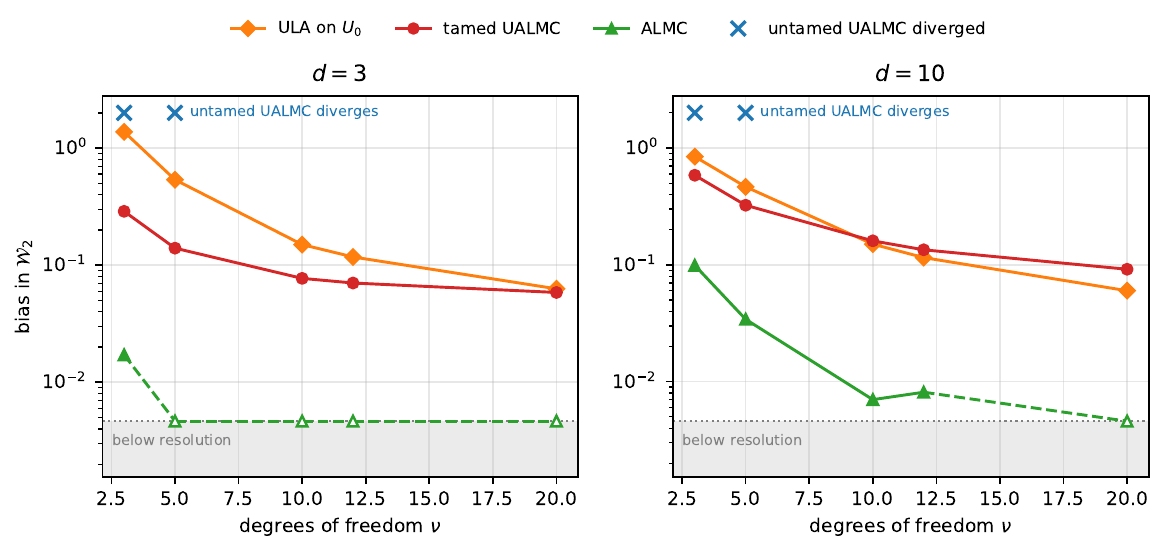}
\caption{Student's $t$ target \eqref{eq:studenttarget}, bias against degrees
of freedom $\nu$ at $\eta=0.01$ for $d=3$ (left) and $d=10$ (right), mean of
two runs. The shaded band lies below the estimator's sampling floor; an open
marker on its edge means the bias is not resolvable in both runs. $\times$: the
untamed iteration diverged in both runs. Where it survives, its bias cannot be
told apart from that of ALMC (the two runs overlap, or the bias is not
resolvable), so it is not drawn separately. Assumption~\ref{ass:HQ} requires
$\nu>8$.}
\label{fig:taming}
\end{figure}

We sample the synthetic multivariate Student's $t$ family
\eqref{eq:studenttarget} for $d\in\{1,3,5,10\}$ and
$\nu\in\{3,5,10,12,20\}$, on both sides of the $\nu>8$ threshold of
Assumption~\ref{ass:HQ} (Figure~\ref{fig:taming}). The anchored force $-\beta\nabla q$ does not
degenerate in the tails, where the
true force $-(\beta+1)\nabla q/q$ vanishes.
At $\eta=0.01$, ULA on $U_{0}$ is biased by up to
$1.90$ across the range, against at most $8.1\times10^{-3}$, at or near the
estimator's sampling floor of about $4\times10^{-3}$, for ALMC inside the range
of Assumption~\ref{ass:HQ} and for the untamed kinetic iteration where it
survives; the tamed scheme \eqref{eq:tamedscheme} carries a larger bias,
$5\times10^{-2}$ to $1.6\times10^{-1}$ over the same range, the price of the
stability established next.

\paragraph{The taming is necessary.} The tamed scheme
\eqref{eq:tamedscheme} did not diverge in any of the $80$ runs. The untamed
iteration diverged in every run at $\nu=3$, in $12$ of $16$ at $\nu=5$, and,
at $d=10$ with $\eta=0.05$, in both runs at each of $\nu=10$ and $\nu=12$
--- the last two \emph{inside} the range of Assumption~\ref{ass:HQ}:

\begin{center}
\begin{tabular}{lrrrrr}
\toprule
untamed divergences, pooled over $d$ & $\nu=3$ & $\nu=5$ & $\nu=10$ & $\nu=12$ & $\nu=20$ \\
\midrule
$\eta=0.05$ & $8/8$ & $6/8$ & $2/8$ & $2/8$ & $0/8$ \\
$\eta=0.01$ & $8/8$ & $6/8$ & $0/8$ & $0/8$ & $0/8$ \\
\bottomrule
\end{tabular}
\end{center}

This is the behavior the construction anticipates: $a_{k}=q(x_{k})$ grows
superlinearly, explicit schemes with superlinearly growing coefficients lose
moment boundedness, and the taming $\bar a_{k}=q/(1+\sqrt\eta\,q)$ restores it.
Where the untamed iteration does survive it is more accurate --- by a factor of
$17$ to $22$ in the two cells at $\eta=0.01$ where its bias is resolvable above
the sampling floor, and $7$ to $16$ in the two such cells at $\eta=0.05$ --- so
taming trades accuracy for stability, and in this regime stability is the
binding constraint. In summary, on heavy-tailed targets the anchoring reduces
the bias of plain Langevin on the smoothed potential by one to two orders of
magnitude in $\mathcal{W}_{2}$ wherever both biases are resolvable (a factor of $9$
to $80$ for ALMC and $15$ to $20$ for the untamed iteration at $\eta=0.01$). The
tamed variant \eqref{eq:tamedscheme} is the only iteration in the comparison
that never diverges; at $\eta=0.01$ and $d=10$ its bias is comparable to that
of ULA, the price of that stability.

\subsection{Bayesian Neural Network}\label{sec:nn}

Finally, we perform Bayesian inference for the two-layer feedforward network
of \eqref{eq:nntarget} on the two real data sets, smoothing the ReLU
activation by Corollary~\ref{cor:relu} so that no Monte Carlo enters. Here
the mismatch accumulates over data points and hidden units rather
than over coordinates, and at $\mu=1$ the anchored methods diverge in all runs
for both ReLU and leaky ReLU on both data sets, within their first $25$
iterations. The failure is by \emph{overflow}: evaluated along the KLMC
trajectory, which never uses it, the median of $a_{k}=e^{U-U_{0}}$ over the
particles reaches $4\times10^{67}$ on Banknote. KLMC itself never diverges, and
ULA's few divergences (on Breast Cancer, at $\eta=0.01$) occur at every choice
of $\mu$, so they do not depend on it. Under any scaled choice of $\mu$ every
UALMC and KLMC run is stable, while ALMC and ULA still diverge in some runs at
$\eta=0.01$; UALMC reaches $96.2$--$98.7\%$ accuracy across the data sets and
activations, comparable to the best of the benchmarks on each.

\section{Conclusion}

Although various approaches have been proposed
in the Langevin literature for sampling non-differentiable and heavy-tailed targets
in modern-day large-scale machine learning problems,
all of them were developed in the overdamped setting only.
In this paper, we proposed a novel momentum-based underdamped anchored Langevin dynamics (UALD),
and underdamped anchored Langevin Monte Carlo (UALMC) algorithm based
on a discretization scheme of UALD.
Compared to the existing literature,
our algorithm is the first kinetic
(momentum-based) Langevin algorithm for non-differentiable and heavy-tailed targets that is
free of asymptotic bias.
The method replaces the original potential with a smooth reference potential and modifies the underdamped Langevin dynamics via multiplicative scaling in the drift term.
We established non-asymptotic guarantees in the 2-Wasserstein distance to the target distribution and showed
that the resulting iteration complexity has better dependence on dimension and accuracy
level compared to overdamped-based anchored Langevin algorithms (ALMC).
We provided numerical experiments to illustrate the theory
and practical performance of our proposed approach.

\section*{Acknowledgements}
Qi Feng is partially supported by the grants NSF DMS-2306769 and DMS-2420029.
Lingjiong Zhu is partially supported by the grants NSF DMS-2053454 and DMS-2208303.

\bibliographystyle{plain}
\bibliography{anchoredlangevin}


\appendix

\section{Non-Smooth Sampling}\label{sec:nonsmooth}

When the target function $U(x)$
is not differentiable, we do not have
access to $\nabla U(x)$ and thus the classical underdamped Langevin algorithm is not directly applicable.
By taking $U_{0}(x)$ as a smoothing function of $U(x)$, our underdamped anchored Langevin algorithm can be used for non-smooth sampling.
When the target is not differentiable, smoothing methods have been studied in the literature,
including the Gaussian smoothing for Langevin algorithms \cite{Chatterji20,GNZZ2025}.
We consider the setting where the target function $U(x)$ can be written as the sum
of a smooth function $f(x)$ and a non-smooth function $g(x)$:
\begin{equation}
U(x)=f(x)+g(x).
\end{equation}
We define $U_{0}(x)$ as
\begin{equation}\label{defn:U:0:f:g:0}
U_{0}(x)=f(x)+g_{0}(x),
\end{equation}
where $g_{0}(x)$ is the Gaussian smoothing function of $g(x)$:
\begin{equation}\label{defn:g:0}
g_{0}(x)=\mathbb{E}_{\xi}[g(x+\mu\xi)],\qquad\xi\sim\mathcal{N}(0,\Id),
\end{equation}
where $\mu>0$ is the smoothing parameter that can be fine-tuned.
We next assume that $g(x)$ is Lipschitz continuous such that it is almost everywhere differentiable. For such functions, the Clarke subdifferential $\partial g(x)$ at every point $x\in\mathbb{R}^d$ exists and is a compact set \cite{rockafellar2009variational,qi1993nonsmooth}.
\begin{assumption}\label{assump:Lip:g}
Assume that $g(x)$ is $K$-Lipschitz, i.e.
\begin{equation}
|g(x)-g(y)|\leq K\Vert x-y\Vert,\qquad\text{for any $x,y\in\mathbb{R}^{d}$},
\end{equation}
Furthermore, assume that $g$ is $\varrho$-weakly convex for some $\varrho\geq 0$, i.e. the function $g_\varrho(x):= g(x) + \frac{\varrho}{2}\|x\|^2$ is convex.
\end{assumption}
$\varrho$-weakly convex functions arise frequently in machine learning applications \cite{zhu2023distributionally,zhangsapdplus,davis2018subgradient}.
For example, when $g(x)=\|x\|_1$, Assumption~\ref{assump:Lip:g} is satisfied; similarly, smoothly clipped absolute deviation (SCAD) regularizer \cite{Fan01} and
minimax concave penalty (MCP) regularizer \cite{Zhang10}
in machine learning applications
are piecewise twice continuously differentiable admitting directional derivatives and satisfy Assumption~\ref{assump:Lip:g}.
In addition, we will assume that $f(x)$ is smooth and strongly convex.
\begin{assumption}\label{assump:smooth:f}
Assume that $f(x)$ is differentiable and $L_{f}$-smooth, i.e.
\begin{equation}
\Vert\nabla f(x)-\nabla f(y)\Vert\leq L_{f}\Vert x-y\Vert,\qquad\text{for any $x,y\in\mathbb{R}^{d}$},
\end{equation}
and further assume that $f(x)$ is $m_{f}$-strongly convex, i.e.
\begin{align*}
\left\langle\nabla f(x)-\nabla f(y),x-y\right\rangle\geq m_{f}\Vert x-y\Vert^{2},\qquad\text{for any $x,y\in\mathbb{R}^{d}$}.
\end{align*}
\end{assumption}
Next, we quote a series of technical results from \cite{GNZZ2025} that show that under Assumption~\ref{assump:Lip:g},
the difference between $U$ and $U_{0}$
and the difference between the sub-differential of $U$ and $U_{0}$ are uniformly bounded and can be made arbitrarily small when the smoothing coefficient $\mu$ is chosen to be small.
\begin{lemma}[\cite{GNZZ2025}]\label{lem:CP:bound}
    Under Assumption~\ref{assump:Lip:g}, we have
    $$
    |U(x)-U_0(x)|\leq K\mu\sqrt{d}.
    $$
\end{lemma}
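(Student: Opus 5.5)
The bound follows from the pointwise Lipschitz estimate for $g$ after averaging. By the definitions \eqref{defn:U:0:f:g:0} and \eqref{defn:g:0}, the smooth part $f$ cancels exactly in the difference. For every $x\in\mathbb{R}^d$ this gives
\[
U(x)-U_0(x)=g(x)-g_0(x)=\mathbb{E}_{\xi}\left[g(x)-g(x+\mu\xi)\right],\qquad \xi\sim\mathcal N(0,\Id).
\]
This expectation is well defined: since $g$ is $K$-Lipschitz under Assumption~\ref{assump:Lip:g}, we have $|g(x+\mu\xi)|\le |g(x)|+K\mu\|\xi\|$, and the right-hand side is integrable against the Gaussian law.

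Next, I move the absolute value inside the expectation (Jensen) and apply the Lipschitz bound pointwise in $\xi$:
\[
|U(x)-U_0(x)|\le \mathbb{E}_\xi\left[|g(x)-g(x+\mu\xi)|\right]\le K\mu\,\mathbb{E}\|\xi\|.
\]

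It remains to control the Gaussian first moment. By Cauchy--Schwarz (equivalently, \eqref{eq:chimoment} with $p=1$ after passing through the second moment),
\[
\mathbb{E}\|\xi\|\le\left(\mathbb{E}\|\xi\|^2\right)^{1/2}=\sqrt d .
\]
Combining the displays gives $|U(x)-U_0(x)|\le K\mu\sqrt d$ uniformly in $x$, which is the claim.

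There is no real obstacle here. The only point needing care is integrability, which the Lipschitz growth bound above guarantees. The weak convexity part of Assumption~\ref{assump:Lip:g} is not used for this estimate; it matters only for the companion results on gradients.
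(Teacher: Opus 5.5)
Your argument is correct and is the standard Gaussian-smoothing estimate $|g(x)-g_0(x)|\le \E|g(x)-g(x+\mu\xi)|\le K\mu\,\E\|\xi\|\le K\mu\sqrt d$ (as in \cite{nesterov2017random}); this is the argument behind the bound, which the paper quotes from \cite{GNZZ2025} without reproducing a proof. One small caution: \eqref{eq:chimoment} as stated only gives $\E\|\xi\|\le\sqrt{d+1}$, so rely on your direct Cauchy--Schwarz step $\E\|\xi\|\le(\E\|\xi\|^2)^{1/2}=\sqrt d$ to get exactly the claimed constant.
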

\begin{lemma}[\cite{GNZZ2025}]\label{ineq-small-grad-error} Let Assumption~\ref{assump:Lip:g}  hold. Then we have
\begin{equation}
\sup_{x\in\mathbb{R}^d} \mbox{dist}\left(\nabla g_0(x), \partial g(x)\right)
\leq 3^{3/4}C_{g} \mu d^{3/2},
\end{equation}
where $\mbox{dist}(\cdot, \partial g(x))$ denotes the Hausdorff distance to the set $\partial g(x)$.
\end{lemma}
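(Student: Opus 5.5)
Since this lemma is quoted from \cite{GNZZ2025}, the cleanest route is to reproduce their argument, and I would organize it as follows. \textbf{Step 1 (differentiate under the expectation).} Under Assumption~\ref{assump:Lip:g}, $g$ is $K$-Lipschitz and hence differentiable a.e. with $\|\nabla g\|\le K$. Dominated convergence then gives
\[
\nabla g_0(x)=\E_\xi\left[\nabla g(x+\mu\xi)\right],
\]
the integrand being defined for a.e. $\xi$. \textbf{Step 2 (reduce to the convex case).} By $\varrho$-weak convexity, the Clarke subdifferential satisfies $\partial g(x)=\partial g_\varrho(x)-\varrho x$. Since $\E\xi=0$,
\[
\nabla g_0(x)=\E\left[\nabla g_\varrho(x+\mu\xi)\right]-\varrho x .
\]
So it suffices to bound $\mathrm{dist}\left(\E[\nabla g_\varrho(x+\mu\xi)],\partial g_\varrho(x)\right)$ for the convex function $g_\varrho$.

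\textbf{Step 3 (pointwise comparison with a selection at $x$).} For each $x$ I would fix a measurable selection $s(x)\in\partial g(x)$ and compare $\nabla g(y)$ with $s(x)$ for $y=x+\mu\xi$. The target is a pointwise estimate of the form
\[
\|\nabla g(y)-s(x)\|\le C_g\,\mu^{-2}\|y-x\|^3 ,
\]
or, more realistically, the same estimate in averaged form, after which Jensen's inequality gives
\[
\mathrm{dist}\left(\nabla g_0(x),\partial g(x)\right)\le\E\|\nabla g(x+\mu\xi)-s(x)\|\le C_g\mu\,\E\|\xi\|^3 .
\]
The numerical constant follows from Hölder: $\E\|\xi\|^3\le(\E\|\xi\|^4)^{3/4}=(d(d+2))^{3/4}\le(3d^2)^{3/4}=3^{3/4}d^{3/2}$, which matches the constant in the statement exactly and suggests this is the intended mechanism.

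\textbf{Obstacle.} Step 3 is where the difficulty lies. For a general Lipschitz weakly convex $g$ no such bound holds uniformly in $x$. For $g(x)=|x|$ at $x=\mu$, one computes $\nabla g_0(\mu)=2\Phi(1)-1\approx0.68$, while $\partial g(\mu)=\{1\}$, an $\mathcal O(1)$ gap that does not shrink with $\mu$. The constant $C_g$ must therefore encode additional structure of $g$ from \cite{GNZZ2025}: for the piecewise-$C^2$ penalties (Lasso, SCAD, MCP), a bound on the curvature of each piece together with control of the Gaussian mass near the kinks, or a $\mu$-enlarged version of $\partial g$. I would first try to extract the precise definition of $C_g$ in \cite{GNZZ2025} and verify Step 3 using the convexity of $g_\varrho$, namely $\langle\nabla g_\varrho(y)-s,y-x\rangle\ge0$ for $s\in\partial g_\varrho(x)$, together with a Taylor bound on each smooth piece. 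If the lemma is needed with the literal $\partial g(x)$, I would instead replace it by the $\mathcal O(\mu\sqrt d)$-enlarged subdifferential obtained via Brøndsted--Rockafellar; this is weaker but is all that Assumption~\ref{ass:main}(A) actually requires, since only $\omega_1=\|\nabla(U-U_0)\|_\infty$ enters.
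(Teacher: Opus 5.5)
Steps 1--2 are fine, and your H\"older computation reproduces the constant. The proposal still does not prove the lemma, and the gap at Step 3 cannot be closed by finding the ``right'' $C_g$. Your $|x|$ example is a counterexample to the statement itself, not just to your method.

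Take $g(t)=|t|$, which satisfies Assumption~\ref{assump:Lip:g} with $K=1$, $\varrho=0$. Then $g_0'(x)=2\Phi(x/\mu)-1$, and letting $x\downarrow0$ gives $\sup_x\mathrm{dist}(g_0'(x),\partial g(x))=1$ for every $\mu>0$. For $g=\lambda\|\cdot\|_1$ on $\R^d$ the supremum is $\lambda\sqrt d$. So the inequality would need $C_g\ge3^{-3/4}/\mu$.

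More generally, any bound uniform in $x$ that tends to $0$ with $\mu$ forces $g\in C^1$. On the full-measure set $D$ of differentiability points, $\partial g(x)=\{\nabla g(x)\}$ by weak convexity. So $\nabla g|_D$ would be a uniform limit of the maps $\nabla g_0$, each Lipschitz with constant $K\E\|\xi\|/\mu$. Hence $\nabla g|_D$ is uniformly continuous, and Clarke's gradient formula makes $\partial g(x)$ a singleton everywhere. The lemma is therefore false, or vacuous, for the Lasso, SCAD and MCP penalties it is invoked for.

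The paper offers no proof; it only cites \cite{GNZZ2025}. Its own formula \eqref{eq:omega0}, $\omega_1(d)=\lambda\sqrt d$ independently of $\mu$, contradicts the remark that uses this lemma to get $\omega_1\le3^{3/4}C_g\mu d^{3/2}$.

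The constant itself comes from Stein's identity $\nabla g_0(x)-s=\mu^{-1}\E[(g(x+\mu\xi)-g(x)-\mu\langle s,\xi\rangle)\xi]$ combined with a quadratic model $|g(y)-g(x)-\langle s,y-x\rangle|\le C_g\|y-x\|^2$. This gives exactly $C_g\mu\E\|\xi\|^3\le3^{3/4}C_g\mu d^{3/2}$. It is a better mechanism than your pointwise target, which fails even for smooth $g$ once $\|y-x\|\ll\mu$. But such a model at every $x$ makes $g$ differentiable everywhere, i.e.\ it excludes precisely the kinks of interest.

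Your fallback does not rescue the intended use either. The $\varepsilon$-subdifferential statement is true: for convex $g$, $g\le g_0\le g+K\mu\E\|\xi\|$ gives $\nabla g_0(x)\in\partial_\varepsilon g(x)$ with $\varepsilon=K\mu\sqrt d$, which Br{\o}ndsted--Rockafellar then localizes. But it is not what Assumption~\ref{ass:main}(A) and the later analysis use. There, $\omega_1=\|\nabla(U-U_0)\|_\infty=\operatorname{ess\,sup}_x\|\nabla g(x)-\nabla g_0(x)\|$ compares the two gradients at the same point, which no enlarged subdifferential controls.

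Mere finiteness of $\omega_1$ needs no lemma at all: $\omega_1\le2K$, since $\|\nabla g_0\|\le K$. The smallness actually required by the standing condition \eqref{eq:standing} (cf.\ \eqref{eq:cFd}) is unattainable by Gaussian smoothing of a penalty with a kink. Since $\nabla g_0$ is continuous, $\omega_1$ is at least half the jump of $\nabla g$ across the kink, for every $\mu$.

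The correct outcome of your analysis is therefore a disproof of the quoted lemma and a flag on the remark built on it. It is not a weaker substitute that serves the same purpose.
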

\begin{proposition}[\cite{GNZZ2025}]\label{prop-when-assump-hold}
Suppose Assumptions~\ref{assump:Lip:g} and \ref{assump:smooth:f} hold. Let $U(x) = f(x) + m_0\|x\|^2 + g(x)$ and assume $U_0= f(x) + m_0\|x\|^2 + g_0(x)$ is $c_0$-strongly convex and $L_0$-smooth with $c_0 = \Theta(1)$ as $\mu \to 0$.
Assume also
$$ \sup_{x\in\mathbb{R}^{d}} \left\|\nabla U_0(x)\left(e^{g(x)-g_0(x)}-1\right)\right\| = o(\mu),$$
$$ \sup_{x\in\mathbb{R}^{d}} \left\|U_0(x) \cdot e^{g(x)-g_0(x)}\cdot (\partial g(x)-\nabla g_0(x))\right\| = o(\mu),$$
as $\mu\rightarrow 0$.
\end{proposition}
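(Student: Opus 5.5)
The statement as worded lists hypotheses but does not spell out a conclusion. I read it, as the surrounding text indicates, as saying that under these hypotheses the pair $(U,U_0)$ satisfies the perturbation assumption of \cite[Assumption 12]{GNZZ2025} for all sufficiently small $\mu$, with a constant $\varsigma=\varsigma(\mu)\to0$ (so $\varsigma<c_0$ eventually). Concretely, the ALD drift $b(x):=-\nabla U_0(x)a(x)$ and diffusion $\sigma(x):=\sqrt2\,a(x)^{1/2}$, with $a:=e^{U-U_0}=e^{g-g_0}$, should satisfy the one-sided contraction
\[
\langle b(x)-b(y),x-y\rangle+\tfrac12\|\sigma(x)-\sigma(y)\|^2\le-(c_0-\varsigma)\|x-y\|^2 .
\]
This is the form that yields the rate $e^{-(m-\varsigma)k\eta}$ quoted in Remark~\ref{rmk:comparison}. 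The plan is to derive this inequality from strong convexity of $U_0$ plus the two $o(\mu)$ hypotheses, treating them as the quantities that control the anchoring perturbation.

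\textbf{Step 1 (split the drift).} I would write $b(x)=-\nabla U_0(x)-r(x)$ with $r(x):=\nabla U_0(x)\bigl(a(x)-1\bigr)$. The unperturbed part gives $-\langle\nabla U_0(x)-\nabla U_0(y),x-y\rangle\le-c_0\|x-y\|^2$ by $c_0$-strong convexity of $U_0$, and this fixes the main rate.

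\textbf{Step 2 (Lipschitz bound for $r$).} I would apply the Clarke mean value theorem to $r$ along the segment $[y,x]$. This is legitimate because $g$ is Lipschitz by Assumption~\ref{assump:Lip:g}, so $a$ is locally Lipschitz with $\partial a=a\,(\partial g-\nabla g_0)$. A generalized Jacobian element of $r$ has the form
\[
\nabla^2U_0\,(a-1)+a\,\nabla U_0\otimes(\zeta-\nabla g_0),\qquad \zeta\in\partial g .
\]
The first piece is bounded in operator norm by $L_0\sup|a-1|$. The hypothesis $\sup\|\nabla U_0(a-1)\|=o(\mu)$ together with Lemma~\ref{lem:CP:bound} ($|a-1|\le e^{K\mu\sqrt d}-1$) makes it $o(1)$. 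The second piece is exactly what the hypothesis $\sup\|U_0\,a\,(\partial g-\nabla g_0)\|=o(\mu)$ is designed to control. Here I would use Lemma~\ref{ineq-small-grad-error} to pass between the Hausdorff distance of $\nabla g_0$ to $\partial g$ and the specific selection $\zeta$. Combining the two pieces gives $\langle r(x)-r(y),x-y\rangle\ge-\varsigma_1(\mu)\|x-y\|^2$ with $\varsigma_1=o(1)$.

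\textbf{Step 3 (diffusion term).} Since $\sigma=\sqrt2\,a^{1/2}$, the same Clarke mean value argument gives $\|\sigma(x)-\sigma(y)\|\le\sup\bigl(\tfrac{\sqrt2}{2}a^{1/2}\,\|\partial g-\nabla g_0\|\bigr)\,\|x-y\|$. By Lemma~\ref{ineq-small-grad-error} this is $O(\mu d^{3/2})$, so $\tfrac12\|\sigma(x)-\sigma(y)\|^2\le\varsigma_2(\mu)\|x-y\|^2$ with $\varsigma_2\to0$. Setting $\varsigma:=\varsigma_1+\varsigma_2$ then gives the claim, together with $\varsigma<c_0=\Theta(1)$ for small $\mu$.

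\textbf{Main obstacle.} I expect the cross term $a\,\nabla U_0\otimes(\zeta-\nabla g_0)$ in Step 2 to be the main obstacle. The stated hypothesis multiplies $\partial g-\nabla g_0$ by $U_0$, not by $\|\nabla U_0\|$. I would first try to dominate $\|\nabla U_0\|$ by a multiple of $1+|U_0|$ using $L_0$-smoothness ($\|\nabla U_0\|^2\le2L_0(U_0-U_0^\ast)$) and handle the bounded region separately. If that fails, I would read the hypothesis as the intended $\nabla U_0$-weighted version. A second, technical point is justifying the nonsmooth chain rule $\partial a=a(\partial g-\nabla g_0)$, which is where weak convexity in Assumption~\ref{assump:Lip:g} (Clarke regularity) is used.
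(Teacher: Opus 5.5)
The paper gives no proof of this proposition. It is quoted from \cite{GNZZ2025}, and the version reproduced here has lost its conclusion. So there is no argument in the paper to compare yours with, and the one-sided contraction you set out to prove is your own reconstruction of the missing claim. Taken on its own terms, Step~2 has a genuine gap. The claim $L_0\sup|a-1|=o(1)$ does not follow from what you cite. Lemma~\ref{lem:CP:bound} gives only $\sup|a-1|=O(\mu\sqrt d)$. The first hypothesis bounds $r=\nabla U_0(a-1)$ itself and says nothing about its Jacobian: a sup bound on $r$ only gives $\langle r(x)-r(y),x-y\rangle\ge-2\sup\|r\|\,\|x-y\|$, which is linear, not quadratic, in $\|x-y\|$. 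The hypotheses fix $c_0=\Theta(1)$ but put no bound on $L_0$, and under Gaussian smoothing of a kink $L_0$ grows like $1/\mu$. For the smoothed Lasso, $\nabla^2 g_0(0)=\frac{2\lambda}{\mu\sqrt{2\pi}}\Id$ and $1-a(0)\approx d\lambda\mu\sqrt{2/\pi}$. So the piece $(a-1)\nabla^2U_0$ has operator norm about $2d\lambda^2/\pi$ at $x=0$ for all small $\mu$; it is the splitting, not a loose estimate, that fails.

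The remedy is not to split $a\nabla^2U_0$ at all. Since $a>0$ is a scalar, $a\nabla^2U_0\succeq a_{\min}c_0\Id$. Hence the symmetric part of the Jacobian of $a\nabla U_0$ is bounded below by $\bigl(a_{\min}c_0-\sup a\,\|\nabla U_0\|\,\|\zeta-\nabla g_0\|\bigr)\Id$. Only the rank-one cross term then has to be small, and the first hypothesis is not needed for the rate.

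The second gap is that Step~3, and your fallback for the cross term where $|U_0|$ is small, rely on Lemma~\ref{ineq-small-grad-error} as a uniform bound on $\|\zeta-\nabla g_0\|$. No such uniform bound can tend to zero for a genuinely non-differentiable $g$, because $\nabla g_0$ is continuous while $\partial g$ jumps. For $g(t)=\lambda|t|$ and $t=\mu$, one has $\partial g(t)=\{\lambda\}$ and $g_0'(t)=\lambda(2\Phi(1)-1)$. The gap is $2\lambda\Phi(-1)\approx0.32\lambda$ for every $\mu$, which is why the paper itself records $\omega_1(d)=\lambda\sqrt d$ in \eqref{eq:omega0}. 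So $\varsigma_2\to0$ cannot be taken from that lemma. The smallness must come from the second hypothesis, whose weight $U_0$ gives no information near $\{U_0=0\}$.

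Your argument closes only if you add, for instance, $\inf U_0\ge u>0$ uniformly in $\mu$. Then the hypothesis gives $\sup a\|\partial g-\nabla g_0\|\le o(\mu)/u$ for the diffusion term. For the cross term, $\|\nabla U_0\|^2\le2L_0(U_0-U_0^\ast)\le2L_0U_0$ yields $\sup\|\nabla U_0\|\,a\,\|\partial g-\nabla g_0\|\le\sqrt{2L_0/u}\;o(\mu)$. This is $o(1)$ only under a growth condition such as $L_0=O(\mu^{-2})$. Both are extra assumptions you would have to state explicitly.
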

\begin{remark}[Consistency with Assumption~\ref{ass:main}]
Lemma~\ref{lem:CP:bound} gives $\omega_0\le K\mu\sqrt d$, and
Lemma~\ref{ineq-small-grad-error} gives
$\omega_1\le3^{3/4}C_g\mu d^{3/2}$ in the sense of the Clarke
subdifferential, so that Assumption~\ref{ass:main}(A) holds with a mismatch
that can be made arbitrarily small by decreasing $\mu$. In particular, the
dimension-dependent smallness requirement $\omega_1=\mathcal O(d^{-1/2})$ of
\eqref{eq:standing} is met as soon as $\mu=\mathcal O(d^{-2})$.
\end{remark}

\section{Closed-Form Gaussian Smoothing}\label{app:closed:form}

\begin{lemma}[Closed-form Gaussian smoothing of piecewise-quadratic penalties]\label{lem:closedform}
Let $p:\mathbb{R}\rightarrow\mathbb{R}$ be continuous and suppose that there
are breakpoints
$-\infty=t_{0}<t_{1}<\cdots<t_{n}<t_{n+1}=+\infty$ and coefficients
$c_{0,j},c_{1,j},c_{2,j}$ such that
\[
  p(t)=c_{0,j}+c_{1,j}t+c_{2,j}t^{2},
  \qquad t\in(t_{j},t_{j+1}),\quad j=0,1,\ldots,n .
\]
Fix $\mu>0$ and $x\in\mathbb{R}$, and set $u_{j}:=(t_{j}-x)/\mu$ \textup{(}so
$u_{0}=-\infty$ and $u_{n+1}=+\infty$\textup{)} and
\[
  P_{j}:=\Phi(u_{j+1})-\Phi(u_{j}),
  \quad
  M_{j}:=\phi(u_{j})-\phi(u_{j+1}),
  \quad
  Q_{j}:=P_{j}+u_{j}\phi(u_{j})-u_{j+1}\phi(u_{j+1}),
\]
where $\phi(z):=(2\pi)^{-1/2}e^{-z^{2}/2}$ and $\Phi(z):=\int_{-\infty}^{z}\phi(u)\,du$
denote the standard normal density and distribution function,
with the conventions $\Phi(-\infty)=0$, $\Phi(+\infty)=1$,
$\phi(\pm\infty)=0$ and $u\phi(u)\rightarrow0$ as $u\rightarrow\pm\infty$.
Then, with $Z\sim\mathcal N(0,1)$,
\begin{align}
  p_{0}(x)&:=\E\left[p(x+\mu Z)\right]
  \nonumber
\\
&=\sum_{j=0}^{n}\left\{\left(c_{0,j}+c_{1,j}x+c_{2,j}x^{2}\right)P_{j}
  +\mu\left(c_{1,j}+2c_{2,j}x\right)M_{j}+c_{2,j}\mu^{2}Q_{j}\right\},
  \label{eq:cf:value}
\end{align}
and
\begin{align}
  p_{0}'(x)=\E\left[p'(x+\mu Z)\right]
  =\sum_{j=0}^{n}\left\{\left(c_{1,j}+2c_{2,j}x\right)P_{j}
  +2c_{2,j}\mu M_{j}\right\}.
  \label{eq:cf:grad}
\end{align}
Consequently, if $g(x)=\sum_{i=1}^{d}p(x_{i})$, then
$g_{0}(x)=\sum_{i=1}^{d}p_{0}(x_{i})$ and
\[
\nabla g_{0}(x)=\left(p_{0}'(x_{1}),\ldots,p_{0}'(x_{d})\right).
\]
\end{lemma}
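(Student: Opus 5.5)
The plan is to reduce everything to the Gaussian integrals of $1$, $z$ and $z^{2}$ over an interval. Since $p$ is continuous, the breakpoints form a Lebesgue-null set, so I will split the expectation over the events $\{x+\mu Z\in(t_j,t_{j+1})\}$, equivalently $\{Z\in(u_j,u_{j+1})\}$. This gives
\[
p_0(x)=\sum_{j=0}^{n}\E\left[\left(c_{0,j}+c_{1,j}(x+\mu Z)+c_{2,j}(x+\mu Z)^{2}\right)\mathbf 1_{\{u_j<Z<u_{j+1}\}}\right].
\]
Expanding $(x+\mu Z)^2=x^2+2\mu xZ+\mu^2Z^2$ and collecting terms, the only quantities I need are the truncated moments $\E[\mathbf 1]$, $\E[Z\mathbf 1]$ and $\E[Z^2\mathbf 1]$ on $(u_j,u_{j+1})$.

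These three moments follow from $\phi'(z)=-z\phi(z)$. The zeroth moment is $P_j$ by definition. The first moment is $\int_{u_j}^{u_{j+1}}z\phi\,dz=\phi(u_j)-\phi(u_{j+1})=M_j$. For the second, integrating by parts gives $\int z^2\phi=[-z\phi]+\int\phi=u_j\phi(u_j)-u_{j+1}\phi(u_{j+1})+P_j=Q_j$. The stated conventions at $\pm\infty$ handle the unbounded end intervals. Substituting these back, the coefficient of $\mu$ is $(c_{1,j}+2c_{2,j}x)M_j$ and the coefficient of $\mu^2$ is $c_{2,j}Q_j$, which is exactly \eqref{eq:cf:value}.

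For the derivative there are two routes, and I will use the second while keeping the first as a check.
\begin{itemize}
\item \emph{Direct differentiation.} Since $\partial_x u_j=-1/\mu$, one can differentiate \eqref{eq:cf:value} term by term and check that the boundary contributions telescope to zero, using continuity of $p$ at each $t_j$.
\item \emph{Dominated convergence.} The function $p$ is piecewise $C^1$, continuous, and of at most quadratic growth, hence locally Lipschitz with an a.e. derivative of linear growth. Dominated convergence therefore justifies $p_0'(x)=\E[p'(x+\mu Z)]$.
\end{itemize}
With the second route, $p'=c_{1,j}+2c_{2,j}t$ on each piece, and the same moment computation, now only with $\E[\mathbf 1]$ and $\E[Z\mathbf 1]$, yields \eqref{eq:cf:grad}.

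The separable statement is immediate. Because $Z$ has independent coordinates $Z_i$,
\[
\E\Big[\sum_i p(x_i+\mu Z_i)\Big]=\sum_i p_0(x_i),
\]
and differentiating coordinatewise gives $\nabla g_0(x)=(p_0'(x_1),\ldots,p_0'(x_d))$.

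The only delicate point is justifying the interchange of derivative and expectation, which is where the continuity of $p$ is needed. Without continuity, jumps at the breakpoints would produce extra $\phi(u_j)$ terms. This step is handled by the dominated-convergence argument together with the quadratic growth bound.
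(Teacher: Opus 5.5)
Your proposal is correct and follows essentially the same route as the paper. Both split the expectation over $\{u_j<Z<u_{j+1}\}$, compute the truncated moments of $1,Z,Z^{2}$ via $\phi'(z)=-z\phi(z)$ and integration by parts, and justify $p_0'(x)=\E[p'(x+\mu Z)]$ by dominated convergence; your domination via linear growth of the a.e.\ derivative is in fact the correct justification under the stated hypotheses, since the paper's proof appeals to Lipschitz continuity of $p$, which fails when a quadratic piece is unbounded (the paper supplies this same linear-growth bound in a later remark).
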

\begin{proof}
Partition according to $\{x+\mu Z\in(t_{j},t_{j+1})\}=\{Z\in(u_{j},u_{j+1})\}$.
Using $\phi'(z)=-z\phi(z)$ and integration by parts, for $-\infty\le u<v\le+\infty$,
\[
  \E\left[\ones_{\{u<Z<v\}}\right]=\Phi(v)-\Phi(u),
  \qquad
  \E\left[Z\ones_{\{u<Z<v\}}\right]=\phi(u)-\phi(v),
\]
\[
  \E\left[Z^{2}\ones_{\{u<Z<v\}}\right]=\Phi(v)-\Phi(u)+u\phi(u)-v\phi(v).
\]
Expanding $c_{0,j}+c_{1,j}(x+\mu Z)+c_{2,j}(x+\mu Z)^{2}$ and summing over $j$
gives \eqref{eq:cf:value}. Since $p$ is Lipschitz, dominated convergence
justifies differentiating under the expectation, and $p'$ is piecewise linear
with $p'(t)=c_{1,j}+2c_{2,j}t$ on $(t_{j},t_{j+1})$ (the finitely many
breakpoints are $Z$-null), which gives \eqref{eq:cf:grad}.
\end{proof}

\begin{remark}
Specializing Lemma~\ref{lem:closedform} to the three regularizers gives the
following formulas, in which $\lambda>0$ and $a>1$.
\begin{itemize}
\item[(a)] \emph{Lasso regularizer}. Here, $p(t)=\lambda|t|$. With $n=1$, $t_{1}=0$, and
\eqref{eq:cf:value}--\eqref{eq:cf:grad} collapse to
\begin{equation}\label{eq:cf:lasso}
  p_{0}(x)=\lambda\left\{x\left(2\Phi\!\left(\frac{x}{\mu}\right)-1\right)
  +2\mu\,\phi\!\left(\frac{x}{\mu}\right)\right\},
  \qquad
  p_{0}'(x)=\lambda\left(2\Phi\!\left(\frac{x}{\mu}\right)-1\right),
\end{equation}
i.e.\ $p_{0}$ is a Huber-type function:
\[
2\Phi(z)-1=\mathrm{erf}\left(z/\sqrt2\right)
\quad\text{and}
\quad
2\mu\phi(x/\mu)=\mu\sqrt{2/\pi}\,e^{-x^{2}/(2\mu^{2})}.
\]
In particular
$p_{0}''(x)=\frac{2\lambda}{\mu}\phi(x/\mu)$, so the smoothed Lasso penalty is
$\frac{2\lambda}{\mu\sqrt{2\pi}}$-smooth, which quantifies the constant $L$ in
Assumption~\ref{ass:main}(B).
\item[(b)] \emph{Minimax concave penalty (MCP) regularizer}. The breakpoints are
$t_{1}=-a\lambda$, $t_{2}=0$, $t_{3}=a\lambda$ and the coefficients
$(c_{0,j},c_{1,j},c_{2,j})$ are
\[
  \left(\frac{a\lambda^{2}}{2},0,0\right),\quad
  \left(0,-\lambda,-\frac{1}{2a}\right),\quad
  \left(0,\lambda,-\frac{1}{2a}\right),\quad
  \left(\frac{a\lambda^{2}}{2},0,0\right),
\]
so that $p_{0}$ is given by \eqref{eq:cf:value} and, by \eqref{eq:cf:grad},
\begin{equation}\label{eq:cf:mcp}
  p_{0}'(x)=\lambda\left(\Phi(u_{1})+\Phi(u_{3})-2\Phi(u_{2})\right)
  -\frac{x}{a}\left(\Phi(u_{3})-\Phi(u_{1})\right)
  -\frac{\mu}{a}\left(\phi(u_{1})-\phi(u_{3})\right),
\end{equation}
with $u_{1}=\frac{-a\lambda-x}{\mu}$, $u_{2}=\frac{-x}{\mu}$,
$u_{3}=\frac{a\lambda-x}{\mu}$.
\item[(c)] \emph{Smoothly clipped absolute deviation (SCAD) regularizer}. The breakpoints are
$t_{1}=-a\lambda$, $t_{2}=-\lambda$, $t_{3}=0$, $t_{4}=\lambda$,
$t_{5}=a\lambda$, and, writing $c:=\frac{1}{2(a-1)}$, the coefficients
$(c_{0,j},c_{1,j},c_{2,j})$ are
\begin{align*}
  &\left(\frac{\lambda^{2}(a+1)}{2},0,0\right),\quad
  \left(-\lambda^{2}c,-2a\lambda c,-c\right),\quad
  \left(0,-\lambda,0\right),
  \\
  &\left(0,\lambda,0\right),\quad
  \left(-\lambda^{2}c,2a\lambda c,-c\right),\quad
  \left(\frac{\lambda^{2}(a+1)}{2},0,0\right),
\end{align*}
so that $p_{0}$ is given by \eqref{eq:cf:value} and, by \eqref{eq:cf:grad},
\begin{equation}\label{eq:cf:scad}
\begin{aligned}
  p_{0}'(x)
  &=-\frac{a\lambda+x}{a-1}\left(\Phi(u_{2})-\Phi(u_{1})\right)
   -\frac{\mu}{a-1}\left(\phi(u_{1})-\phi(u_{2})\right)
  \\
  &\qquad
   +\lambda\left(\Phi(u_{4})+\Phi(u_{2})-2\Phi(u_{3})\right)
   +\frac{a\lambda-x}{a-1}\left(\Phi(u_{5})-\Phi(u_{4})\right)
   \\
   &\qquad\qquad
   -\frac{\mu}{a-1}\left(\phi(u_{4})-\phi(u_{5})\right),
\end{aligned}
\end{equation}
with $u_{j}=(t_{j}-x)/\mu$, $j=1,\ldots,5$.
\end{itemize}
\end{remark}


Lemma~\ref{lem:closedform} requires only that $p$ be continuous and piecewise
quadratic. It does not need $p$ to be a penalty, even, or nonnegative, and the
case $c_{2,j}\equiv0$ covers piecewise \emph{linear} functions, in particular
activations. Its proof justifies \eqref{eq:cf:grad} through the Lipschitz
continuity of $p$; when a quadratic piece is unbounded, as for the elastic net
below, $p$ is not Lipschitz, but $|p'(t)|\le A+B|t|$ almost everywhere for some
constants $A,B$, so for $|h|\le1$ the difference quotients
$|p(x+h+\mu Z)-p(x+\mu Z)|/|h|$ are bounded by $A+B(|x|+1+\mu|Z|)$, which is
integrable, and dominated convergence gives \eqref{eq:cf:grad} as before. Below
we use the lemma's notation --- the breakpoints $t_{j}$, the
coefficients $(c_{0,j},c_{1,j},c_{2,j})$, $u_{j}=(t_{j}-x)/\mu$, and the
quantities $P_{j}$, $M_{j}$, $Q_{j}$ --- and
\eqref{eq:cf:value} (the value $p_{0}$) and \eqref{eq:cf:grad} (the
derivative $p_{0}'$).

\begin{corollary}[activations]\label{cor:relu}
For $p(t)=\max(t,0)$, taking $t_{1}=0$ with coefficients $(0,0,0)$ and
$(0,1,0)$ in \eqref{eq:cf:value}--\eqref{eq:cf:grad} gives
\begin{equation}\label{eq:relu}
  p_{0}(x)=x\,\Phi\!\left(\frac{x}{\mu}\right)
          +\mu\,\phi\!\left(\frac{x}{\mu}\right),
  \qquad
  p_{0}'(x)=\Phi\!\left(\frac{x}{\mu}\right).
\end{equation}
For the leaky variant, $p(t)=t$ on $t>0$ and $\alpha t$ on $t<0$,
\begin{equation}\label{eq:leaky}
  p_{0}(x)=\alpha x\,\Phi\!\left(-\frac{x}{\mu}\right)
          +x\,\Phi\!\left(\frac{x}{\mu}\right)
          +\mu(1-\alpha)\,\phi\!\left(\frac{x}{\mu}\right),
  \quad
  p_{0}'(x)=\alpha\Phi\!\left(-\frac{x}{\mu}\right)
           +\Phi\!\left(\frac{x}{\mu}\right),
\end{equation}
which returns \eqref{eq:relu} at $\alpha=0$ and the identity at $\alpha=1$.
\end{corollary}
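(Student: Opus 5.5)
This is a direct specialization of Lemma~\ref{lem:closedform}, so the plan is only to identify the piecewise-linear data and simplify \eqref{eq:cf:value}--\eqref{eq:cf:grad}. For the ReLU $p(t)=\max(t,0)$ the function is continuous with the single breakpoint $t_1=0$ ($n=1$). The coefficients are $(c_{0,0},c_{1,0},c_{2,0})=(0,0,0)$ on $(-\infty,0)$ and $(c_{0,1},c_{1,1},c_{2,1})=(0,1,0)$ on $(0,\infty)$. Since every $c_{2,j}=0$, the $Q_j$ terms drop out and $p$ is $1$-Lipschitz, so the lemma applies verbatim, including its justification of differentiation under the expectation.

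With $u_0=-\infty$, $u_1=-x/\mu$ and $u_2=+\infty$, only the $j=1$ summand survives. It gives
\[
p_0(x)=x P_1+\mu M_1,\qquad P_1=1-\Phi(-x/\mu)=\Phi(x/\mu),\qquad M_1=\phi(-x/\mu)-0=\phi(x/\mu),
\]
using the symmetry of $\phi$ and $\Phi(-z)=1-\Phi(z)$. This is the first identity in \eqref{eq:relu}. Likewise \eqref{eq:cf:grad} gives $p_0'(x)=c_{1,1}P_1=\Phi(x/\mu)$.

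For the leaky variant, the coefficient set on $(-\infty,0)$ becomes $(0,\alpha,0)$ and the $j=1$ set stays $(0,1,0)$. The $j=0$ summand contributes $\alpha x P_0+\alpha\mu M_0$, where $P_0=\Phi(-x/\mu)$ and $M_0=0-\phi(-x/\mu)=-\phi(x/\mu)$. Adding the $j=1$ summand computed above gives
\[
p_0(x)=\alpha x\,\Phi(-x/\mu)+x\,\Phi(x/\mu)+\mu(1-\alpha)\phi(x/\mu).
\]
For the derivative, \eqref{eq:cf:grad} gives $p_0'(x)=\alpha P_0+P_1$, which is \eqref{eq:leaky}.

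Two consistency checks finish the proof. At $\alpha=0$ the leaky formulas reduce to \eqref{eq:relu}. At $\alpha=1$ we have $\Phi(-z)+\Phi(z)=1$, so $p_0(x)=x$ and $p_0'(x)=1$, the identity, as claimed. There is no real obstacle here. The only point needing care is the sign convention $M_j=\phi(u_j)-\phi(u_{j+1})$ with $\phi(\pm\infty)=0$, which is what produces the factor $(1-\alpha)$ in front of $\mu\phi(x/\mu)$.
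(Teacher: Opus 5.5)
Your proposal is correct and follows essentially the same route as the paper's proof. Both identify the single breakpoint $t_1=0$ and the coefficient triples, compute $P_0,P_1,M_0,M_1$ using $\Phi(-z)=1-\Phi(z)$ and $\phi(-z)=\phi(z)$, substitute into \eqref{eq:cf:value}--\eqref{eq:cf:grad}, and then check the cases $\alpha=0$ and $\alpha=1$.
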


\begin{proof}
$\max(t,0)$ is continuous and piecewise linear: with the single breakpoint
$t_{1}=0$ it equals $0$ on $(-\infty,0)$ and $t$ on $(0,\infty)$, i.e.\
coefficients $(c_{0,j},c_{1,j},c_{2,j})=(0,0,0)$ and $(0,1,0)$, so
Lemma~\ref{lem:closedform} applies with $c_{2,j}\equiv0$. In its notation
$u_{1}=-x/\mu$, and using $\Phi(-z)=1-\Phi(z)$ and $\phi(-z)=\phi(z)$,
\[
  P_{0}=\Phi(-x/\mu),\quad
  P_{1}=1-\Phi(-x/\mu)=\Phi(x/\mu),\quad
  M_{0}=-\phi(x/\mu),\quad
  M_{1}=\phi(x/\mu).
\]
Then \eqref{eq:cf:value} reduces to
$p_{0}(x)=x\,P_{1}+\mu\,M_{1}=x\,\Phi(x/\mu)+\mu\,\phi(x/\mu)$ and
\eqref{eq:cf:grad} to $p_{0}'(x)=P_{1}=\Phi(x/\mu)$, which is
\eqref{eq:relu}.
For the leaky variant the pieces are $\alpha t$ and $t$, i.e.\ coefficients
$(0,\alpha,0)$ and $(0,1,0)$, we have
\[
  p_{0}(x)=\alpha x P_{0}+\mu\alpha M_{0}+xP_{1}+\mu M_{1}
  =\alpha x\,\Phi(-x/\mu)+x\,\Phi(x/\mu)+\mu(1-\alpha)\,\phi(x/\mu),
\]
$p_{0}'(x)=\alpha P_{0}+P_{1}$, which is \eqref{eq:leaky}. At $\alpha=1$,
$\Phi(-z)+\Phi(z)=1$ collapses these to $p_{0}(x)=x$, $p_{0}'\equiv1$.
This completes the proof.
\end{proof}

Since $p_{0}''(x)=\phi(x/\mu)/\mu$, the smoothed ReLU is
$1/(\mu\sqrt{2\pi})$-smooth, and the smoothed leaky ReLU, with
$p_{0}''(x)=(1-\alpha)\phi(x/\mu)/\mu$, is $|1-\alpha|/(\mu\sqrt{2\pi})$-smooth.
These are the analogues of the constant $2\lambda/(\mu\sqrt{2\pi})$ of the
smoothed Lasso, which enters $L$ in Assumption~\ref{ass:main}(B) directly; in
the network of Section~\ref{sec:nn} they enter $L$ through the chain rule,
multiplied by the weights and the data.

\begin{corollary}[further penalties]\label{cor:more}
The elastic net $p(t)=\lambda_{1}|t|+\lambda_{2}t^{2}$ has coefficients
$(0,-\lambda_{1},\lambda_{2})$, $(0,\lambda_{1},\lambda_{2})$ about $t_{1}=0$,
giving the smoothed Lasso plus $\lambda_{2}(x^{2}+\mu^{2})$. The capped
$\ell_{1}$ penalty $p(t)=\min(\lambda|t|,\lambda\theta)$ has breakpoints
$-\theta,0,\theta$ with coefficients $(\lambda\theta,0,0)$, $(0,-\lambda,0)$,
$(0,\lambda,0)$, $(\lambda\theta,0,0)$. Being bounded, its per-coordinate
mismatch is at most $\lambda\min\{\theta,\mu\sqrt{2/\pi}\}$: unlike the
Lasso's $\lambda\mu\sqrt{2/\pi}$, it saturates in the bandwidth $\mu$, so
$\theta$ varies the mismatch without varying the scale of the penalty.
\end{corollary}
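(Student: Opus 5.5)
The plan is to apply Lemma~\ref{lem:closedform} directly to each piecewise-quadratic penalty and simplify using the standard normal symmetries $\Phi(-z)=1-\Phi(z)$ and $\phi(-z)=\phi(z)$. For the mismatch bound of the capped $\ell_1$ penalty, I will use a pointwise comparison instead of the closed form.

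\textbf{Elastic net.} Take the single breakpoint $t_1=0$, so $u_1=-x/\mu$. The coefficients are $(0,-\lambda_1,\lambda_2)$ on $(-\infty,0)$ and $(0,\lambda_1,\lambda_2)$ on $(0,\infty)$. Linearity of \eqref{eq:cf:value} in the coefficient vector splits $p_0$ into the $\lambda_1$-part and the $\lambda_2$-part.

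The $\lambda_1$-part is exactly the Lasso case \eqref{eq:cf:lasso}.

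For the $\lambda_2$-part, the quadratic coefficient is the same on both pieces, so the sums telescope:
\[
\sum_j P_j=1,\qquad \sum_j M_j=\phi(-\infty)-\phi(+\infty)=0,\qquad \sum_j Q_j=1,
\]
where the last uses that the interior terms $u_1\phi(u_1)$ cancel. The $\lambda_2$-part therefore contributes $\lambda_2x^2+\lambda_2\mu^2$, which is just $\E[(x+\mu Z)^2]$. The gradient formula follows the same way from \eqref{eq:cf:grad}. This case is not Lipschitz, so differentiation under the expectation is justified by the linear-growth domination argument given after Lemma~\ref{lem:closedform}.

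\textbf{Capped $\ell_1$.} First I will check that $\min(\lambda|t|,\lambda\theta)$ equals $\lambda\theta$ on $(-\infty,-\theta)$, $-\lambda t$ on $(-\theta,0)$, $\lambda t$ on $(0,\theta)$, and $\lambda\theta$ on $(\theta,\infty)$. This gives the stated breakpoints and coefficients, and continuity at the breakpoints is immediate.

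For the mismatch, write $g_0(x)-g(x)=\E[p(x+\mu Z)-p(x)]$ coordinatewise and prove two bounds.
\begin{itemize}
\item[(i)] Since $p$ is $\lambda$-Lipschitz, $|p(x+\mu Z)-p(x)|\le\lambda\mu|Z|$, and taking expectations gives at most $\lambda\mu\E|Z|=\lambda\mu\sqrt{2/\pi}$.
\item[(ii)] Since $0\le p\le\lambda\theta$, every difference of two values of $p$ lies in $[-\lambda\theta,\lambda\theta]$, so the mismatch is at most $\lambda\theta$.
\end{itemize}
Taking the minimum of (i) and (ii) gives the stated per-coordinate bound $\lambda\min\{\theta,\mu\sqrt{2/\pi}\}$. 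For comparison, the Lasso value $\lambda\mu\sqrt{2/\pi}$ is exactly $p_0(0)-p(0)=2\lambda\mu\phi(0)$ from \eqref{eq:cf:lasso}, and it is the supremum of the mismatch.

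The main obstacle is purely bookkeeping: checking that the telescoping of the $Q_j$ is legitimate, using the convention $u\phi(u)\to0$ at $\pm\infty$, and confirming the piece signs for the capped penalty. The bounded-range argument in (ii) is what produces the saturation in $\mu$ claimed in the statement.
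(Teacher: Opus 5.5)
Your proposal is correct and follows essentially the paper's argument: linearity splits the elastic net into the Lasso case plus the quadratic term, and the capped-$\ell_1$ bound comes from $p$ being $\lambda$-Lipschitz and bounded by $\lambda\theta$. The differences are cosmetic. You obtain $\lambda_2(x^2+\mu^2)$ by telescoping $P_j,M_j,Q_j$, whereas the paper computes $\E[(x+\mu Z)^2]$ directly. You bound the Lipschitz and boundedness cases separately and take the minimum, whereas the paper takes the minimum inside the expectation, $\E[\min(\mu|Z|,\theta)]\le\min\{\mu\E|Z|,\theta\}$; both give the same constant.
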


\begin{proof}
Gaussian smoothing is linear in $p$, so the elastic net splits into its two
terms. The $\lambda_{1}|t|$ part is the Lasso case \eqref{eq:cf:lasso} of
the Remark following Lemma~\ref{lem:closedform}; equivalently, the stated
coefficients are those of $|t|$ scaled by $\lambda_{1}$ with $c_{2,j}=
\lambda_{2}$ added to both pieces. For the quadratic term,
$\E\left[(x+\mu Z)^{2}\right]=x^{2}+\mu^{2}$ and
$\E\left[2(x+\mu Z)\right]=2x$ directly, so that
\[
p_{0}(x)=\lambda_{1}\left\{x(2\Phi(x/\mu)-1)+2\mu\phi(x/\mu)\right\}
+\lambda_{2}\left(x^{2}+\mu^{2}\right),
\]
and
\[
p_{0}'(x)=\lambda_{1}\left(2\Phi(x/\mu)-1\right)+2\lambda_{2}x.
\]

For the capped $\ell_{1}$, the four pieces
$\lambda\theta,\;-\lambda t,\;\lambda t,\;\lambda\theta$ on
$(-\infty,-\theta)$, $(-\theta,0)$, $(0,\theta)$, $(\theta,\infty)$ agree with
$\min(\lambda|t|,\lambda\theta)$ on each interval and match at the
breakpoints ($\lambda\theta$ at $t=\pm\theta$, $0$ at $t=0$), so $p$ is
continuous and piecewise linear with the stated coefficient triples, and
Lemma~\ref{lem:closedform} gives $p_{0}$ and $p_{0}'$ by substituting them
into \eqref{eq:cf:value}--\eqref{eq:cf:grad} with $u_{j}=(t_{j}-x)/\mu$,
$t_{j}\in\{-\theta,0,\theta\}$. For the mismatch claim, write
$p=\lambda q$ with $q(t)=\min(|t|,\theta)$; $q$ is $1$-Lipschitz and bounded
by $\theta$, so $|q(t+s)-q(t)|\le\min(|s|,\theta)$ for all $t,s$, and
\begin{align*}
  |q_{0}(t)-q(t)|
  &\le\E\left[\,|q(t+\mu Z)-q(t)|\,\right]
  \\
  &\le\E\left[\min(\mu|Z|,\theta)\right]
  \\
  &\le\min\left\{\mu\,\E|Z|,\;\theta\right\}
  =\min\left\{\mu\sqrt{2/\pi},\;\theta\right\},
\end{align*}
and hence $\sup_{t}|p(t)-p_{0}(t)|\le\lambda\min\{\theta,\mu\sqrt{2/\pi}\}$.
For the Lasso this supremum is attained at $t=0$ where
$p_{0}(0)=\lambda\mu\,\E|Z|=\lambda\mu\sqrt{2/\pi}$, which grows linearly in
$\mu$; the capped penalty's mismatch stays below $\lambda\theta$, which it
approaches as $\mu$ grows.
This completes the proof.
\end{proof}

\begin{lemma}[group penalties]\label{lem:group}
For $g(x)=\lambda\|x\|_{2}$ on a group of size $k$, which for $k\ge2$ is not
piecewise quadratic, the smoothing is the mean of a noncentral $\chi_{k}$:
\begin{equation}\label{eq:group}
  \lambda\,\E\|x+\mu Z\|_{2}
  =\lambda\mu\sqrt2\,\frac{\Gamma(\frac{k+1}{2})}{\Gamma(\frac{k}{2})}\,
   {}_{1}F_{1}\!\left(-\frac12;\frac{k}{2};
   -\frac{\|x\|^{2}}{2\mu^{2}}\right),
   \qquad Z\sim\mathcal N(0,I_{k}),
\end{equation}
with radial gradient obtained from
$\frac{d}{dz}{}_{1}F_{1}(a;b;z)=\frac{a}{b}{}_{1}F_{1}(a{+}1;b{+}1;z)$.
\end{lemma}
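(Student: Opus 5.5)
The plan is to reduce the $k$-dimensional expectation to a one-dimensional radial integral and recognize the classical moment formula for the noncentral chi distribution. Write $x+\mu Z=\mu(Z+x/\mu)$, so that $\E\|x+\mu Z\|_2=\mu\,\E\|Z+y\|_2$ with $y:=x/\mu$. The random variable $R:=\|Z+y\|_2$ is noncentral $\chi_k$ with noncentrality $\|y\|$. By rotation invariance of $Z\sim\mathcal N(0,I_k)$, its law depends on $y$ only through $\|y\|$, so we may take $y=\|y\|e_1$. The claim then reduces to
\[
\E R=\sqrt2\,\frac{\Gamma(\frac{k+1}{2})}{\Gamma(\frac{k}{2})}\,{}_1F_1\!\left(-\tfrac12;\tfrac k2;-\tfrac{\|y\|^2}{2}\right).
\]
Multiplying by $\lambda\mu$ and substituting $\|y\|^2=\|x\|^2/\mu^2$ gives \eqref{eq:group}.

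To compute $\E R$, I would use the Poisson-mixture representation of $R^2$. It is a noncentral $\chi^2_k$ with parameter $\nu:=\|y\|^2$, that is, $R^2\overset{d}{=}\chi^2_{k+2J}$ with $J\sim\mathrm{Poisson}(\nu/2)$. This follows from expanding the density of $Z_1+\|y\|$ and integrating out the remaining coordinates, or equivalently by comparing moment generating functions. For a central $\chi^2_n$ we have $\E\sqrt{\chi^2_n}=\sqrt2\,\Gamma(\frac{n+1}{2})/\Gamma(\frac n2)$, as in \eqref{eq:chimoment}. Hence
\[
\E R=\sqrt2\,e^{-\nu/2}\sum_{j\ge0}\frac{(\nu/2)^j}{j!}\,\frac{\Gamma(\frac{k+1}{2}+j)}{\Gamma(\frac k2+j)}.
\]
Rewriting $\Gamma(c+j)=\Gamma(c)(c)_j$ turns the sum into $\sqrt2\frac{\Gamma(\frac{k+1}{2})}{\Gamma(\frac k2)}e^{-\nu/2}{}_1F_1(\frac{k+1}{2};\frac k2;\frac\nu2)$. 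Kummer's transformation ${}_1F_1(a;b;z)=e^{z}{}_1F_1(b-a;b;-z)$, applied with $a=\frac{k+1}2$ and $b=\frac k2$ so that $b-a=-\frac12$, gives exactly ${}_1F_1(-\frac12;\frac k2;-\frac\nu2)$. Since every term is nonnegative, Tonelli justifies exchanging the expectation with the Poisson sum.

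For the gradient, the smoothed function is $G(x)=\lambda\mu c_k\,{}_1F_1(-\frac12;\frac k2;-\|x\|^2/(2\mu^2))$ with $c_k$ as in the statement. It is radial and smooth because ${}_1F_1$ is entire. The chain rule with $\nabla(-\|x\|^2/(2\mu^2))=-x/\mu^2$ and the stated derivative identity gives
\[
\nabla G(x)=\frac{\lambda c_k}{\mu k}\,{}_1F_1\!\left(\tfrac12;\tfrac k2+1;-\tfrac{\|x\|^2}{2\mu^2}\right)x.
\]
The identity itself follows by termwise differentiation of the series, using $(a)_{n+1}=a(a+1)_n$. As a sanity check, at $x=0$ the formula recovers $\lambda\mu c_k=\lambda\mu\E\|Z\|$, which is consistent with the expression for $\omega_0$ in Section~\ref{sec:group}.

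The main obstacle is the Poisson-mixture step, which requires either citing the standard noncentral $\chi^2$ representation or verifying it cleanly. If that is awkward, the fallback is to write $\E R$ directly in polar coordinates: $\E R=(2\pi)^{-k/2}e^{-\nu/2}\int r^{k}e^{-r^2/2}\int_{S^{k-1}}e^{r\|y\|\theta_1}\,d\sigma\,dr$. The spherical integral is expanded via the Bessel series $\sum_j (r\|y\|/2)^{2j}/(j!\,\Gamma(\frac k2+j))$ up to constants. Integrating term by term in $r$ then produces the same Gamma ratios, followed by the same Kummer step.
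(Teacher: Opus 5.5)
Your proof is correct. It shares the paper's outer structure: scaling to $\E\|Z+\delta e_1\|$, identifying $e^{-\delta^2/2}\,{}_1F_1(\frac{k+1}{2};\frac k2;\frac{\delta^2}{2})$, applying Kummer's transformation, and the same chain-rule gradient. You obtain the series differently, though. The paper builds it from elementary pieces: the exponential tilt $\E\|Z+\delta e_1\|=e^{-\delta^2/2}\E[\|Z\|e^{\delta Z_1}]$, the decomposition $Z=RU$ with $\E U_1^{2j}=(\frac12)_j/(\frac k2)_j$, and the $\chi_k$ moments $\E R^{2j+1}$. You instead invoke the Poisson-mixture representation of the noncentral $\chi^2_k$ and average the central means $\E\chi_{k+2j}$. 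The two series agree term by term: the paper's $j$-th term $e^{-\delta^2/2}\frac{(\delta^2/4)^j}{j!\,(\frac k2)_j}\E R^{2j+1}$ equals $\mathbb P(J=j)\,\E\chi_{k+2j}$ with $J\sim\mathrm{Poisson}(\delta^2/2)$. In effect, the paper's computation derives the mixture identity you cite. Your route is shorter and matches the lemma's own description as the mean of a noncentral $\chi_k$. Its only cost is importing that identity, and the moment-generating-function check you mention closes it in two lines: with $\nu=\|y\|^2$, both sides give $(1-2t)^{-k/2}e^{\nu t/(1-2t)}$ for $t<\frac12$. The paper's route needs nothing beyond a Gaussian change of variables and sphere and radial moments. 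Your polar-coordinate fallback with the Bessel series is essentially that argument.
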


\begin{proof}
Since $\|x+\mu Z\|_{2}=\mu\|m+Z\|_{2}$ with $m:=x/\mu$, it suffices to show
that for $Z\sim\mathcal N(0,I_{k})$ and $\delta:=\|m\|_{2}$,
\begin{equation}\label{eq:groupclaim}
  \E\left[\left\|m+Z\right\|_{2}\right]
  =\sqrt2\,\frac{\Gamma\!\left(\frac{k+1}{2}\right)}
                {\Gamma\!\left(\frac{k}{2}\right)}\,
   {}_{1}F_{1}\!\left(-\frac12;\frac{k}{2};-\frac{\delta^{2}}{2}\right);
\end{equation}
the lemma follows by multiplying by $\lambda\mu$ and setting
$\delta=\|x\|/\mu$.

\textbf{Step 1 (reduction and change of variables).} The law of $\|m+Z\|$
depends on $m$ only through $\delta$, by rotational invariance of $Z$, so
take $m=\delta e_{1}$. Substituting $w=z+\delta e_{1}$ in the Gaussian
integral and using
$e^{-\|w-\delta e_{1}\|^{2}/2}=e^{-\delta^{2}/2}e^{-\|w\|^{2}/2}
e^{\delta w_{1}}$,
\begin{equation}\label{eq:tilt}
  \E\left[\|Z+\delta e_{1}\|\right]=e^{-\delta^{2}/2}\,
  \E\left[\|Z\|\,e^{\delta Z_{1}}\right].
\end{equation}

\textbf{Step 2 (conditioning on the radius).} Write $Z=RU$ with $R=\|Z\|$ a
$\chi_{k}$ variable independent of $U$ uniform on the unit sphere. The odd
moments of $U_{1}$ vanish by symmetry, and the even ones are
\[
  \E\left[U_{1}^{2j}\right]
  =\frac{(2j-1)!!}{k(k+2)\cdots(k+2j-2)}
  =\frac{(\frac12)_{j}}{(\frac{k}{2})_{j}},
\]
using $(2j-1)!!=2^{j}(\frac12)_{j}$ and
$k(k+2)\cdots(k+2j-2)=2^{j}(\frac{k}{2})_{j}$ for the Pochhammer symbols
$(a)_{j}=a(a+1)\cdots(a+j-1)$. Hence, expanding the exponential and
exchanging expectation and sum, which is justified because $|U_{1}|\le1$ makes
the series dominated by $e^{\delta r}$, and using $(2j)!=4^{j}j!\,(\frac12)_{j}$,
\[
  \E\left[e^{\delta r U_{1}}\right]
  =\sum_{j\ge0}\frac{(\delta r)^{2j}}{(2j)!}\,
   \frac{(\frac12)_{j}}{(\frac{k}{2})_{j}}
  =\sum_{j\ge0}\frac{(\delta^{2}r^{2}/4)^{j}}{j!\,(\frac{k}{2})_{j}}.
\]

\textbf{Step 3 (radial moments).} All terms are nonnegative, so Tonelli's
theorem permits the interchange
\[
  \E\left[\|Z\|\,e^{\delta Z_{1}}\right]
  =\sum_{j\ge0}\frac{(\delta^{2}/4)^{j}}{j!\,(\frac{k}{2})_{j}}\,
   \E\left[R^{2j+1}\right],
  \qquad
  \E\left[R^{2j+1}\right]
  =2^{\,j+\frac12}\,
   \frac{\Gamma\!\left(\frac{k+1}{2}+j\right)}
        {\Gamma\!\left(\frac{k}{2}\right)},
\]
the latter being the standard $\chi_{k}$ moment
$\E R^{s}=2^{s/2}\Gamma(\frac{k+s}{2})/\Gamma(\frac{k}{2})$ at $s=2j+1$.
With $\Gamma(\frac{k+1}{2}+j)
=\Gamma(\frac{k+1}{2})\,(\frac{k+1}{2})_{j}$ this gives
\[
  \E\left[\|Z\|\,e^{\delta Z_{1}}\right]
  =\sqrt2\,\frac{\Gamma\!\left(\frac{k+1}{2}\right)}
                {\Gamma\!\left(\frac{k}{2}\right)}
   \sum_{j\ge0}\frac{(\frac{k+1}{2})_{j}}{(\frac{k}{2})_{j}}\,
   \frac{(\delta^{2}/2)^{j}}{j!}
  =\sqrt2\,\frac{\Gamma\!\left(\frac{k+1}{2}\right)}
                {\Gamma\!\left(\frac{k}{2}\right)}\,
   {}_{1}F_{1}\!\left(\frac{k+1}{2};\frac{k}{2};
   \frac{\delta^{2}}{2}\right).
\]

\textbf{Step 4 (Kummer's transformation).} Kummer's identity
${}_{1}F_{1}(a;b;z)=e^{z}\,{}_{1}F_{1}(b-a;b;-z)$ with $a=-\frac12$,
$b=\frac{k}{2}$, $z=-\frac{\delta^{2}}{2}$ reads
${}_{1}F_{1}(-\frac12;\frac{k}{2};-\frac{\delta^{2}}{2})
=e^{-\delta^{2}/2}\,
{}_{1}F_{1}(\frac{k+1}{2};\frac{k}{2};\frac{\delta^{2}}{2})$.
Combining with \eqref{eq:tilt} and Step~3 gives \eqref{eq:groupclaim}.

\textbf{Gradient.} The right-hand side of \eqref{eq:group} is a function of
$z=-\|x\|^{2}/(2\mu^{2})$ alone, and ${}_{1}F_{1}(a;b;\cdot)$ is entire, so
term-by-term differentiation of its series gives
$\frac{d}{dz}{}_{1}F_{1}(a;b;z)=\frac{a}{b}\,{}_{1}F_{1}(a+1;b+1;z)$, and
the chain rule yields
\[
  \nabla_{x}\,\lambda\,\E\left[\|x+\mu Z\|_{2}\right]
  =\frac{\lambda\sqrt2}{k\,\mu}\,
   \frac{\Gamma\!\left(\frac{k+1}{2}\right)}
        {\Gamma\!\left(\frac{k}{2}\right)}\,
   {}_{1}F_{1}\!\left(\frac12;\frac{k}{2}+1;
   -\frac{\|x\|^{2}}{2\mu^{2}}\right)x .
\]
Finally, the case $k=1$ is included in the proof above, and
\eqref{eq:groupclaim} is consistent with the Lasso formula
\eqref{eq:cf:lasso}: at $k=1$ it reads
$\E|t+Z|=\sqrt{2/\pi}\;{}_{1}F_{1}(-\frac12;\frac12;-t^{2}/2)$, while the
direct computation behind \eqref{eq:cf:lasso} gives
$\E|t+Z|=t\left(2\Phi(t)-1\right)+2\phi(t)$; equating the two is the
classical closed form
${}_{1}F_{1}(-\frac12;\frac12;-t^{2}/2)
=\sqrt{\pi/2}\,\left[t\left(2\Phi(t)-1\right)+2\phi(t)\right]$
of the half-integer Kummer function.
This completes the proof.
\end{proof}

\end{document}